\numberwithin{equation}{section}
\theoremstyle{plain}
\newtheorem{theorem}{Theorem}[section]
\newtheorem{thm}{Theorem}
\newtheorem{prop}[thm]{Proposition}
\theoremstyle{definition}
\newtheorem{remark}{Remark}
\theoremstyle{remark}
\newcommand{\norm}[1]{\left\Vert#1\right\Vert}
\newcommand{\abs}[1]{\left\vert#1\right\vert}
\newcommand{\brac}[1]{\left(#1\right)}
\newcommand{\sbrac}[1]{\left[#1\right]}
\newcommand{\set}[1]{\left\{#1\right\}}
\newcommand{\eps}{\varepsilon}
\newcommand{\lam}{\lambda}
\newcommand{\sig}{\sigma}
\newcommand{\CC}{{\mathbb C}}
\newcommand{\EE}{{\mathbb E}}
\newcommand{\PP}{{\mathbb P}}
\newcommand{\RR}{{\mathbb R}}
\newcommand{\calB}{{\mathcal B}}
\newcommand{\calC}{{\mathcal C}}
\newcommand{\calE}{{\mathcal E}}
\newcommand{\calI}{{\mathcal I}}
\newcommand{\calS}{{\mathcal S}}
\newcommand{\calV}{{\mathcal V}}
\newcommand{\calX}{{\mathcal X}}
\newcommand{\bbeta}{\bm{\beta}}
\newcommand{\bR}{\bm{R}}
\newcommand{\bS}{\bm{S}}
\newcommand{\bs}{\bm{s}}
\newcommand{\bv}{\bm{v}}
\newcommand{\bz}{\bm{z}}
\newcommand{\bZ}{\bm{Z}}
\newcommand{\by}{\bm{y}}
\newcommand{\bX}{\bm{X}}
\newcommand{\bx}{\bm{x}}
\newcommand{\bY}{\bm{Y}}
\begin{document}
\begin{frontmatter}

\title{Skeleton Regression: A Graph-Based Approach to Estimation with Manifold Structure}
\runtitle{Skeleton Regression}

\begin{aug}
\author[A]{\fnms{Zeyu}~\snm{Wei}\ead[label=e1]{zwei5@uw.edu} \orcid{0000-0003-1614-4458}} \and
\author[A]{\fnms{Yen-Chi}~\snm{Chen}\ead[label=e2]{yenchic@uw.edu}\orcid{0000-0002-4485-306X}}

\address[A]{Department of Statistics,
University of Washington\printead[presep={,\ }]{e1,e2}}

\end{aug}

\begin{abstract}
We introduce a new regression framework designed to deal with large-scale, complex data that lies around a low-dimensional manifold with noises. 
Our approach first constructs a graph representation, referred to as the \textit{skeleton}, to capture the underlying geometric structure. 
We then define metrics on the skeleton graph and apply nonparametric regression techniques, along with feature transformations based on the graph, to estimate the regression function.
We also discuss the limitations of some nonparametric regressors with respect to the general metric space such as the skeleton graph.
The proposed regression framework suggests a novel way to deal with data with underlying geometric structures and provides additional advantages in handling the union of multiple manifolds, additive noises, and noisy observations.
We provide statistical guarantees for the proposed method and demonstrate its effectiveness through simulations and real data examples. 

\end{abstract}

\begin{keyword}[class=MSC]
\kwd[Primary ]{62R99}
\kwd{62G08}
\kwd{62H99}
\kwd[; secondary ]{62-04}
\end{keyword}

\begin{keyword}
\kwd{nonparametric regression}
\kwd{geometric data analysis}
\kwd{graph}
\kwd{kernel regression}
\kwd{spline}
\end{keyword}

\end{frontmatter}
\tableofcontents

\section{Introduction}


~~~~ Many data nowadays are geometrically structured that the covariates lie around a low-dimensional manifold embedded inside a large-dimensional vector space. 
Among many geometric data analysis tasks, the estimation of functions defined on manifolds has been extensively studied in the statistical literature. 
A classical approach to explicitly account for geometric structure takes two steps: map the data to the tangent plane or some embedding space, and then run regression methods with the transformed data.
This approach is pioneered by the Principle Component Regression (PCR) \cite{PCR} and the Partial Least Squares (PLS) \cite{Wold1975}.
\cite{Aswani2011} innovatively relates the regression coefficients to exterior derivatives.  
They propose to learn the manifold structure through local principal components and then constrain the regression to lie close to the manifold by solving a weighted least-squares problem with Ridge regularization.
\cite{Cheng2013} present the Manifold Adaptive Local Linear Estimator for the Regression (MALLER) that performs the local linear regression (LLR) on a tangent plane estimate. 
However, because those methods directly exploit the local manifold structures in an exact sense, they are not robust to variations in the covariates that perturb them away from the true manifold structure.

Many other manifold estimation approaches exist in the statistical literature.
\cite{Guhaniyogi2016} utilize random compression of the feature vector in combination with Gaussian process regression.
\cite{Yuchen2013} follows a divide-and-conquer approach that computes an independent kernel Ridge regression estimator for each randomly partitioned subset and then aggregates.
Other nonparametric regression approaches such as kernel machine learning \citep{Bernhard2002}, manifold regularization \citep{belkin06a}, and the spectral series approach \citep{Lee2016} also account for the manifold structure of the data.
More recently, \cite{Green2021} proposes the Principal Components Regression with Laplacian-Eigenmaps (PCR-LE) that projects data onto the eigenvectors output by Laplacian Eigenmaps and provides the rates of convergence of such nonparametric regression method over Sobolev spaces. 
However, those methods still suffer from the curse of dimensionality with large-dimensional covariates.


In addition to data with manifold-based covariates, manifold learning has been applied to other types of manifold-related data. 
\cite{Marzio2014} develop nonparametric smoothing for regression when both the predictor and the response variables are defined on a sphere.
\cite{Zhang2019} deal with the presence of grossly corrupted manifold-valued responses.
\cite{Lin2020} address data with functional predictors that reside on a finite-dimensional manifold with contamination. 
In this work, we focus on manifold-based covariates and may incorporate other types of manifold-related data in the future.

The main goal of this work is to estimate a scalar response with covariates lying around some manifold structures in a way that utilizes the geometric structure and bypasses the curse of dimensionality.
This is achieved by proposing a new framework that combines graphs and nonparametric regression techniques. 
Our framework follows the two-step idea: first, we learn a graph representation, which we call the \textit{skeleton}, of the manifold structure based on the methods from \citet{skelclus} and project the covariates onto the skeleton. Then we apply different nonparametric regression methods with the skeleton-projected covariates. We give brief descriptions of the relevant nonparametric regression methods below.

Kernel smoothing is a widely used technique that estimates the regression function as locally weighted averages with the kernel as the weighting function. Pioneered by the famous Nadaraya–Watson estimator from \cite{Nadaraya1964} and \cite{Watson1964}, this technique has been widely used and extended by recent works \citep{Fan1992, Hastie1993, Fan1996, Kpotufe2017}. 
Splines \citep{ESL, Friedman1991} are popular nonparametric regression constructs that take the derivative-based measure of smoothness into account when fitting a regression function.
Moreover, k-Nearest-Neighbors (kNN) regression \citep{Altman1992, ESL} has a simple form based on a distance metric but is powerful and widely used in many applications. These techniques are incorporated into our proposed regression framework.

In recent years, many nonparametric regression techniques have been shown to adapt to the manifold structure of the data, with convergence rates that depend only on the intrinsic dimension of the data space. Specifically, the classical kNN and kernel regressor have been shown to be manifold-adaptive with proper parameter tuning procedures \citep{Kpotufe2009,  Kpotufe2011, Kpotufe2013, Kpotufe2017}, while recent methods like the Spectral Series regression and PCR-LE also enjoy this property \citep{Green2021}.
The proposed regression framework in this work also adapts to the manifold, as the nonparametric regression models fitted on a graph are dimension-independent. This framework has several additional advantages such as the ability to account for predictors from distinct manifolds and being robust to additive noise and noisy observations.

\begin{figure}
\centering
    \begin{subfigure}[t]{0.25\textwidth}
        \centering
        \includegraphics[width=\linewidth]{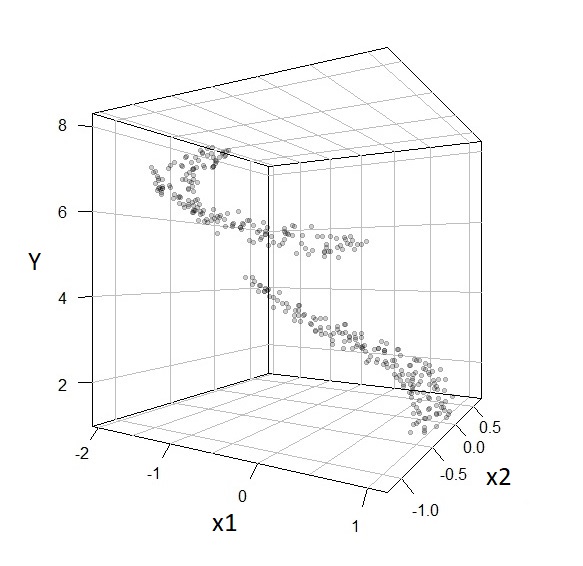} 
        \caption{Data}
    \end{subfigure}
        \begin{subfigure}[t]{0.3\textwidth}
        \centering
        \includegraphics[width=\linewidth]{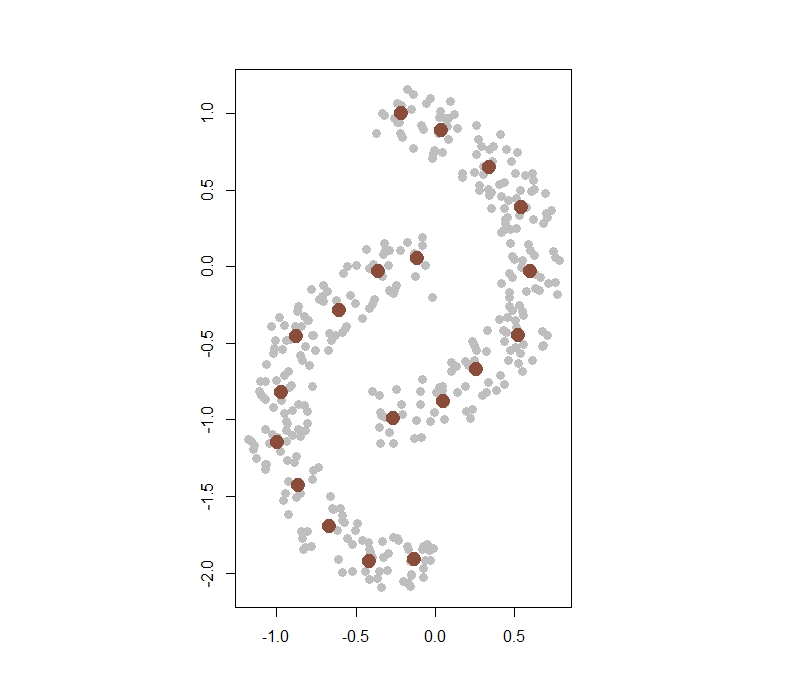}
        \caption{Knots}
    \end{subfigure}
    \begin{subfigure}[t]{0.3\textwidth}
        \centering
        \includegraphics[width=\linewidth]{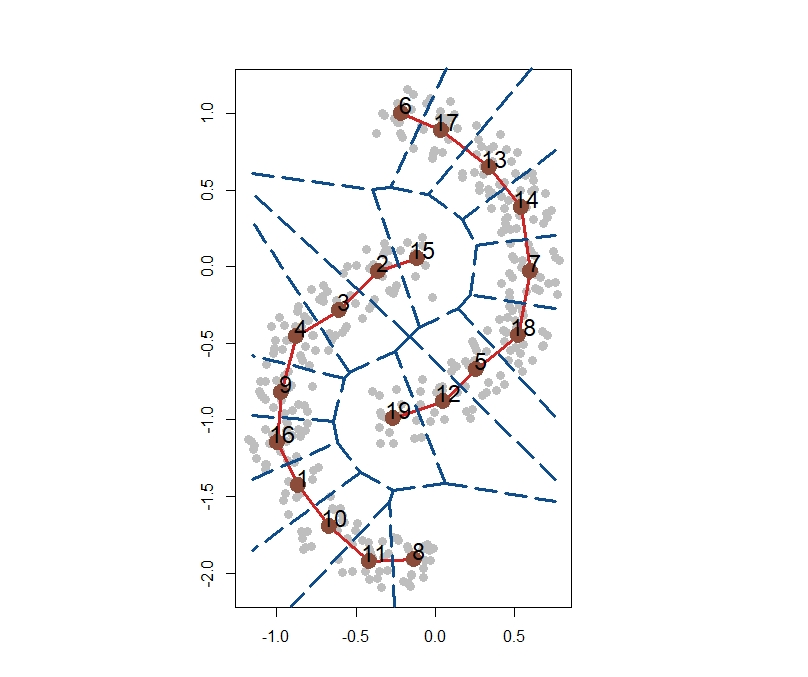}
        \caption{Skeleton}
    \end{subfigure}\\
        \begin{subfigure}[t]{0.25\textwidth}
        \centering
        \includegraphics[width=\linewidth]{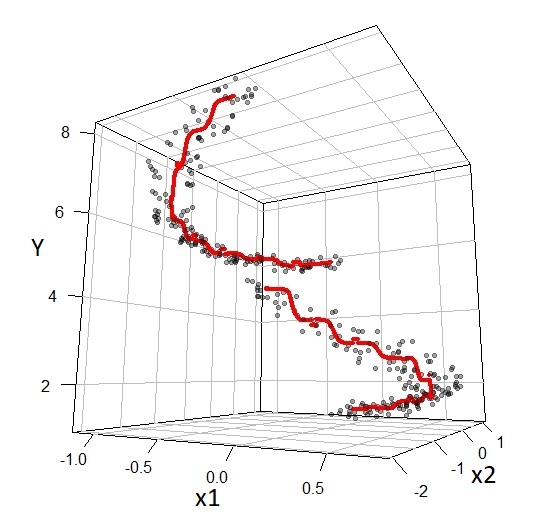} 
        \caption{S-Kernel Regression}
    \end{subfigure}
    \begin{subfigure}[t]{0.25\textwidth}
        \centering
        \includegraphics[width=\linewidth]{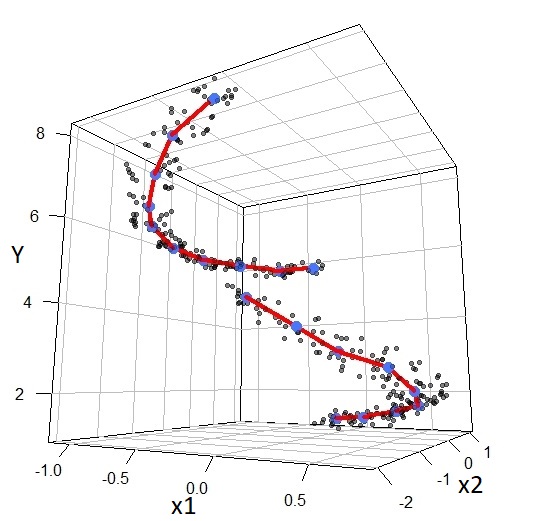} 
        \caption{Linear Spline}
    \end{subfigure}
\caption{Skeleton Regression illustrated by data with covariates having the shape of two moons in a 2D space.}
\label{fig::ex1}
\end{figure}



\emph{Outline.} 
We start by presenting the procedures of the skeleton regression framework in section \ref{sec::framework}.
In section \ref{sec::regression}, we apply nonparametric regression techniques to the constructed skeleton graph along with theoretical justifications.
In section \ref{sec::simulation}, we present some simulation results for skeleton regression and demonstrate the effectiveness of our method on real datasets in Section \ref{sec::real}. 
In section \ref{sec::conclusion}, we conclude the paper and point out some directions for future research.


\section{Skeleton Regression Framework} \label{sec::framework}

~~~~In this section, we introduce the skeleton regression framework. 
Given design vectors $\set{\bx_i}_{i=1}^n$ where $\bx_i \in \calX \subseteq \RR^d$ for each $i$ and the corresponding responses $\set{Y_i}_{i=1}^n$ in $\RR$, 
a traditional regression approach is to estimate the regression function $m(\bx) = \EE(Y|X = \bx)$.
However, the ambient dimension $d$ can be large while the covariates are distributed around some low-dimensional manifold structures. In this case, $\calX$ can be the union of several disjoint components with different manifold structures, and the regression function can have discontinuous changes from one component to another. 
To handle such geometrically structured data, we approach the regression task by first representing the sample covariate space with a graph, which we call the \textit{skeleton}, to summarize the manifold structures. We then focus on the regression function over the skeleton graph as a surrogate estimator to the true regression function (defined in Equation \ref{eq::projectedRegression}), which incorporates the covariate geometry in a dimension-independent way.

We illustrate our regression framework on the simulated Two Moon data in Figure \ref{fig::ex1}. The covariates of the Two Moon data consist of two $2$-dimensional clumps with intrinsically 1-dimensional curve structure, and the regression response increases polynomially with the angle and the radius (Figure \ref{fig::ex1} (a)). 
We construct the skeleton presentation to summarize the geometric structure (Figure \ref{fig::ex1} (b,c) ) and project the covariates onto the skeleton.
The regression function on the skeleton is estimated using kernel smoothing (Section \ref{sec::skelkernel}, illustrated in Figure \ref{fig::ex1} (d) ) and linear spline (Section \ref{sec::lspline}, illustrated in Figure \ref{fig::ex1} (e)).

The estimated regression function can be used to predict new projected covariates. We summarize the overall procedure in Algorithm \ref{alg::Skelreg}.

\begin{algorithm}
\caption{Skeleton Regression Framework}
\label{alg::Skelreg}

\begin{algorithmic}
\State \textbf{Input:} 
Observations $(\bx_1, Y_1), \dots, (\bx_n, Y_n)$.
\State 1. {\bf Skeleton Construction.} Construct a data-driven skeleton representation of the covariates preferably assisted with subject knowledge.
\State 2. {\bf Data Projection.} 
Project the covariates onto the skeleton.
\State 3. {\bf Skeleton Regression Function Estimation.}
Fitting regression function on the skeleton using nonparametric techniques such as kernel smoothing (Section \ref{sec::skelkernel}), k-Nearest Neighbor (Section \ref{sec::SkNN}), and linear spline (Section \ref{sec::lspline}). 
\State 4. {\bf Prediction.}
Project new covariates onto the skeleton and use the estimated regression function for prediction.
\end{algorithmic}

\end{algorithm}

\subsection{Skeleton Construction}
\label{sec::skeletoncons}


~~~~A skeleton is a graph constructed from the sample space representing regions of interest. 
From a statistical perspective, a region is of interest if it encompasses a sufficient measure of the probability distribution.
For given covariate space $\calX \subseteq \RR^d$,
let $\calV = \{V_j \in \RR^d : j =1 , \dots k\}$ 
be a collection of points of interest
and $E$ be a set of edges connecting points in $\calV$
such that an edge $e_{j\ell}\in E$ 
if the region between $V_j$ and $V_\ell$ is also of interest. 
The tuple $({\cal V}, E)$ together forms 
a graph that represents the focused regions in the sample space.
Notably, different from common graph-based regression approaches that take each sample covariate as a vertex, the set $\calV$ takes representative points of the covariate space and has size $k \ll n$ where $n$ is the sample size.
Moreover, the points on the edges are also part of the analysis as belonging to the regions of interest, which is different from the usual knot-edge graph.
While the graph $({\calV}, E)$ contains the region of interest, it is not easy to work with this graph directly. 
Thus, we introduce the concept of the skeleton induced by this graph.

Let $\calE = \{tV_j  + (1-t) V_\ell: t \in (0,1),  e_{j\ell} \in E \}$
be the collection of line segments induced by the edge set $E$. 
We define the skeleton of $({\cal V}, E)$
as $\calS = \calV \cup \calE $, i.e.,
$\calS$ is the points of interest and the associated line segments
representing the regions of interest. 
Clearly, $\calS$ is a collection of one-dimensional line segments
and zero-dimensional points, so it is independent of the ambient dimension $d$, but the physical location of $\calS$ is meaningful as representing the region of interest. 
The idea of skeleton regression is to build a regression model
on the skeleton $\calS$.


\subsubsection{A data-driven approach to construct skeleton}

~~~~The skeleton should ideally be constructed based on the analyst's judgment or prior knowledge of the focus regions. 
However, this information may be unavailable and we have to construct a skeleton from the data.
In this section, we give a brief description of a data-driven approach proposed in \cite{skelclus} that constructs the skeleton to represent high-density regions. 
The method constructs knots as the centers from the $k$-means clustering with a large number of centers \footnote{By default $[\sqrt{n}]$. We explore the effect of choosing different numbers of knots with empirical results.}.
The edges are connected by examining the sample 2-Nearest-Neighbor (2-NN) region of a pair of knots $(V_j,V_\ell)$  (see Figure \ref{fig::2nn}) defined as
\begin{align}
    B_{j\ell} = \{ X_m, m=1,\dots,n :\norm{X_m - V_i} > \max\{ \norm{X_m-V_j}, \norm{x- V_\ell} \}, \forall i \neq j, \ell \},
\label{eq::2NNregion}
\end{align}
where $||.||$ denotes the Euclidean norm, and an edge between $V_j$ and $V_\ell$ is added if $B_{j\ell}$ is non-empty. 
The method can further prune edges or segment the skeleton by using hierarchical clustering with respect to the Voronoi Density weights defined as
$S_{j\ell}^{VD} = \frac{\frac{1}{n}\abs{B_{j\ell}}}{\norm{V_j - V_\ell}}.$
We provide more details about this approach in Appendix \ref{ref::skelconsVoron}.


\begin{figure}
\centering
\includegraphics[height=5cm]{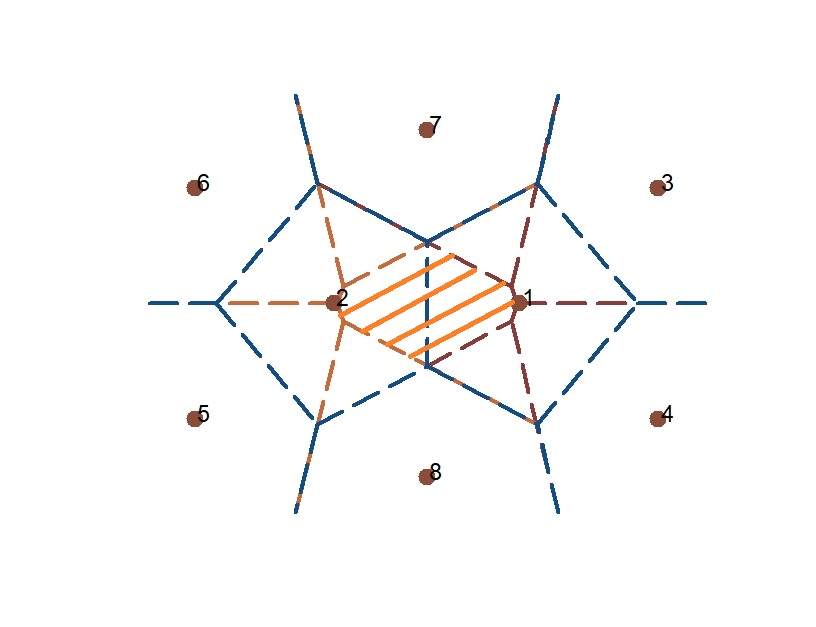}
\caption{Orange shaded area illustrates the 2-NN region between knots $1$ and $2$.}
\label{fig::2nn}
\end{figure}

\begin{remark}
The idea of using the $k$-means algorithm to divide data into cells
and perform analysis based on the cells has been proposed
in the literature for fast computation.
\cite{InvertedFiles}, when carrying out an approximate nearest neighbor search, proposed to divide the data into Voronoi cells by $k$-means and do a neighbor search only in the same or some nearby cells. \cite{InvertedMultiIndex} adopted the Product Quantization technique to construct cell centers for high-dimensional data as the Cartesian product of centers from sub-dimensions. 
\end{remark}



\subsection{Skeleton-Based Distance}

~~~~One of the advantages of the physically located skeleton is that it allows for a natural definition of the skeleton-based distance function $d_\calS(.,.): \calS \times \calS \to \RR^{+}\cup \{\infty\}$. 
Let $\bs_j, \bs_\ell \in \calS$ be two arbitrary points on the skeleton and note that, different from the usual geodesic distance on a graph, in our framework $\bs_j, \bs_\ell$ can be on the edges. We measure the skeleton-based distance between two skeleton points as the graph path length as defined below:

\begin{itemize}
    \item If $\bs_j, \bs_\ell$ are disconnected that they belong to two disjoint components of $\calS$, we define
    \begin{align}
    d(\bs_j, \bs_\ell) = \infty
    \label{eq::graphdist}
    \end{align} 
    \item If $\bs_j$ and $\bs_\ell$ are on the same edge, we define the skeleton distance as their Euclidean distance that
\begin{align}
    d_\calS(\bs_j, \bs_\ell) = ||\bs_j - \bs_\ell|| 
    \label{eq::graphdist0}
\end{align}
    \item For $\bs_j$ and $\bs_\ell$ on two different edges that share a knot $V_0$, the skeleton distance is defined as
    \begin{align}
    d_\calS(\bs_j, \bs_\ell) = ||\bs_j - V_{0}|| + ||\bs_\ell - V_{0}||
    \label{eq::graphdist1}
\end{align}
    \item Otherwise, let knots $V_{i(1)}, \dots, V_{i(m)}$ be the vertices on a path connecting $\bs_j, \bs_\ell$, where $V_{i(1)}$ is one of the two closest knots of $\bs_j$ and $V_{i(m)}$ is the other closest knots of $\bs_\ell$. We add the edge lengths of the in-between knots to the distance that 
\begin{align}
\begin{split}
    d_\calS(\bs_j, \bs_\ell) &= ||\bs_j - V_{i(1)}|| + ||\bs_\ell - V_{i(m)}|| + \sum_{p=1}^{m-1}\norm{V_{i(p)}, V_{ i(p+1)}}
\end{split}
\label{eq::graphdist2}
\end{align}
and we use the shortest path length if there are multiple paths connecting $\bs_j$ and $ \bs_\ell$.
\end{itemize}

An example illustrating the skeleton-based distance is shown in Figure \ref{fig::skeldist}.
Like the shortest path (geodesic) distance that makes a usual knot-edge graph into a metric space, the skeleton-based distance is also a metric on the skeleton graph. 
In the following sections, we will discuss methods to perform regression on space only with the defined metric.

\begin{figure}[ht]
\centering
\includegraphics[height=3cm]{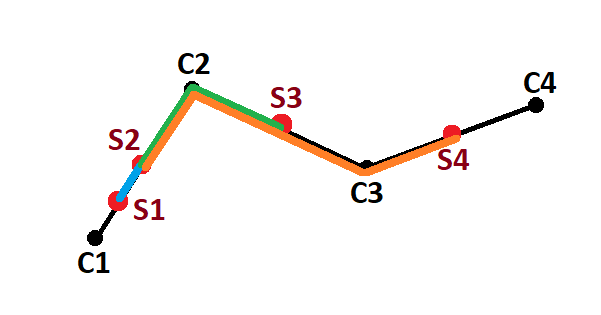}
\caption{Illustration of skeleton-based distance. Let $C_1, C_2, C_3, C_4$ be the knots, and let $S_2,S_3,S_4$ be the mid-point on the edges $E_{12},E_{23},E_{34}$ respectively. Let $S_1$ be the midpoint between $C_1$ and $S_2$ on the edge. Let $d_{ij} = \norm{C_i - C_j}$ denotes the length of the edge $E_{ij}$. $d_\calS(S_1,S_2) = \frac{1}{4} d_{12}$ illustrated by the blue path.  $d_\calS(S_2,S_3) = \frac{1}{2} d_{12} + \frac{1}{2} d_{23}$ illustrated by the green path. $d_\calS(S_2,S_4) = \frac{1}{2} d_{12} + d_{23} + \frac{1}{2} d_{34}$ illustrated by the orange path.}
\label{fig::skeldist}
\end{figure}

%

\begin{remark}
We may view the skeleton-based distance as an approximation of the geodesic distance on the underlying data manifold. Moreover, to make a stronger connection to the manifold structure, it is possible to define edge lengths through local manifold learning techniques that have better approximations to the local manifold structure. However, using more complex local edge weights can pose additional challenges for the data projection step described in the next section and we leave this as a future direction.
\end{remark}

\subsection{Data Projection}
\label{sec:dataProjection}

~~~~For the next step, we project the sample covariates onto the constructed skeleton.
For given covariate $\bx$, let $I_1(\bx), I_2(\bx) \in \{1,\dots,k\}$ be the index of its closest and second closest knots in terms of the Euclidean metric. 
We define the projection function $\Pi(.): \calX \to \calS$ for $\bx \in \calS$ as (illustrated in Figure \ref{fig::skelproject}):
\begin{itemize}
    \item[Case I: ] If $V_{I_1(\bx)}$ and $V_{I_2(\bx)}$ are not connected, $\bx$ is projected onto the closest knot that
    $\Pi(\bx) = V_{I_1(\bx)}$ 
    \item[Case II: ] If $V_{I_1(\bx)}$ and $V_{I_2(\bx)}$ are connected, $\bx$ is projected with the Euclidean metric onto the line passing through $V_{I_1(\bx)}$ and $V_{I_2(\bx)}$ that, let $t = \frac{\left(\bx - V_{I_1(\bx)}\right)^T\cdot \left(V_{I_2(\bx)}-V_{I_1(\bx)}\right)}{\norm{V_{I_2(\bx)}-V_{I_1(\bx)}}^2}$ be the projection proportion,
    \begin{align}
    \Pi(\bx)  = V_{I_1(\bx)} + \left(V_{I_2(\bx)}-V_{I_1(\bx)}\right) \cdot 
    \begin{cases}
    0, \text{ if } t <0\\
    1, \text{ if } t>1\\
    t, \text{ otherwise}
    \end{cases}
    \end{align}
    where we constrain the covariates to be projected onto the closest edge.
\end{itemize}

\begin{figure}[ht]
\centering
\includegraphics[height=3cm]{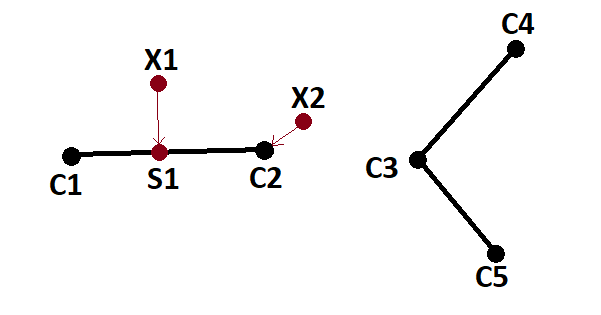}
\caption{Illustration of projection to the skeleton. The skeleton structure is given by the black dots and lines. Data point $X_1$ is projected to $S_1$ on the edge between $C_1$ and $C_2$. Data point $X_2$ is projected to knot $C_2$.}
\label{fig::skelproject}
\end{figure}

Note that with the projection defined above, a non-trivial volume of points can be projected onto the knots of the skeleton graph as belonging to Case I or due to the truncation in Case II. 
This adds complexities to the theoretical analysis of the proposed regression framework and leads to our separate analysis of the different domains of the graph in Section \ref{sec:kernelConsistent}.

\section{Skeleton Nonparametric Regression}
\label{sec::regression}

~~~~Covariates are mapped onto the skeleton after the data projection step and are equipped with the skeleton-based distances. 
In this section, we apply nonparametric
regression techniques to the skeleton graph with projected data points. 
We study three feasible nonparametric approaches: the skeleton-based kernel regression (S-Kernel), the skeleton-based k-nearest-neighbor method (S-kNN), and the linear spline on the skeleton (S-Lspline).
At the end of this section, we discuss the challenges of applying some other nonparametric regression methods in the setting of skeleton graphs.

\subsection{Skeleton Kernel Regression}
\label{sec::skelkernel}
~~~~
We start by adapting kernel smoothing to the skeleton graph. 
Let $\bs_1,\cdots, \bs_n$ be the projections on the skeleton
from $\bx_1,\cdots, \bx_n$, i.e., $\bs_i = \Pi(\bx_i)$. 
With the skeleton-based distances, the skeleton kernel regression makes a prediction at the location
$\bs \in \calS$ as
\begin{align}
    \hat{m}(\bs) = \frac{\sum_{i=1}^N K(d_\calS(\bs_i, \bs)/h) Y_i}{\sum_{j=1}^N K(d_\calS(\bs_j, \bs)/h)},
\end{align}
where $K(\cdot) \geq 0$ is a smoothing kernel such as the Gaussian kernel and $h>0$ is the smoothing bandwidth that controls the amount of smoothing.
In practice, we choose $h$ by cross-validation.
Essentially, the estimator $\hat{m}(\bs)$ is the kernel regression applied to a general metric space (skeleton) rather than the usual Euclidean space.
Notably, the kernel function calculation only depends on the skeleton distances and hence is independent of neither the ambient dimension of the original input nor the intrinsic dimension of the manifold structure.

It should be noted that $\hat{m}(\bs)$ only makes predictions on the skeleton $ \calS$. 
If we are interested in predicting the outcome  at any arbitrary point $\bx\in\calX$, the prediction will be based on the projected point, i.e.,
$
\hat m(\bx) = \hat m\left(\Pi(\bx)\right), 
$
where $\Pi(\bx) \in \calS.$
Because of the above projection property, one can think of the skeleton kernel regression as an estimator
to the following skeleton-projected regression function
\begin{align} \label{eq::projectedRegression}
    m_\calS(\bs) = \EE(\bY|\Pi(\bX) = \bs), \bs \in \calS.
\end{align}
We study the convergence of $\hat{m}(\bs)$ to $m_\calS(\bs)$ in what follows.

\begin{remark}
    Admittedly, the projection of the covariates onto the skeleton as described in Section \ref{sec:dataProjection} introduces the projection error between the true regression function $m(\bx) = \EE(Y|X = \bx)$ and the skeleton-projected regression function. 
    Bounding this projection error involves not only a precise characterization of the underlying manifolds and the data distribution around them but also the physical locations of the skeleton relative to the local manifold structure. 
    Due to such complexity, a theoretical result bounding the projection error under some general conditions requires careful formulation (despite that results are straightforward for particular cases such as having covariates exactly on a 1D circular segment).
    We leave the in-depth analysis of the projection as future work and focus on generalizing the nonparametric regression methods to the skeleton graph in this work.
\end{remark}

\subsubsection{Consistency of S-Kernel Regression}
\label{sec:kernelConsistent}

~~~~
Our analysis assumes that the skeleton is fixed and given and focuses on the estimation of the regression function.
To evaluate the estimation error, we must first impose some concepts of distribution on the skeleton. 
However, due to the covariate projection procedure, the probability measures on the knots and edges are different, and we analyze them separately. On an edge, the domain of the projected regression function varies in one dimension, resulting in a standard univariate problem for estimation.
For the case of knots, a nontrivial region of the covariate space can be projected onto a knot, leading to a nontrivial probability mass at the knot. 


For simplicity, we write
$ K_h(\bs_j, \bs_\ell) \equiv  K(d_\calS(\bs_j, \bs_\ell)/h)$
for $\bs_j, \bs_\ell \in \calS$.
Let $\calB(\bs, h) = \set{\bs' \in \calS: d_\calS(\bs', \bs) < h} $ be the ball on skeleton centered at the point $\bs \in \calS$ with radius $h$.
We can decompose the kernel regression estimator into edge parts and knot parts as
\begin{align} \label{eq::kerneldecomp}
\begin{split}
    &\hat{m}(\bs) = \frac{\sum_{j=1}^n Y_j  K_h(\bs_j, \bs) }{\sum_{j=1}^n K_h(\bs_j, \bs)} \\
    &= \frac{\frac{1}{n}\sum_{j=1}^n Y_j K_h(\bs_j, \bs) I(\bs_j \in \calE)  + \frac{1}{n}\sum_{j=1}^n Y_j K_h(\bs_j, \bs) I(\bs_j \in \calV )}{\frac{1}{n}\sum_{j=1}^n K_h(\bs_j, \bs)I(\bs_j \in \calE ) + \frac{1}{n}\sum_{j=1}^n K_h(\bs_j, \bs) I(\bs_j \in \calV ) } \\
    &= \frac{\frac{1}{n}\sum_{j=1}^n Y_j K_{h}(\bs_j, \bs) I(\bs_j \in \calE \cap  \calB(\bs, h))  + \frac{1}{n}\sum_{j=1}^n Y_j  I(\bs_j =\bs)}{\frac{1}{n}\sum_{j=1}^n K_{h}(\bs_j, \bs)I(\bs_j \in \calE \cap  \calB(\bs, h)) + \frac{1}{n}\sum_{j=1}^n  I(\bs_j =\bs) }
\end{split}
\end{align}
In the last line, we emphasize that the knots and edges in the kernel estimator have a meaningful contribution only within the support of the kernel function. We inspect the different domain cases separately in the following sections.

For the model and assumptions, we let $Y_j =  m_\calS(\bS_j)+U_j, \bS_j \in \calS$, and $\EE(U_j|\bS_j) = 0$ almost surely. 
Let $\sig^2(\bs) = \EE(U_j^2 | \bS_j = \bs)$. 
Let the density on the skeleton edge be defined as the 1-Hausdorff density that
$g(\bs) = \lim_{r\downarrow 0 }\frac{P(\bS\in \calB(\bs, r))}{2r}$.
Note that $g(\bs) = \infty$ if $\bs$ is at a knot point
that has a probability mass.
We consider the following assumptions: 
\begin{itemize}
    \item[\textbf{A1}] $\sig^{2}(\bs)$ is continuous and  uniformly bounded.
    \item[\textbf{A2}] The skeleton edge density function $g(\bs) > 0$ and are bounded and Lipschitz continuous for $\bs \in \calE$.
    \item[\textbf{A3}] $m_\calS(\bs) g(\bs)$ is bounded and Lipschitz continuous for $\bs \in \calE$.
    \item[\textbf{K}] The kernel function has compact support and satisfies $\int K(x) dx = 1$, $\int K^2(x) dx < \infty$, $\int xK(x) d x = 0$,  and $\int x^2 K(x) dx < \infty$ 
\end{itemize}
Conditions A1 and K are common when the domain is the common Euclidean space can be viewed as a generalization of the common assumptions from classical kernel regression analysis \cite{Nadaraya1964, wasserman2006all}.
A2 and A3 are mild conditions that can be sufficiently implied by the boundedness and Lipschitz continuity of the density and regression function in the ambient space along with non-overlapping knots that the area of the orthogonal complements have Lipschitz changes.
We do not assume the second-order smoothness commonly required for kernel regression because requiring higher-order derivative smoothness would necessitate specifying directions on the graph, which may present difficulties in model formulation. 
We include further discussions on formulating the derivatives on the skeleton in Section \ref{sec::otherParametric}.

\subsubsection{Convergence of the Edge Point}

~~~~We first look at an edge point $\bs \in E_{j\ell} \in \calE$. 
In this case, as $n\to \infty, h\to 0$, for sufficiently large $n$, we have $\calB(\bs, h) \subset E_{j\ell}$, and the skeleton distance is the $1$-dimensional Euclidean distance for any point within the support. Therefore, we have a convergence rate similar to the $1$-dimensional kernel regression estimator \citep{Bierens1983, wasserman2006all, Chen2017}.

\begin{thm}[Consistency on Edge Points]
Let $\bs \in \calE$ be a point on the edge. Assume conditions (A1-3) hold for all points in $\calE \cap  \calB(\bs, h)$ and (K) for the kernel function. When $n \to \infty$, $h\rightarrow0$, $nh\rightarrow\infty$, we have
\begin{align}
    \abs{\hat{m}_n(\bs) - m_\calS(\bs)} = O(h) + O_p\bigg(\sqrt{\frac{1}{n h}}\bigg) 
\end{align}
\label{thm::edge}
\end{thm}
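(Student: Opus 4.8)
The plan is to reduce the problem to classical one-dimensional kernel regression by exploiting the fact that, for a fixed edge point $\bs$, the ball $\calB(\bs,h)$ eventually sits entirely inside the open edge $E_{j\ell}$, so the skeleton distance restricted to this ball is just the ordinary Euclidean distance along a line. First I would fix $\bs$ in the relative interior of $E_{j\ell}$ and observe that since $\bs$ is at positive skeleton-distance from both endpoints $V_j,V_\ell$, for all $h$ smaller than that distance we have $\calB(\bs,h)\subset E_{j\ell}$; consequently every kernel weight $K_h(\bs_i,\bs)$ that is nonzero comes from an $\bs_i$ on the edge, and the knot terms $I(\bs_i=\bs)$ in the decomposition \eqref{eq::kerneldecomp} vanish. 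This collapses $\hat m_n(\bs)$ to exactly the Nadaraya–Watson estimator built from the one-dimensional sample $\{(t_i,Y_i): \bs_i\in E_{j\ell}\cap\calB(\bs,h)\}$, where $t_i$ is the arclength coordinate of $\bs_i$ along the edge.

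Next I would carry out the standard bias–variance analysis for this localized one-dimensional estimator. Write $\hat m_n(\bs) = \hat a_n(\bs)/\hat b_n(\bs)$ with $\hat b_n(\bs) = \frac1n\sum_i K_h(\bs_i,\bs)I(\bs_i\in\calE)$ and $\hat a_n(\bs)=\frac1n\sum_i Y_i K_h(\bs_i,\bs)I(\bs_i\in\calE)$. Using the 1-Hausdorff density $g$ on the edge and a change of variables $u=(t-t_\bs)/h$, a Taylor expansion together with assumption (A2) gives $\EE[\hat b_n(\bs)] = g(\bs)\int K(u)\,du + O(h) = g(\bs)+O(h)$, and (A3) gives $\EE[\hat a_n(\bs)] = m_\calS(\bs)g(\bs) + O(h)$, where the first-order term drops out because $\int uK(u)\,du=0$ by (K) and because $g$ and $m_\calS g$ are Lipschitz (not $C^2$), so only an $O(h)$ bias is claimed. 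For the variance, the second-moment conditions $\int K^2<\infty$ in (K), the boundedness of $\sigma^2$ in (A1), and $g(\bs)>0$ yield $\mathrm{Var}(\hat b_n(\bs)) = O\!\big(\frac1{nh}\big)$ and $\mathrm{Var}(\hat a_n(\bs)) = O\!\big(\frac1{nh}\big)$, so by Chebyshev $\hat b_n(\bs) = g(\bs)+O_p(h)+O_p\big(\tfrac1{\sqrt{nh}}\big)$ and similarly for $\hat a_n(\bs)$. Since $nh\to\infty$ and $h\to0$, the denominator is bounded away from $0$ with probability tending to one, and a ratio expansion $\hat a_n/\hat b_n - m_\calS(\bs) = (\hat a_n - m_\calS(\bs)\hat b_n)/\hat b_n$ combines the two pieces into $\abs{\hat m_n(\bs)-m_\calS(\bs)} = O(h)+O_p\big(\sqrt{1/(nh)}\big)$, which is the claim.

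The main obstacle — or rather the one point that needs genuine care rather than routine bookkeeping — is the handling of the boundary effect from the truncation $\calB(\bs,h)\subset E_{j\ell}$ combined with the asymptotic regime. For a truly fixed interior $\bs$ this is a non-issue for all $n$ large, but one must be explicit that the statement is pointwise in $\bs$ (so the threshold on $h$ may depend on $\bs$), and that "$\calB(\bs,h)\subset E_{j\ell}$" uses the skeleton metric's property \eqref{eq::graphdist0} that distances within an edge are Euclidean — no path through a knot can be shorter, which follows since leaving and re-entering the edge only adds length. A secondary subtlety is that the conditioning is on the projected variable $\bS_j=\Pi(\bX_j)$, so the "observations" $(\bS_j,Y_j)$ are i.i.d.\ with $\EE(U_j\mid\bS_j)=0$ by construction of $m_\calS$; once this is noted, the error decomposition into a (conditionally) mean-zero stochastic term and a deterministic bias term goes through verbatim as in the Euclidean case, and one may simply cite \citep{Bierens1983, wasserman2006all, Chen2017} for the remaining details.
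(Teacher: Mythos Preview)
Your proposal is correct and follows essentially the same route as the paper: both reduce to the one-dimensional Nadaraya--Watson estimator by noting that $\calB(\bs,h)\subset E_{j\ell}$ for small $h$, then carry out the standard bias--variance bookkeeping under Lipschitz assumptions to obtain $O(h)+O_p\big(\sqrt{1/(nh)}\big)$. One small clarification: the condition $\int uK(u)\,du=0$ plays no role in securing the $O(h)$ bias here---under Lipschitz continuity alone the bound $|g(\bs-hz)-g(\bs)|\leq C|hz|$ gives $O(h)$ directly, and the symmetric-kernel moment would only matter if you were pushing for $O(h^2)$ under $C^2$ smoothness, which (as you correctly note) is not assumed.
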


We leave the proof in Appendix \ref{sec::contproof}.
Theorem \ref{thm::edge} gives the convergence rate for a point on the edge of the constructed skeleton. 
The convergence rate at the bias is $O(h)$, which is
the usual rate when we only have Lipschitz smoothness (A2) of $m_{\calS}$.
One may be wondering if we can obtain a faster rate such as $O(h^2)$
if we assume higher-order smoothness of $m_{\calS}$. 
While it is possible to obtain a faster rate if we have a higher-order smoothness, we note that this assumption will not be reasonable on the skeleton
because $m_{\bS}(\bs) = \EE(Y|\Pi(X) = \bs)$ is defined via projection.
The region being projected onto $\bs$ is continuously changing
and may not be differentiable due to the boundary of Voronoi cells.
Therefore, the Lipschitz continuity (A2) is reasonable while
higher-order smoothness is not.

\subsubsection{Convergence of the Knots with Nonzero Mass}
~~~~We then look at the knots with nonzero probability mass that $\bs \in \calV$ with $p(\bs) > 0$, where we use $p(\bs)$ to denote the probability mass on a knot. 
This case mainly occurs for knots with degree $1$ on the skeleton graph, when a non-trivial region of points is projected onto such knots. For example, refer to knot C2 in Figure \ref{fig::skelproject}. 

\begin{thm}[Consistency on Knots with Nonzero Mass]
Let $\bs \in \calV$ be a point at a knot and the probability mass at $\bs$ be $P(\Pi_\calS(X)=\bs) \equiv p(\bs) > 0$ and assume $\sig^2(\bs)$ is bounded. Also, assume conditions (A1-3) hold for all points in $\calE \cap  \calB(\bs, h)$ and (K) for the kernel function. When $n\to \infty$, $h\rightarrow0$, we have
\begin{align}
    \abs{\hat{m}(\bs) - m_\calS(\bs)} = O(h)+ O_p\left(\sqrt{\frac{1}{n }}\right)
\end{align}
\label{thm::knotconsistency}
\end{thm}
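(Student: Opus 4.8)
The plan is to reuse the knot/edge decomposition \eqref{eq::kerneldecomp}, writing $\hat m(\bs)=\widehat N/\widehat D$ with $\widehat D=D_V+D_E$ and $\widehat N=N_V+N_E$, where the ``$V$'' pieces are the point‑mass sums $D_V=\frac1n\sum_j I(\bs_j=\bs)$, $N_V=\frac1n\sum_j Y_j I(\bs_j=\bs)$, and the ``$E$'' pieces are the kernel‑weighted edge sums $D_E=\frac1n\sum_j K_h(\bs_j,\bs)I(\bs_j\in\calE\cap\calB(\bs,h))$, $N_E=\frac1n\sum_j Y_j K_h(\bs_j,\bs)I(\bs_j\in\calE\cap\calB(\bs,h))$. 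The key structural fact, absent in Theorem~\ref{thm::edge}, is that $D_V$ converges to the strictly positive constant $p(\bs)$, so the denominator is bounded away from zero \emph{without} any contribution of order $h$ from the kernel; this is ultimately what upgrades the nonparametric $(nh)^{-1/2}$ rate to the parametric $n^{-1/2}$ rate.

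First I would control the denominator. Since $nD_V\sim\mathrm{Binomial}(n,p(\bs))$, the law of large numbers with a Chebyshev bound gives $D_V=p(\bs)+O_p(n^{-1/2})$. For $D_E$, taking $h$ small enough that $\calB(\bs,h)$ reaches no other knot, $\calE\cap\calB(\bs,h)$ is a finite union of sub‑segments of total length $O(h)$ incident to $\bs$, so by A2 $P(\bs_1\in\calE\cap\calB(\bs,h))=O(h)$; combined with $0\le K_h\le\norm{K}_\infty$ this yields $\EE[D_E]=O(h)$ and $\mathrm{Var}(D_E)=O(h/n)$, hence $D_E=O(h)+O_p(\sqrt{h/n})=o_p(1)$. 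Therefore $\widehat D=p(\bs)+o_p(1)$, and in particular $\widehat D^{-1}=O_p(1)$.

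Next I would bound the centered numerator, using $\hat m(\bs)-m_\calS(\bs)=(\widehat N-m_\calS(\bs)\widehat D)/\widehat D$. On the point mass, $\bs_j=\bs$ forces $m_\calS(\bs_j)=m_\calS(\bs)$, so $N_V-m_\calS(\bs)D_V=\frac1n\sum_j U_j I(\bs_j=\bs)$, a mean‑zero average with variance $\sig^2(\bs)p(\bs)/n$, hence $O_p(n^{-1/2})$ by A1. On the edges, $N_E-m_\calS(\bs)D_E=\frac1n\sum_j\bigl(m_\calS(\bs_j)-m_\calS(\bs)\bigr)K_h(\bs_j,\bs)I(\bs_j\in\calE\cap\calB(\bs,h))+\frac1n\sum_j U_j K_h(\bs_j,\bs)I(\bs_j\in\calE\cap\calB(\bs,h))$; the first sum has expectation $O(h)$ (a domain of length $O(h)$ with integrand bounded, via boundedness of $m_\calS$ near $\bs$ supplied by A2--A3) and variance $O(h/n)$, hence equals $O(h)+O_p(n^{-1/2})$, while the second sum is mean zero with variance $O(h/n)$, hence $O_p(n^{-1/2})$ once $h\le1$. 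Adding the two contributions, $\widehat N-m_\calS(\bs)\widehat D=O(h)+O_p(n^{-1/2})$, and dividing by $\widehat D=p(\bs)+o_p(1)$ with $p(\bs)>0$ gives the stated rate.

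The step I expect to require the most care is the $O(h)$ bias coming from the edge term $\frac1n\sum_j\bigl(m_\calS(\bs_j)-m_\calS(\bs)\bigr)K_h(\bs_j,\bs)I(\bs_j\in\calE\cap\calB(\bs,h))$. Because $m_\calS=\EE(Y\mid\Pi(X)=\cdot)$ is defined through the projection, it is generically \emph{discontinuous} at a knot: the thin slice of $\calX$ projecting to an edge point at distance $\eps$ from $\bs$ need not resemble the bulk region projecting to $\bs$ itself, so $m_\calS(\bs_j)-m_\calS(\bs)$ is $\Theta(1)$, not $o(1)$, for $\bs_j$ on an incident edge. It is therefore only the $O(h)$ probability of lying within skeleton‑distance $h$ of $\bs$ along an edge — and not any smoothness of $m_\calS$ at $\bs$ — that makes this term $O(h)$; this is also why, exactly as in the remark after Theorem~\ref{thm::edge}, no higher‑order bias term can be extracted here.
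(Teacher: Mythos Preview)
Your proposal is correct and follows essentially the same approach as the paper: the same knot/edge decomposition of numerator and denominator, the same key observation that the point-mass term $D_V\to p(\bs)>0$ keeps the denominator bounded away from zero (yielding the parametric rate), and the same $O(h)$ handling of the edge contributions via the results underlying Proposition~\ref{prop::zeroknot}. The only cosmetic difference is that you center the numerator and write $\hat m(\bs)-m_\calS(\bs)=(\widehat N-m_\calS(\bs)\widehat D)/\widehat D$, whereas the paper bounds $\eps_{1,n},\eps_{2,n},\nu_{1,n},\nu_{2,n}$ separately and then Taylor-expands the ratio; both routes are standard and equivalent here.
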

Theorem \ref{thm::knotconsistency} gives the convergence result for a knot point with a nontrivial mass of the skeleton. 
The bias term $O(h)$ comes from the influence of nearby edge points.
For the stochastic variation part, instead of having the  $O_p\left(\sqrt{\frac{1}{n h}}\right)$ rate as the usual kernel regression and in Theorem \ref{thm::edge}, we have $O_p\left(\sqrt{\frac{1}{n}}\right)$ rate which comes from averaging the observations projected onto the knots. The proof of Theorem \ref{thm::knotconsistency} is provided in Appendix \ref{sec::knotproof}.

\subsubsection{Convergence of the Knots with Zero Mass}
~~~~We now look at a knot point $\bs \in \calV$ with no probability mass that $p(\bs) = 0$. 
This can be the case for a knot with a degree larger than $1$ like knot C3 in Figure \ref{fig::skelproject}. Since we define edge sets excluding the knots, there will be no density as well as no probability mass at $\bs$. Note that, with some reformulation, degree $2$ knots can be parametrized together with the two connected edges and, under the appropriate assumptions, Theorem \ref{thm::edge} applies, giving consistency estimation with $O(h) + O_p\left(\sqrt{\frac{1}{n h}}\right) $ rate. However, density cannot be extended directly to knots with a degree larger than $2$, but the kernel estimator still converges to some limits as presented in the Proposition below.
\begin{prop}
\label{prop::zeroknot}
Let $\bs \in \calV$ be a point at a knot
such that the probability mass at $\bs$ be $P(\Pi_\calS(X)=\bs) \equiv p(\bs) = 0$. 
Assume conditions (A1-3) hold for all points in $\calE \cap  \calB(\bs, h)$ and (K) for the kernel function.
Let $\calI$ collect the indexes of edges with one knot being $\bs$.
For $\ell \in \calI$ and edge $E_\ell$ connects $\bs$ and $V_\ell$,
let $g_\ell(t) = g((1-t)\bs + t V_\ell)$ and $g_\ell(0) = \lim_{x\downarrow 0} g_{\ell}(x)$.
Let  $m_\ell(t) = m_\calS( (1-t)\bs + t V_\ell)$ and $m_\ell(0) = \lim_{t \downarrow 0} m_\ell(t)  $. 
When $n\to \infty$, $h\rightarrow0$, and $nh\rightarrow\infty$, we have
\begin{align}
    \hat{m}(\bs) 
    &= \frac{ \sum_{\ell \in \calI}   m_\ell(0) g_\ell(0)  }{ \sum_{\ell \in \calI}   g_\ell(0) } + O(h) + O_p\left(\sqrt{\frac{1}{n h}}\right).
\end{align}
\end{prop}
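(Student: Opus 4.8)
The plan is to reduce the statement to the one–dimensional edge analysis of Theorem~\ref{thm::edge}, run separately along each edge incident to $\bs$, and then glue the pieces together. Since $p(\bs)=0$, the event $\{\bS_j=\bs\}=\{\Pi(\bX_j)=\bs\}$ has probability zero, so almost surely the knot–mass terms $\frac1n\sum_j Y_j I(\bs_j=\bs)$ and $\frac1n\sum_j I(\bs_j=\bs)$ in the decomposition \eqref{eq::kerneldecomp} vanish and
\begin{align*}
\hat m(\bs)=\frac{\frac1n\sum_{j=1}^n Y_j K_h(\bs_j,\bs)\,I\!\left(\bs_j\in\calE\cap\calB(\bs,h)\right)}{\frac1n\sum_{j=1}^n K_h(\bs_j,\bs)\,I\!\left(\bs_j\in\calE\cap\calB(\bs,h)\right)}.
\end{align*}
Because $h\to0$, for all $n$ large enough $\calB(\bs,h)$ is contained in the union of the open edges $E_\ell$, $\ell\in\calI$, incident to $\bs$ (it suffices that $h$ be below half the shortest incident edge length and below the distance from $\bs$ to every non-incident part of $\calS$), and on $E_\ell\cap\calB(\bs,h)$ the skeleton distance $d_\calS(\cdot,\bs)$ is exactly arclength from $\bs$. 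Hence the indicators $I(\bs_j\in E_\ell\cap\calB(\bs,h))$, $\ell\in\calI$, partition the contributing sample, and both numerator and denominator split as $\sum_{\ell\in\calI}N_\ell$ and $\sum_{\ell\in\calI}D_\ell$, where $N_\ell$ and $D_\ell$ are the usual one–dimensional kernel-weighted sums along the half-edge $E_\ell$ near $\bs$.

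Next I would compute the first two moments of $N_\ell$ and $D_\ell$. Parametrizing $E_\ell$ near $\bs$ by arclength $u\in(0,h)$ and writing the restricted law of $\bS$ through the $1$-Hausdorff density $g$, the substitution $u=hv$ gives $\EE[D_\ell]=h\,c_K\,g_\ell(0)+O(h^2)$ and $\EE[N_\ell]=h\,c_K\,m_\ell(0)g_\ell(0)+O(h^2)$, with $c_K=\int_0^\infty K(v)\,dv>0$; here the one-sided limits $g_\ell(0)$ and $m_\ell(0)$ exist because Lipschitz functions satisfy the Cauchy criterion at the endpoint, and the $O(h^2)$ remainders are produced by the Lipschitz expansions in (A2) and (A3). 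For the variances, $\operatorname{Var}(D_\ell)\le\frac1n\EE[K_h^2 I]=O(h/n)$ using $g$ bounded and $\int K^2<\infty$, and $\operatorname{Var}(N_\ell)\le\frac1n\EE[Y^2K_h^2 I]=O(h/n)$ using (A1) together with the boundedness of $m_\calS$ near $\bs$ (which follows from $m_\calS g$ bounded and $g$ bounded below on a neighborhood of $\bs$). Therefore $D_\ell=h\,c_K\,g_\ell(0)+O(h^2)+O_p(\sqrt{h/n})$ and likewise $N_\ell=h\,c_K\,m_\ell(0)g_\ell(0)+O(h^2)+O_p(\sqrt{h/n})$.

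Finally I would sum over $\ell\in\calI$ and take the ratio. Put $B=\sum_{\ell\in\calI}g_\ell(0)>0$ (by (A2)) and $A=\sum_{\ell\in\calI}m_\ell(0)g_\ell(0)$. Then the denominator equals $h\,c_K\,B\,(1+O(h)+O_p(\tfrac1{\sqrt{nh}}))$ and the numerator equals $h\,c_K\,A+O(h^2)+O_p(\sqrt{h/n})$, so the common factor $h\,c_K$ cancels and
\begin{align*}
\hat m(\bs)=\frac{A}{B}+O(h)+O_p\!\left(\sqrt{\tfrac1{nh}}\right)=\frac{\sum_{\ell\in\calI}m_\ell(0)g_\ell(0)}{\sum_{\ell\in\calI}g_\ell(0)}+O(h)+O_p\!\left(\sqrt{\tfrac1{nh}}\right),
\end{align*}
which is the claim (when $A=0$ one keeps the numerator in remainder form and reaches the same conclusion). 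I expect the genuine obstacle to be the localization step rather than any single estimate: one must certify that for small $h$ the kernel ball sees only the incident half-edges with $d_\calS$ equal to arclength, so that a degree-$>2$ knot really does decompose into independent one-dimensional problems whose mixing weights are the correct one-sided Hausdorff densities $g_\ell(0)$; keeping the $O(h)$-versus-$O_p$ bookkeeping intact through the final division (and checking that $B>0$ so the limit is well defined) is the only other point that needs care.
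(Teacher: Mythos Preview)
Your proposal is correct and follows essentially the same route as the paper: drop the knot-mass terms using $p(\bs)=0$, localize to the incident half-edges, compute first and second moments of the edge-wise numerator and denominator via Lipschitz expansions, and take the ratio. The only cosmetic differences are that the paper normalizes by $1/(nh)$ throughout (so its leading constants are $\tfrac12 g_\ell(0)$ rather than your $h\,c_K\,g_\ell(0)$) and splits each $N_\ell$ into a noise piece and a signal piece before bounding, whereas you handle $N_\ell$ directly; neither changes the substance.
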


Proposition \ref{prop::zeroknot} shows that, under proper conditions, the skeleton kernel estimator on a zero-mass knot converges to the weighted average of the limiting regression values of the connected edges, and the convergence rate is the same as the edge points shown in Theorem \ref{thm::edge}.
The proof is included in Appendix \ref{sec::zeroknotproof}.

\begin{remark}
The domain $\calS$ of the regression function can be seen as bounded, and hence the boundary bias issue can arise. 
The true manifold structure's boundary can be different from the boundary of the skeleton graph, making the consideration of the boundary more complicated.
However, the boundary of the skeleton is the set of degree $1$ knots, and, under our formulation, knots have discrete measures, so the consideration of boundary bias may not be necessary for the proposed formulation.
However, some boundary corrections can potentially improve the empirical performance and we leave it for future research.
\end{remark}


\subsection{Skeleton kNN regression}
\label{sec::SkNN}

~~~~
The $k$-Nearest Neighbor (kNN) method
can be easily applied to the skeleton
using the distance on the skeleton.
For a given point on the skeleton at $\bs \in \calS$, 
we define the distance to the k-th nearest observation
on the skeleton as 
\begin{align}
    R_k(\bs) = \min\left\{r>0: \sum_{i=1}^n I(d_\calS(\bs_i, \bs)\leq r)\geq k\right\}.
\end{align}
Note that it is possible to have multiple
observations being the $k$-th nearest observation
due to observations being projected to the vertices.
In this case, we can either randomly choose
from them or consider all of them.
Here we include all of them in the calculation.
The skeleton-based $k$NN regression (S-kNN) 
predicts the value of outcome at $\bs$ as
\begin{align}
    \hat{m}_{SkNN}(\bs) = \frac{\sum_{i=1}^k Y_{i} I(d_\calS(\bs_i, \bs)\leq R_k(\bs))}{\sum_{j=1}^k I(d_\calS(\bs_j, \bs)\leq R_k(\bs))}.
\end{align}

Different from the usual kNN regressor with the covariates $\bx_1, \dots, \bx_n$, which selects neighbors through Euclidean distance in the ambient space, the S-kNN regressor chooses neighbors with skeleton-based distances after projection onto the skeleton graph.
Measuring proximity with the skeleton can improve the regression performance when the dimension of the covariates is large, which we empirically show in Section \ref{sec::simulation}.


\begin{remark}
It is well known that the usual $k_n$NN regressor can be consistent if we let $k_n$ grow as a function of the sample size $n$, and under appropriate assumptions, \cite{DistributionFreeThoery} give the convergence rate of the $k_n$NN estimate $m_n$ to the true function $m$ as 
\begin{align*}
\mathbb{E}\left\|m_n-m\right\|^2 \leq \frac{\sigma^2}{k_n}+c_1 \cdot C^2\left(\frac{k_n}{n}\right)^{2 /d}
\end{align*}
Later, \cite{Kpotufe2011} has shown that the convergence rate of $k$NN regressor depends on the intrinsic dimension. 
We conjecture that a similar result with $d=1$ rate holds for the skeleton $k$NN regression at an edge point, but leave the proof for future work.
\end{remark}

\subsection{Linear Spline Regression on Skeleton}

\label{sec::lspline}

~~~~In this section, we propose a skeleton-based linear spline model (S-Lspline) for regression estimation.
By construction, this approach results in a continuous model across the graph.
Moreover, we show that the skeleton-based linear spline corresponds to an elegant parametric regression model on the skeleton.
As the skeleton $\calS$
can be decomposed into the edge component $\mathcal{E}$
and the knot component $\mathcal{V}$, the linear spline regression on the skeleton can be written as the following constrained model:

\begin{equation}
\begin{aligned}
f: \calS\,\,\rightarrow \,\,\mathbb{R} \ \
\mbox{  such that }&\mbox{1. $f(x)$ is linear on $x\in\mathcal{E}$,}\\
&\mbox{2. $f(x)$ is continuous at $x\in \mathcal{V}$.}
\end{aligned}
\label{eq::LS_original}
\end{equation}
While solving the above constrained problem may not be easy,
we have the following elegant representer theorem
showing that a linear spline on the skeleton can be uniquely characterized by the values on each knot. 

\begin{theorem}[Linear spline representer theorem]
\label{thm::spline}
Any function satisfying equation \eqref{eq::LS_original}
can be characterized by $\{f(v): v\in\mathcal{V}\}$
and for $x\in\mathcal{E}$, $f(x)$ is linear interpolation
between the values on the two knots on the edge that $x$ belongs to.
\end{theorem}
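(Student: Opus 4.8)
The plan is to show that the two constraints in \eqref{eq::LS_original} force $f$ on every edge to coincide with the linear interpolant of its endpoint values, so that the knot values $\{f(v):v\in\calV\}$ determine $f$ on all of $\calS$. Since $\calS=\calV\cup\calE$ and the claim is vacuous on $\calV$, it suffices to pin down $f(x)$ for an arbitrary $x\in\calE$.

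First I would fix $x\in\calE$. By the construction of $\calE$, the point $x$ lies in the relative interior of a line segment $[V_j,V_\ell]$ associated with some edge $e_{j\ell}\in E$; assuming, as is implicit in the skeleton construction, that distinct edges meet only at shared knots, this edge — hence the unordered pair $\{V_j,V_\ell\}$ — is unique, so I may write $x=tV_j+(1-t)V_\ell$ with $t\in(0,1)$. Parametrize the edge by $\gamma(s):=sV_j+(1-s)V_\ell$, $s\in[0,1]$. Constraint~1 (``$f$ is linear on $\calE$'') means precisely that $s\mapsto f(\gamma(s))$ is affine on $(0,1)$, i.e.\ $f(\gamma(s))=a+bs$ for constants $a,b$ depending on the edge (affineness in $s$ and along the segment coincide since $\gamma$ is itself affine in $s$).

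Next I would invoke Constraint~2 together with the continuity of $s\mapsto f(\gamma(s))$ on the closed interval to evaluate the endpoints: letting $s\downarrow 0$ gives $\gamma(s)\to V_\ell$ and hence $a=f(V_\ell)$, while letting $s\uparrow 1$ gives $\gamma(s)\to V_j$ and hence $a+b=f(V_j)$; therefore $b=f(V_j)-f(V_\ell)$ and
\[
f(x)=f(\gamma(t))=t\,f(V_j)+(1-t)\,f(V_\ell),
\]
which is exactly the linear interpolation between the two knot values. This shows $f$ is determined by $\{f(v):v\in\calV\}$. To complete the characterization I would also record the converse: given any prescription of real numbers on $\calV$, defining $f$ on each edge by the interpolation formula above yields a function that is affine on every edge and whose one-sided limit along each edge incident to a knot $v$ equals the prescribed value at $v$, so it is continuous at every knot — including knots of degree $\geq 3$, which are handled edge by edge. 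Hence the correspondence between functions satisfying \eqref{eq::LS_original} and assignments $\calV\to\RR$ is a bijection.

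There is no substantial obstacle here; the only points needing a little care are (i) making precise that ``linear on $\calE$'' is read as ``affine along each edge segment,'' and (ii) the well-definedness of ``the two knots on the edge that $x$ belongs to,'' i.e.\ that a non-knot point of $\calS$ lies on a unique edge, which is immediate under the standing assumption that distinct edges intersect only at common knots.
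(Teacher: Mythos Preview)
Your proposal is correct and follows essentially the same approach as the paper: both arguments use linearity on each edge to reduce $f|_{E_{j\ell}}$ to two free parameters and then invoke continuity at the knots to identify those parameters with $f(V_j)$ and $f(V_\ell)$. Your version is somewhat more explicit (taking one-sided limits, recording the converse, and flagging the uniqueness of the edge through $x$), but the substance is the same.
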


\begin{proof}
Let $f$ be a function satisfying equation \eqref{eq::LS_original}. 
By construct, $f$ is linear for $x\in\calE$ and is continuous at
$x\in \calV$.
Let $V_j$ and $V_\ell$ be two knots that share an edge
and let $E_{j\ell} =\{x = t V_j + (1-t) V_\ell: t\in(0,1)\}$
be the shared edge segment. 
For any $x\in\calE$, there exists a pair $(V_j,V_\ell)$
such that $x\in E_{j\ell}$.
Because $f$ is linear in $E_{j\ell}$, 
$f$ can be uniquely characterized by the pairs $(f(e_1), e_1), (f(e_2), e_2)$ for two distinct points $e_1,e_2 \in \bar E_{j\ell}$,
where $\bar E_{j\ell} =\{x = t V_j + (1-t) V_\ell: t\in[0,1]\}$
is the closure of $E_{j\ell}.$
Thus, we can pick $e_1 = V_j$ and $e_2 =V_\ell$, which implies
that $f$ on the segment $E_{j\ell}$ is parameterized 
by $f(v_j)$ and $f(V_\ell)$, the values on the two knots. 

By applying this procedure to every edge segment,
we conclude that 
any function satisfying the first condition in \eqref{eq::LS_original}
can be characterized by the values of the knots. 
The second condition in \eqref{eq::LS_original} will require
that every knot has one consistent value. 
As a result, any function $f$ satisfying \eqref{eq::LS_original}
can be uniquely characterized by the values on the knot $\{f(x): x\in \calV\}$
and $f(x)$ will be a linear interpolation when $x\in \calE$. 

\end{proof}



Using Theorem~\ref{thm::spline},
we only need to determine the values on the knots. 
Let $\bbeta \in \mathbb{R}^k$
be the values of the skeleton linear spline model on each knot
with $k = \abs{\calV}$ being the number of knots.
As is argued previously, the spline model is parameterized by $\bbeta$,
so we only need to estimate $\bbeta$ from the data. 
Given $\bbeta$, the predicted value of each $\by_i$ is a linear interpolation
depending on the projected location of each $\bx_i$.

To derive an analytic form of $\by_i$,
we introduce a transformed covariate matrix 
$\bZ = (\bz_1, \dots, \bz_n)^T \in \bR^{n\times k}$ as follows:
\begin{enumerate}
    \item If $\bx_i$ is projected onto a vertex that $\bs_i = V_j$ for some $j$, then
    \begin{align*}
        \bz_{ij'}=I(j'=j).
    \end{align*}
    \item If $\bx_i$ is projected onto an edge between knots $V_j$ and $V_\ell$, then 
    \begin{align*}
    \bz_{ij} = \frac{||\bs_i - V_j||}{||V_j - V_\ell||},
    \quad \bz_{i\ell} = \frac{||\bs_i - V_\ell||}{||V_j - V_\ell||}, \quad \text{ and } \bz_{ij'}=0 \text{ for } j'\neq j,\ell.
    \end{align*}
    
\end{enumerate}
With the above feature transform, the predicted value of $\by_i$ by the S-Lspline model is
\begin{align}
    \hat \by_i = \bbeta^T \bz_i.
\end{align}

To see this,
if $\bx_i$ is projected onto a vertex that $\bs_i = V_j$ for some $j$, the linear model with transformed covariates gives $\bbeta^T\bz_i = \bbeta_j$, the predicted value on vertex $V_j$.
In the case where $\bx_i$ is projected onto an edge between knots $V_j$ and $V_\ell$,
let $\bbeta_j$ and $\bbeta_\ell$ be the corresponding predicted values at $V_j$ and $V_\ell$, and the linear interpolation between $\bbeta_\ell$ and $\bbeta_j$ at $\bs_i$ can be written as
\begin{align*}
    \bbeta_j + \frac{||\bs_i - V_j||}{||V_j - V_\ell||} \cdot \left(\bbeta_\ell - \bbeta_j\right) = \frac{||\bs_i - V_\ell||}{||V_j - V_\ell||} \cdot \bbeta_j + \frac{||\bs_i - V_j||}{||V_j - V_\ell||} \cdot \bbeta_\ell = \bbeta^T\bz_i.
\end{align*}
To estimate $\bbeta$, we can apply the least squares procedure to get:
\begin{align*}
\hat \bbeta &= {\sf argmin}_{\bbeta} \sum_{i=1}^n (\by_i-\hat \by_i)^2\\
& = {\sf argmin}_{\bbeta} \sum_{i=1}^n ( \by_i- \bbeta^T \bz_i)^2.
\end{align*}
So it becomes a linear regression model
and the solution can be elegantly written as
\begin{align*}
    \hat{\bbeta} = (\bZ^T \bZ)^{-1} \bZ \by.
\end{align*}
Note that in a sense, the above procedure can be viewed
as placing a linear model 
\begin{align*}
    \mathbb{E}(\by|\bX) = \bbeta^T \bZ,
\end{align*}
where $\bZ$ is a transformed covariate matrix from $\bX$.

Note that the S-Lspline model with the graph-transformed covariates does not include an intercept.

\begin{remark}
An alternative justification of the value-on-knots parameterization is to calculate the degree of freedom. On each graph, the sum of the vertex degrees is twice the number of edges since each edge is counted from both ends. Let $e$ be the number of edges in the graph, let $v$ be the number of vertices, and let $r$ be the sum of all the vertex degrees, we have $r = 2e$. For the S-Lspline model, we construct a linear model with $2$ free parameters for each edge, and thus without any constraints, the total number of degrees of freedom is $2e$. For each vertex $V_i$ with degree $r_i$, the continuity constraint imposes $r_i - 1$ equations, and as a result, the continuity constraints consume a total of $\sum_{i=1}^v (r_i-1) = r - v$ degrees of freedom. Combining it, we have $2e - (r -v) = v$ degrees of freedom, which matches the degrees of freedom given by the parametrization of values on the knots.
\end{remark}


\subsubsection{Regularized Linear Spline Method}
\label{sec:penalLspline}
~~~~Given the formulation of the S-Lspline as a linear regression with transformed data, it is natural to incorporate penalization with this method. In this section, we introduce penalization into the S-Lspline method by making connections to the literature about regularization on graphs, with a particular focus on graph Laplacian smoothing by \cite{GraphLaplacianSmoothing} and graph trend filtering by \cite{Wang2016}.

Let $B$ be the (unoriented) incidence matrix of the skeleton graph that 
\begin{align*}
    B_{ij} =
    \begin{cases}
    1 & \text{if vertex $v_i$ is incident with edge $e_j$ },\\
    0 & \text{otherwise.}
\end{cases}
\end{align*}
for $i = 1, \dots, k$ where $k$ is the number of knots and $j = 1, \dots, m$ where $m$ is the number of edges in the skeleton graph.
Let $L$ denote the Laplacian matrix that $L = D - A = B B^T$ where $D$ is the degree matrix and $A$ is the adjacency matrix of the skeleton graph.
The $q$-th order trend filtering matrix, for $q \in \set{0,1,2,\dots}$,  is defined as
\begin{align*}
    \Delta^{(q+1)} = \begin{cases}
    L^{\frac{q+1}{2}} & \text{for odd }q,\\
    B L^{q/2} & \text{for even }q.
\end{cases}
\end{align*}

The $q$-th order Laplacian smoothing can be taken as the $L_2$ penalty with the trend filtering matrix, and we have the regularized problem to be 
\begin{align*}
     {\sf argmin}_{\bbeta} \norm{\bY - \bZ \bbeta}_2^2 + \lam \norm{\Delta^{(k+1)} \bbeta}_2 
\end{align*}
where $\norm{\Delta^{(k+1)} \bbeta}_2  = \bbeta^T L^{k+1} \bbeta$ for Laplacian matrix $L$, and $\bZ$ the transformed covariate matrix from $\bX$.
This can be solved as a Generalized Ridge problem\footnote{Generally if the penalty matrix $L^{k+1} $ is positive definite, the generalized penalty is a non-degenerated quadratic form in $\beta$, and hence strictly convex. The analytical solution is then
\begin{align*}
    \hat \beta = \brac{X^T X + \lam L^{k+1}}^{-1} ,\brac{X^T Y}
\end{align*}
However, the Laplacian matrix is only positive semi-definite, and therefore the loss function need not be strictly convex. Some work suggests adding $\norm{\bbeta}_2^2$ as an additional penalty to address this, but we do not implement that to be consistent with the trend filtering penalization.}. 

The Trend Filtering regularization similarly applies a $L_1$ penalty and the problem becomes
\begin{align*}
     {\sf argmin}_{\bbeta} \norm{\bY - \bZ \bbeta}_2^2 + \lam \norm{\Delta^{(k+1)} \bbeta}_1.
\end{align*}
We follow \cite{SolutionPathLasso} to get the solution to the generalized Lasso problem. We include the algorithm in Appendix \ref{sec:lassoSolution} for completeness.
Empirically, we observe that penalization does not improve the regression results of the S-Lspline model (see Appendix \ref{sec:penalSplineSims}). 
To account for this, note that the skeleton graph is a summarizing presentation of the data with a concise structure, and the S-Lspline method assumes a simple piecewise linear model on the skeleton which inherits the simple geometric structure and is not a complex model in nature, and hence adding penalization does not improve the performance of this method.

\subsection{Challenges of Other Nonparametric Regression}
\label{sec::otherParametric}
~~~~In this section, we discuss the challenges when applying other nonparametric regression methods to the skeleton. 
Particularly, the skeleton graph is only equipped with a metric and 
does not have a well-defined inner product or orientation,
which makes many conventional approaches not directly applicable.

\subsubsection{Local polynomial regression}

~~~~Local polynomial regression \cite{fan2018local}
is a common generalization of the kernel regression that
tries to improve the kernel regression estimator by using higher-order polynomials as local approximations to the regression function. 
In the Euclidean space, a $p$-th order local polynomial regression aims to choose $\beta(\bx)$ via minimizing
\begin{align}
    \sum_{i=1}^n \sbrac{Y_i - \sum_{j = 0}^p \beta_j (\bx_i - \bx)^j}^2 K \brac{\frac{\bx_i - \bx}{h}}
\end{align}
and predict $m(\bx)$ via $\hat \beta(\bx)$, the first element in the minimizer.
Note that when $p=0 $, one can show that this is equivalent to the kernel regression.

Unfortunately, the local polynomial regression cannot be easily adapted to the skeleton because the polynomial $(\bx_i - \bx)^j$ requires a well-defined orientation, which is ill-defined at a knot (vertex). 
Directly replacing $(\bs_i -\bs)$ with the distance $d_\calS (\bs_i -\bs)$ will make all the polynomials to be non-negative, which will be problematic for odd orders. 
Unless in some special skeletons such as a single chain structure,
the local polynomial regression cannot be directly applied.


\subsubsection{Higher-Order Spline}

~~~~In Section \ref{sec::lspline}, we introduce the linear spline model.
One may be curious about the possibility of using a higher-order spline (enforcing higher-order smoothness on knots; see, e.g., Chapter 5.5 of \cite{wasserman2006all}).
Unfortunately, the higher-order spline is generally not applicable to the skeleton
because a higher-order spline requires derivatives and the concept of a derivative may be ill-defined on a knot because of the lack of orientation.
To see this, consider a knot with three edges connecting to it. 
There is no simple definition of derivative at this knot unless we specify
the orientation of these three edges.

One possible remedy is to introduce an orientation for every edge.
This could be done by ordering the knots first and, for every edge,
the orientation is always from a lower index vertex to the higher index vertex. 
With this orientation, it is possible to create a higher-order spline
on the skeleton
but the result will depend on the orientation we choose.


Even with edge directions provided and the derivatives on the skeleton defined, higher-order spline on the skeleton can be prone to overfitting.
Classical spline methods use degree $p+1$ polynomial functions to achieve continuity at $p$-th order derivative. For example, univariate cubic splines use polynomials up to degree $3$ to ensure the second-order smoothness of the regression function at each knot. 
However, on a graph, degree $p+1$ polynomial functions may fail to achieve continuity at $p$-th order derivative, and on complete graphs, which is the worst case, $2p+1$ degree polynomials are needed instead.

\subsubsection{Smoothing Spline}
~~~~Smoothing spline \cite{wang2011smoothing, wahba1975smoothing} is another popular approach
for curve-fitting that attempts to 
find a smooth curve that minimizes the square loss in the prediction
with a penalty on the curvature (second or higher-order derivatives).

The major difficulty of this method is that the concept of a \emph{smooth} function
is ill-defined at a knot even if we have a well-defined orientation.
In fact, the `linear function' is not well-defined in general on a skeleton's knot.
To see this, consider a knot $V_0$ with three edges $e_1,e_2,e_3$ connecting to $V_1,V_2,V_3$, respectively.
Suppose we have a linear function $f_0$ and
$f_0$ is linearly increasing on paths $V_1-V_0-V_2$
and $V_1-V_0- V_3$. 
However, on the path $V_2-V_0-V_3$, the function $f_0$ will be 
decreasing ($V_2-V_0$) and then increasing ($V_0-V_3$), leading to a non-smooth structure.



\subsubsection{Orthonormal Basis and Tree}

~~~~Orthonormal basis approach (see, e.g., Chapter 8 of \cite{wasserman2006all}) uses a set of orthonormal basis functions to approximate the regression function. 
In general, it is unclear how to find a good orthonormal basis 
for a skeleton unless the skeleton is simply a circle or a chain. 

Having said that, it is possible to construct an orthonormal basis
borrowing the idea from wavelets \citep{torrence1998practical}.
The key idea is that the skeleton is a measurable set
that we can measure its (one-dimensional) volume. 
Thus, we can partition the skeleton $\calS$ into two equal-volume sets $A_1, A_2$. Note that the resulting sets $A_1, A_2$ are not necessarily skeletons because we may cut an edge into two pieces. 
For each set $A_j$, we can further partition it again into equal volume sets $A_{j,1}, A_{j,2}$. 
And we can repeat this dyadic procedure to create many equal-volume subsets.
We then define a basis as follows:
\begin{align*}
f_0(s) &= 1,\\
f_1(s) & =  I\brac{s\in A_1} -  I\brac{s\in A_2}\\
f_2(s) & =  I(s\in A_{1,1}) -  I(s\in A_{1,2})\\
f_3(s) & =  I(s\in A_{2,1}) -  I(s\in A_{2,2})\\
\vdots
\end{align*}
After normalization, this set of functions forms
an orthonormal basis.
With this basis, it is possible to fit an orthonormal basis on the skeleton. 
However, the above construction creates the partition arbitrarily. 
The fitting result depends 
on the particular partition we use to generate the basis
and it is unclear how to pick a reasonable partition in practice.

The regression tree \cite{breiman2017classification, loh2014fifty} is a popular idea in nonparametric regression that fits the data via creating a tree of partitioning the whole sample space 
whose leaves represent a subset of the sample space
and predicts the response using a single parameter at each leaf (region). 
This idea could be applied to the skeleton using a similar
procedure as the construction of an orthonormal basis
that we keep splitting a region into two subsets (but
we do not require the two subsets to be of equal size). 
However, unlike the usual regression tree (in Euclidean space) that
the split of two regions is often at a threshold at one coordinate,
the split of a skeleton may not be easily represented
as the skeleton is just a connected subregion of Euclidean space.
Therefore, similar to the orthonormal basis, 
regression tree may be used in skeleton regression, but
there is no simple and principled way to create
a good partition.


\section{Simulations}	\label{sec::simulation}
~~~~In this section, we use simulated data to evaluate the performance of the proposed skeleton regression framework. 
\footnote{R implementation of the proposed skeleton regression methods can be accessed at \url{https://github.com/JerryBubble/skeletonMethods} and Python implementation can be accessed at \url{https://pypi.org/project/skeleton-methods/}.}
We first demonstrate an example with the intrinsic domain composed of several disconnected components, which we call the Yinyang data (Section \ref{sec::Yinyang}). Then, we add noisy observations to the Yinyang data (Section \ref{sec::NoisyYinyang}) to show the effectiveness of our method in handling noisy data points. Moreover, we present an example where the domain is a continuous manifold with a Swiss roll shape (Section \ref{sec::SwissRoll}). In all the simulations in this section, there are random perturbations in the intrinsic dimensions, and we add random Gaussian variables as covariates to increase the ambient dimension.

\subsection{Analysis Procedure}
\label{sec:analysisProcedure}
~~~~We apply the following analysis procedure for all the simulations in this section.
We randomly generate the dataset for $100$ times, and, on each dataset, we use $5$-fold cross-validation to calculate the sum of squared errors (SSE) as the performance assessment. 
We use the skeleton construction method described in Section \ref{sec::skeletoncons} to construct skeletons with varying numbers of knots on each training set. 
In this section, we present results where the construction procedure cuts the skeleton into a given number of disjoint components according to the Voronoi Density weights (Section \ref{sec::skeletoncons}).
We also empirically tested using different cuts to get skeleton structures with different numbers of disjoint components under the same number of knots and noticed little change in the squared error performance (see Appendix \ref{sec::extraSim}). 


We evaluate the skeleton-based nonparametric regressors introduced in Section \ref{sec::regression}: skeleton kernel regression (S-Kernel), $k$-NN regressor using skeleton-based distance (S-kNN), and the skeleton spline model(S-Lspline). For S-Kernel and S-kNN methods, to simplify the calculation, we only compute the skeleton-based distances between points in the same or neighboring Voronoi cells. That is the skeleton-based distance between a pair of points is calculated when they share at least one knot from their respective set of two closest knots.
For the S-Lspline method, we include the results without additional penalization in this section. We compare the empirical performance of the S-Lspline method with various penalizations discussed in Section \ref{sec:penalLspline} on the simulated datasets and present the results in Appendix \ref{sec:penalSplineSims}, and we observe that incorporating penalization terms does not improve the empirical performance of the S-Lspline method.

For comparisons, we apply the classical k-nearest-neighbors regression based on Euclidean distances (kNN).
For penalization regression methods, we test Lasso and Ridge regression.
Among the recent manifold and local regression methods, we include the Spectral Series approach \citep{Lee2016} with the radial kernel (SpecSeries) for its superior performance\footnote{The Spectral Series approach demonstrates similar empirical performance as the kernel machine learning methods with regularization in RKHS as in \cite{Lee2016}.} and readily available R implementation \footnote{\url{https://projecteuclid.org/journals/supplementalcontent/10.1214/16-EJS1112/supzip_1.zip}}. 
For kernel machine learning approaches, we include the Divide-and-Conquer Kernel Ridge Regression (Fast-KRR) method as in \cite{Yuchen2013}. For Fast-KRR, we set the penalization hyperparameter $\lambda = 1/n$ and set the number of random partitions $m = \sqrt{n}$ where $n$ is the size of the training sample, and use the radial kernel where the best bandwidth $\sigma$ is given by grid search. 

For a neural network approach, we implemented a simple Multi-Layer Perceptron (MLP) Autoencoder using \texttt{MLPRegressor} networks\footnote{\url{https://scikit-learn.org/stable/modules/generated/sklearn.neural_network.MLPRegressor.html}}  with a 3-layer architecture that the encoder maps directly from input dimension $D$ to hidden dimension $h$, and the decoder maps directly from $h$ back to $D$, and use the Ridge regression model to predict the response $Y$ from the learned embeddings with penalization parameter $\lambda$ that $\hat{Y} = \text{Ridge}(\text{Encoder}(X); \lambda)$, and explore different combinations of hidden dimension and penalization parameter configurations (see Appendix \ref{sec:MLPAE} for more details and analysis with an autoencoder with more layers).

For the simulations presented in this section, we add random Gaussian variables to create settings with a large ambient dimension of $1000$ to demonstrate that the proposed skeleton regression framework is robust under such challenging scenarios. 
For completeness, we also include the simulation results on low-dimensional data settings in Appendix \ref{sec::extraSim}, and the skeleton-based regression methods also show competitive performance in such settings.

\subsection{Yinyang Data}
\label{sec::Yinyang}

~~~~The covariate space of Yinyang data is intrinsically composed of $5$ disjoint structures of different geometric shapes and different sizes: a large ring of $2000$ points, two clumps each with $400$ points (generated with the \texttt{shapes.two.moon} function with default parameters in the \texttt{clusterSim} library in R \citep{clusterSim}), and two 2-dimensional Gaussian clusters each with $200$ points (Figure \ref{fig::YinyangData} left). Together there are a total of $3200$ observations. 
Note that the intrinsic structures of the components are curves and points, and, with perturbations, the generated covariates do not lay exactly on the corresponding manifold structures.
The responses are generated from a trigonometric function on the ring and constant functions on the other structures with random Gaussian error(Figure \ref{fig::YinyangData} right). That is, let $\epsilon \sim N(0,0.01)$ and let $\theta$ be the angle of the covariates, then
\begin{align*}
\label{eq::YinyangResponse}
    Y = \epsilon +\begin{cases}
  \sin(\theta*4) + 1.5& \text{for points on the outer ring} \\
  0& \text{for points on the bottom-right Gaussian cluster}\\
  1& \text{for points on the right clump}\\
  2& \text{for points on the left clump}\\
  3& \text{for points on the upper-left Gaussian cluster}
    \end{cases}
\end{align*}
To make the task more challenging with the presence of noisy variables, we add independent and identically distributed random $N(0,0.01)$ variables to the generated covariates. In this section, we increase the dimension of the covariates to a total of $1000$ with those added Gaussian variables.


For the Yinyang data, we cut the skeleton into $5$ disjoint components during the skeleton construction process according to the Voronoi Density weights.
We take the median, 5th percentile, and 95th percentile of the 5-fold cross-validation Sum of Squared Errors (SSEs) for each parameter setting of each method on the 100 datasets. We present the smallest median SSE for each method in Table \ref{table:Yinyangd1000} along with the corresponding best parameter setting.

\begin{figure}
\centering
    \begin{subfigure}[t]{0.3\textwidth}
        \centering
        \includegraphics[width=\linewidth]{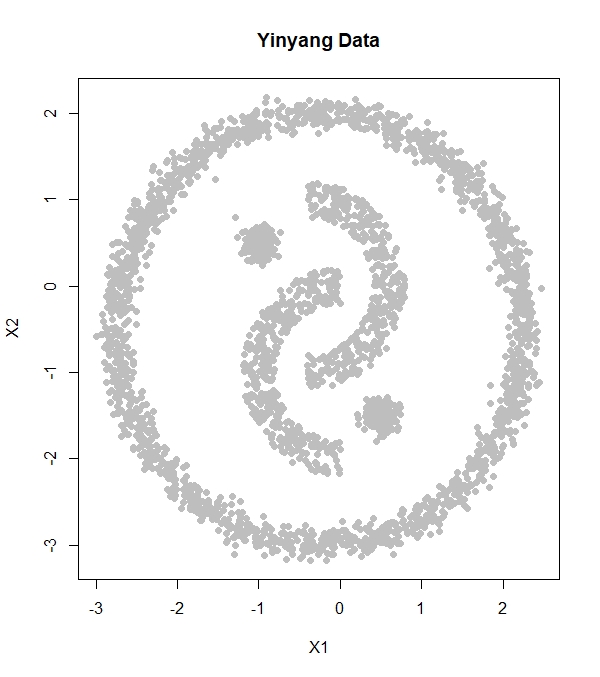} 
    \end{subfigure}
        \begin{subfigure}[t]{0.3\textwidth}
        \centering
        \includegraphics[width=\linewidth]{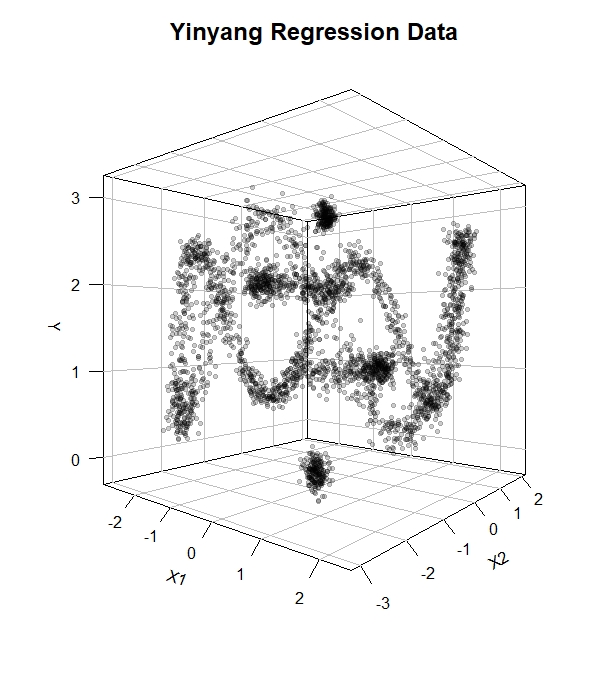}
    \end{subfigure}
    \vspace{-1em}
\captionof{figure}{Yinyang Regression Data}
\label{fig::YinyangData}
\vspace{2em}

\begin{tabular}{c l c l} 
 \hline
 Method & Median SSE ($5$\%, $95$\%) & nknots & Parameter \\ [1ex] 
 \hline
 kNN &204.5 (192.3, 221.9) & - & neighbor=18 \\ 
 Ridge & 2127.0 (2100.2, 2155.2) & & $\lambda  = 7.94$\\
 Lasso & 1556.8 (1515.4, 1607.9) & & $\lambda = 0.0126$\\
 SpecSeries & 1506.4 (1469.1,1555.6) & - &bandwidth = 2\\
 Fast-KRR & 2404.0 (2370.0, 2440.2) & - &$\sigma$ = 0.1\\
 3-layer MLP AE &  2737.9 (2692.8, 2777.6) & - & $h = 50, \lambda = 100$\\
 S-Kernel & 91.6 (81.6, 103.5) & 38 & bandwidth = 4 $r_{hns}$ \\
 S-kNN & 92.7  (84.5, 102.8) & 38 & neighbor = 36 \\
 S-Lspline & 94.4 (87.7, 103.2) & 38 & -  \\[1ex] 
 \hline
\end{tabular}
\captionof{table}{Regression results on Yinyang $d=1000$ data. The smallest medium 5-fold cross-validation SSE from each method is listed with the corresponding parameters used. The $5$th percentile and $95$th percentile of the SSEs from the given parameter settings are reported in brackets.}
\label{table:Yinyangd1000}
\vspace{2em}

\centering
\includegraphics[width=\textwidth]{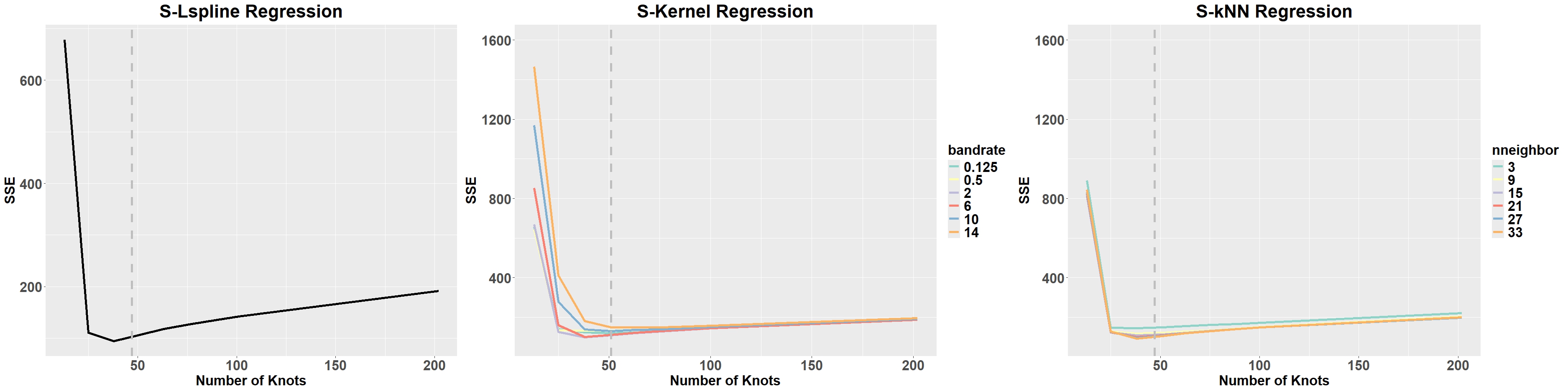}
\captionof{figure}{Yinyang $d = 1000$ data regression results with varying number of knots. The median SSE across the $100$ simulated datasets with each given parameter setting is plotted.}
\label{fig::Yinyangd1000Numknots}

\end{figure}

We observed that all the skeleton-based methods (S-Kernel, S-kNN, and S-Lspline) perform better than the standard kNN in this setting. That is, the skeleton better captures the geometric structures of the data and improves the downstream regression performance.
The three skeleton-based methods have similar performance on this simulated Yinyang data, but S-Lspline method can be preferred in this case in terms of computation as it does not require the skeleton-based distance computations.
The spectral method SpecSeries, the kernel machine learning approach Fast-KRR, and the 3-layer MLP autoencoder regressor all perform worse than the classical kNN. The underlying data structure being comprised of multiple disconnected components in this case can diminish the power of such manifold learning methods.
Ridge and Lasso regression, despite the regularization effect, resulted in relatively high SSEs. Therefore, the skeleton regression framework has the empirical advantage when dealing with covariates that lie around manifold structures.

In Figure \ref{fig::Yinyangd1000Numknots}, we present the median SSE of the S-Lspline, S-Kernel, and S-kNN methods on skeletons with various numbers of knots. The vertical dashed line indicates $[\sqrt n] = 51$ knots as suggested by the empirical rule, where $n$ is the training sample size.
The empirical rule seems to produce satisfactory results in this simulation study, roughly identifying the ``elbow'' position, but it's advised to use cross-validation for fine-tuning in practice.




\subsection{Noisy Yinyang Data}
\label{sec::NoisyYinyang}
~~~~To show the robustness of the proposed skeleton-based regression methods, we add $800$ noisy observations to the Yinyang data in Section \ref{sec::Yinyang} ($20\%$ of a total of $4000$ observations). 
The first two dimensions of the noisy covariates are uniformly sampled from the $2$-dimensional square $[-3.5,3.5]\times [-3.5,3.5]$ and independent random normal $N(0,0.01)$ variables are added to make the covariates $1000$-dimensional in total. 
The responses of the noisy points are set as $1.5 + \epsilon$ with $\epsilon \sim N(0,0.01)$, while the responses on the Yinyang covariates are generated the same as in Equation \ref{eq::YinyangResponse}. 
The first two dimensions of the Noisy Yinyang covariates are plotted in Figure \ref{fig::NoiseYinyangData} left and the $Y$ values against the first two dimensions of the covariates are illustrated in Figure \ref{fig::NoiseYinyangData} right.

\begin{figure}
\centering
    \begin{subfigure}[t]{0.3\textwidth}
        \centering
        \includegraphics[width=\linewidth]{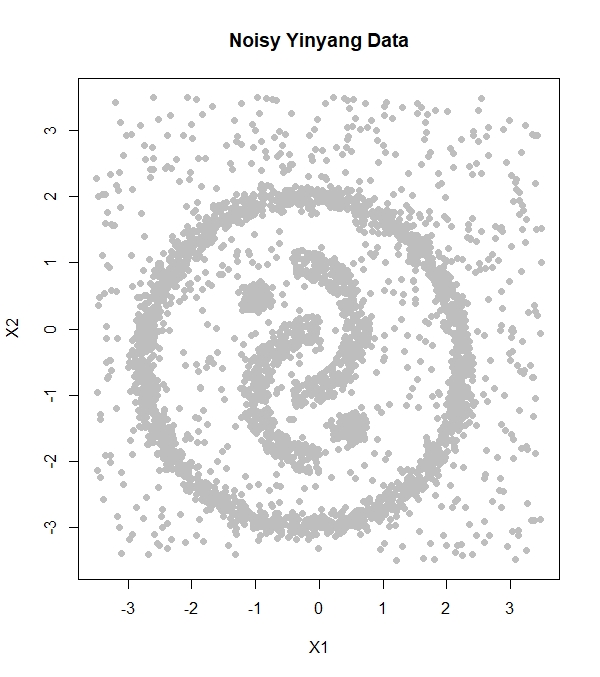} 
    \end{subfigure}
        \begin{subfigure}[t]{0.3\textwidth}
        \centering
        \includegraphics[width=\linewidth]{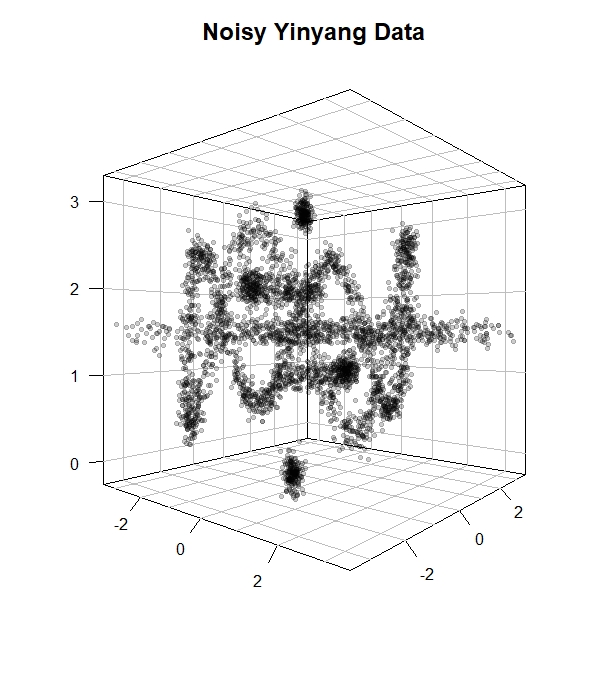}
    \end{subfigure}
\vspace{-1em}
\captionof{figure}{Noisy Yinyang Regression Data}
\label{fig::NoiseYinyangData}
\vspace{2em}
\centering
\begin{tabular}{c l c l} 
 \hline
 Method & Median SSE ($5$\%, $95$\%)  & Number of knots & Parameter \\ [1ex] 
 \hline
 kNN & 440.8 (420.4, 463.0) & -  & neighbor=18 \\ 
 Ridge & 2139.1 (2102.6, 2171.1) &- & $\lambda  = 6.31$\\
 Lasso &  2029.2 (1988.7, 2071.0) &- & $\lambda = 0.02$\\
 SpecSeries & 1532.0 (1490.7, 1563.2) & - & bandwidth = $2$\\
 Fast-KRR & 2584.6 (2556.3, 2624.5) &- & $\sigma$ =0.1\\
 3-layer MLP AE &  2785.1 (2739.2, 2821.6) & - & $h = 20, \lambda = 100$\\
 S-Kernel & 313.5 (293.2, 331.1) & 28 & bandwidth = 2 $r_{hns}$ \\
 S-kNN & 352.9 (332.4, 376.7) & 28 & neighbor = 15 \\
 S-Lspline & 376.5 (354.3, 399.2) & 57 &  - \\[1ex] 
 \hline
\end{tabular}
\captionof{table}{Regression results on Noisy Yinyang $d=1000$ data.The smallest medium 5-fold cross-validation SSE from each method is listed with the corresponding parameters used. The $5$ percentile and $95$ percentile of the SSEs from the given parameter settings are reported in brackets.}
\label{table:NoiseYinyangd1000}
\vspace{2em}
\centering
\includegraphics[width=\textwidth]{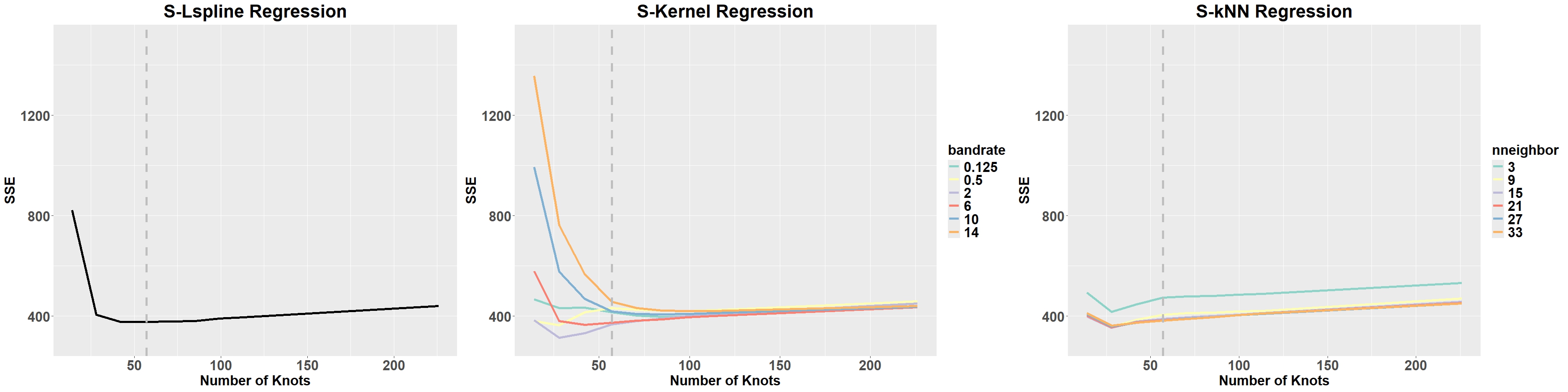}
\captionof{figure}{Noisy Yinyang $d = 1000$ data regression results with varying number of knots. The median SSE across the $100$ simulated datasets with each given parameter setting is plotted.}
\label{fig::NoiseYinyangd1000Numknots}

\end{figure}


To evaluate the robustness of the proposed skeleton-based regression methods, we randomly generate the Noisy Yinyang data 100 times and follow the analysis procedure as in Section \ref{sec:analysisProcedure}, except that we leave the skeleton to be a fully connected graph. We also took the median, 5th percentile, and 95th percentile of the 5-fold cross-validation SSEs for each parameter setting of each method on the 100 datasets. The smallest median SSE for each method is reported in Table \ref{table:NoiseYinyangd1000} along with the corresponding best parameter setting.

It can be observed that all the skeleton-based regression methods outperform the standard kNN approach, which indicates that the skeleton regression framework can capture the data structure in the presence of noisy observations and give good regression performance.
Among the skeleton-based methods, the S-Kernel has the best performance, and kernel smoothing can be a helpful nonparametric technique to deal with noisy observations.
The Ridge regression, Lasso regression, SpecSeries, Fast-KRR, 3-layer MLP autoencoder regressor,  again fail to provide good performance on this simulated dataset.
The advantage of the skeleton regression framework is more manifesting with noisy observations.

In Figure \ref{fig::Yinyangd1000Numknots}, we plot the median SSE of the skeleton-based methods on skeletons with different numbers of knots. Using the empirical rule to construct a skeleton with $[\sqrt{3200}] = 57$ knots results in good regression performance and approximately identifies the ``elbow'' position in Figure \ref{fig::Yinyangd1000Numknots}. However, for some skeleton-based methods, using a number of knots larger than that given by the empirical rule leads to better regression performance.
This improvement is related to the phenomenon observed in \citet{skelclus} that when dealing with noisy observations, it's better to have a skeleton with more knots and cut the skeleton into more disjoint components in order to have a cleaner representation of the key manifold structures.
Therefore, when facing data with noisy feature vectors, it's advised to empirically tune the number of knots favoring larger values.




\subsection{SwissRoll Data}
\label{sec::SwissRoll}
~~~~The intrinsic components of the covariates in Yinyang data are all well-separated, which, admittedly, can give an advantage to skeleton-based methods. 
Moreover, the intrinsic dimensions of the structural components for Yinyang data covariates are all lower than or equal to $1$ and can be straightforwardly represented by knots and line segments, potentially giving another advantage to skeleton-based methods.
To address such concerns, we present another simulated data which has covariates lying around a Swill Roll shape (Figure \ref{fig::SwissRollData} left), an intrinsically $2$-dimensional manifold in the $3$-dimensional Euclidean space. 
To make the density on the Swill Roll manifold balanced, we sample points inversely proportional to the radius of the roll in the $X_1 X_3$ plane. 
Specifically, let $u_1, u_2$ be independent random variables from $\text{Uniform}(0,1)$ and let the angle in the $X_1 X_3$ plane be generated as $\theta_{13} = \pi 3^{u_1}$. 
Then for the first $3$ dimensions of the covariates we have
\begin{align*}
  X_1 = \theta_{13} \cos(\theta_{13}), \ \ X_2 = 4 u_2, \ \ X_3 =  \theta_{13} \sin(\theta_{13}) 
\end{align*}
The true response has a polynomial relationship with the angle on the manifold if the $X_2$ value of the point is within some range. 
Let $\Tilde{\theta}_{13} = \theta_{13} - 2 \pi$, and let $\epsilon \sim N(0, 0.3)$. Then we set
\begin{align*}
    Y =  0.1\times \Tilde{\theta}_{13}^3 \times \left[ I(X_2<\pi) + I( 2\pi < X_2<3\pi) \right] + \epsilon
\end{align*}
The response versus the angle $\theta_{13}$ and $X_2$ is demonstrated in Figure \ref{fig::SwissRollData} right. Independent random Gaussian variables from $N(0,0.1)$ are added to make the covariates $1000$-dimensional in total, and $2000$ observations are sampled to make the Swiss Roll dataset.

\begin{figure}

\centering
    \begin{subfigure}[t]{0.4\textwidth}
        \centering
        \includegraphics[width=\linewidth]{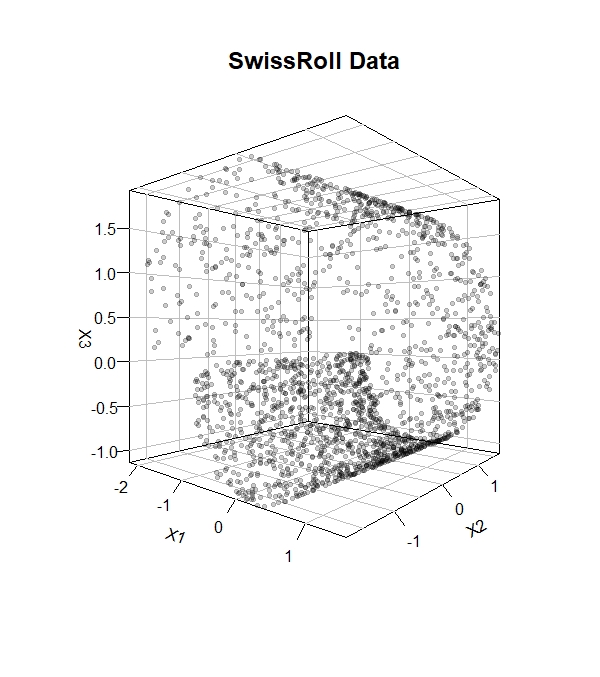} 
    \end{subfigure}
        \begin{subfigure}[t]{0.4\textwidth}
        \centering
        \includegraphics[width=\linewidth]{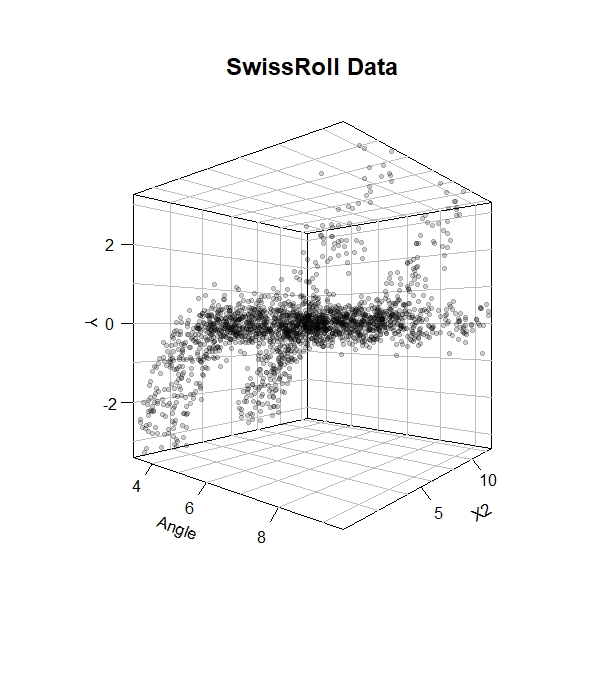}
    \end{subfigure}
\captionof{figure}{SwissRoll Regression Data}
\label{fig::SwissRollData}
\vspace{2em}
\begin{tabular}{c l c l} 
 \hline
 Method & Median SSE ($5$\%, $95$\%)  & nknots & Parameter \\ [1ex] 
 \hline
 kNN & 648.5 (607.1, 696.0) & - & neighbor=12 \\ 
 Ridge &  1513.7 (1394.4, 1616.2) & -& $\lambda  = 2.0$\\
 Lasso &  1191.4 (1106.7, 1260.7) & -& $\lambda = 0.032$\\
 SpecSeries & 1166.5 (1081.4, 1238.8) &- & bandwidth = $2.0$\\
 Fast-KRR & 1503.5 (1403.2, 1592.9) & - & $\sigma = 0.1$ \\
 3-layer MLP AE &  1505.2 (1405.1, 1649.5) & - & $h = 50, \lambda = 100$\\
 S-Kernel & 458.2 (409.0, 511.8) & 30 & bandwidth = 2 $r_{hns}$ \\
 S-kNN & 474.7 (417.6, 553.4) & 30 & neighbor = 18 \\
 S-Lspline & 569.8 (519.5, 645.8) & 60 & $\lambda = 0$ \\[1ex] 
 \hline
\end{tabular}
\captionof{table}{Regression results on the Swiss Roll $d=1000$ data. The smallest medium 5-fold cross-validation SSE from each method is listed with the corresponding parameters used. The $5$ percentile and $95$ percentile of the SSEs from the given parameter settings are reported in brackets.}
\label{table:Swissd1000}
\vspace{2em}
\centering
\includegraphics[width=\textwidth]{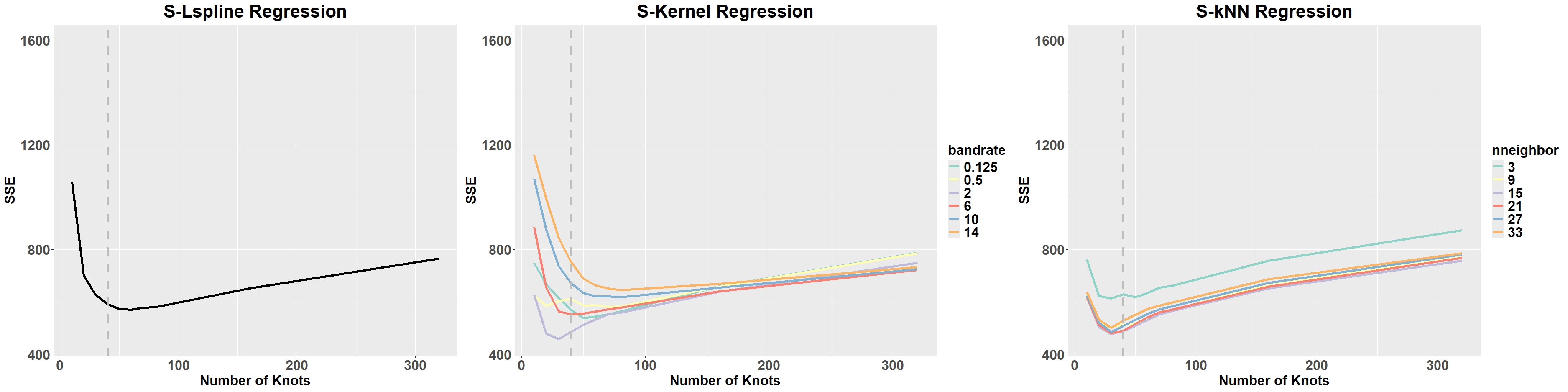}
\captionof{figure}{SwissRoll $d = 1000$ data regression results with varying number of knots. The median SSE across the $100$ simulated datasets with each given parameter setting is plotted.}
\label{fig::SwissRolld1000}

\end{figure}

We follow the same analysis procedures as in Section \ref{sec:analysisProcedure} with the skeletons constructed to be fully connected graphs without additional graph cuts. We took the median, 5th percentile, and 95th percentile of the 5-fold cross-validation SSEs across each parameter setting for each method on the 100 datasets, and reported the smallest median SSE for each method along with the corresponding best parameter setting in Table \ref{table:Swissd1000}.

All the proposed skeleton-based methods have better performance than the standard kNN regressor, while the S-Kernel method had the best performance in terms of SSE.
Particularly, the methods that utilize the skeleton-based distances, S-Kernel and S-kNN, have significantly better performance compared to the S-Lspline method which only utilizes the knot-edge structure of the skeleton graph.
Intuitively, the skeleton-based distances are good approximations to the geodesic distances on the manifold and hence lead to improvements in the regression performance.
The spectral, penalization-based, and neural network-based approaches do not demonstrate good performance on this simulated data.
Therefore, the proposed skeleton regression framework can also be powerful for data on connected, multi-dimensional manifolds.

By plotting the median SSE under skeletons with a varying number of knots in Figure \ref{fig::SwissRolld1000}, we observed that the best performance for all the skeleton-based methods is achieved with the number of knots larger than $[\sqrt{1600}] = 40$ knots. Given the intrinsic structure of the Swiss Roll input space is a $2$D plane, having more knots on the plane can give a better representation of the data structure and, therefore, lead to better prediction accuracy. We conjecture that the optimal number of knots should depend on the intrinsic dimension of the covariates, and we plan to discuss this further in future work. However, it's recommended to use cross-validation to choose the number of knots in practice.


\section{Real Data}	\label{sec::real}

~~~~In this section, we present analysis results on two real datasets.
We first predict the rotation angles of an object in a sequence of images taken from different angles (Section \ref{sec::cup}). 
For the second example, we study the galaxy sample from the Sloan Digital Sky Survey (SDSS) to predict the spectroscopic redshift (Section \ref{sec::sdss}), a measure of distance from a galaxy to Earth.

\subsection{Cup Images Data}
\label{sec::cup}
~~~~
This dataset consists of $72$ gray-scale images of size $128\times 128$ pixels taken from the COIL-20 processed dataset \citep{COIL20}. They are 2D projections of a 3D cup obtained by rotating the object by $72$ equispaced angles on a single axis.
Several examples of the images are given in Figure \ref{fig::cup}. 

The response in this dataset is the angle of rotation. However, this response has a circular nature where degree 0 is the same as degree 360. To avoid this issue, we removed the last 8 images from the sequence, only using the first 64 images. As a result, our dataset consists of 64 samples from a 1-dimensional manifold embedded in $\RR^{16384}$ along with scalar values representing the angle of rotation.
To assess the performance of each method, we use leave-one-out cross-validation, that, in each iteration, one image is taken out of the dataset and the regression methods are fitted to the remaining images to estimate the angle of the left-out image.

\begin{figure}
\centering
    \begin{subfigure}[t]{0.15\textwidth}
        \centering
        \includegraphics[width=\linewidth]{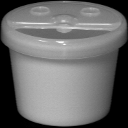} 
    \end{subfigure}
        \begin{subfigure}[t]{0.15\textwidth}
        \centering
        \includegraphics[width=\linewidth]{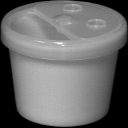}
    \end{subfigure}
            \begin{subfigure}[t]{0.15\textwidth}
        \centering
        \includegraphics[width=\linewidth]{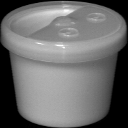}
    \end{subfigure} \begin{subfigure}[t]{0.15\textwidth}
        \centering
        \includegraphics[width=\linewidth]{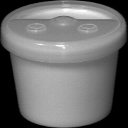}
    \end{subfigure} \begin{subfigure}[t]{0.15\textwidth}
        \centering
        \includegraphics[width=\linewidth]{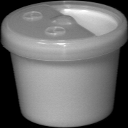}
    \end{subfigure} \begin{subfigure}[t]{0.15\textwidth}
        \centering
        \includegraphics[width=\linewidth]{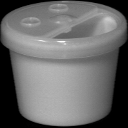}
    \end{subfigure}
\captionof{figure}{A part of the cup images from the COIL-20 processed dataset. Each image
is of size $128\time 128$ pixels.}
\label{fig::cup}
\vspace{2em}

\centering
\begin{tabular}{c c l} 
 \hline
 Method & SSE &  Parameter \\ [1ex] 
 \hline
 kNN & 1147.2 & neighbor=3 \\ 
 Ridge& -&-\\
 Lasso& -&-\\
 SpecSeries& -&-\\
 Fast-KRR & -&-\\
 S-Kernel & 1735.0 &  bandwidth = 2$r_{hns}$ \\
 S-kNN & 2068.8 & neighbor = 2\\
 S-Lspline &  1073.4  &- \\[1ex] 
 \hline
\end{tabular}
\captionof{table}{Regression results on cup images data from COIL-20. The best SSE from each method is listed with the corresponding parameters used.}
\label{table:cup}

\end{figure}

Similarly to the simulation studies, we use the skeleton construction method with Voronoi weights in \citet{skelclus} to construct the skeleton on the training set. 
In practice, we found that a small number of knots can still lead to loops in the constructed skeleton structure, and, after some tuning, we fit $2[\sqrt{n}] = 16$ knots to each training set. Additionally, since the underlying manifold should be one connected structure, we do not cut the constructed skeleton structure in this experiment. 
Due to the high-dimensional nature of the data, Ridge regression, Lasso regressions, and the Spectral Series approach failed to run with the implementations in R. 
The best result from each method is listed in Table \ref{table:cup} along with the corresponding parameters.

We observe that the S-Lspline method gives outstanding performance on this real data, outperforming the kNN regressor, while the other skeleton-based methods also demonstrate good performance. 
The lightening conditions of this series of images do not vary much by the rotation angle, which poses challenges to the similarity calculations based on the Euclidean distance and hence limits the performance of the classical kNN method. 
Note that the S-Kernel and S-kNN methods depend on the skeleton-based distances between data points while the S-Lspline methods do not, and hence the difference between such methods may imply that, although the skeleton graph can capture the data structure which leads to the good performance of the S-Lspline method, the skeleton-based distances can give inaccurate relations between data points compared to the true underlying data structure.
However, the skeleton graph still provides information about the data structure as the S-Lspline method has good performance with the simple piecewise linear model assumption on the skeleton.

\subsection{SDSS Data} \label{sec::sdss}

~~~~In this section, we applied the skeleton regression to a galaxy sample of size $5000$, taken from a random subsample of the Sloan Digital Sky Survey (SDSS), data release 12 \citep{york2000sloan, alam2015eleventh}. We repeat the random data subsampling for 100 times to get 100 different datasets. One dataset consists of $5$ covariates measuring apparent magnitudes of galaxies from images taken using $5$ photometric filters. These covariates can be understood as the color of a galaxy and are inexpensive to obtain. The response variable is the spectroscopic redshift, which is a very costly but accurate measurement of the distance
to the Earth. 
It is known that the $5$ photometric color measurements are correlated with
the spectroscopic redshift. So the goal is to use the photometric
information to predict the redshift; this is known as 
the clustering redshift problem in Astronomy literature \citep{morrison2017wizz, rahman2015clustering}.

We construct the skeleton with the same method in the simulation studies. The resulting skeleton graph is shown in Figure \ref{fig::SDSScolor}. 
In the left panel of Figure \ref{fig::SDSScolor}, we color the knots by their predicted redshift values according to the S-Lspline method and color the edges by the average predicted values of the two connected knots. 
For comparison, we color the knots and edges using the true values in the right panel of Figure \ref{fig::SDSScolor}. 
The predictions given by S-Lspline are very close to the true values. 

\begin{figure}
\centering
    \begin{subfigure}[t]{0.38\textwidth}
        \centering
        \includegraphics[width=\linewidth]{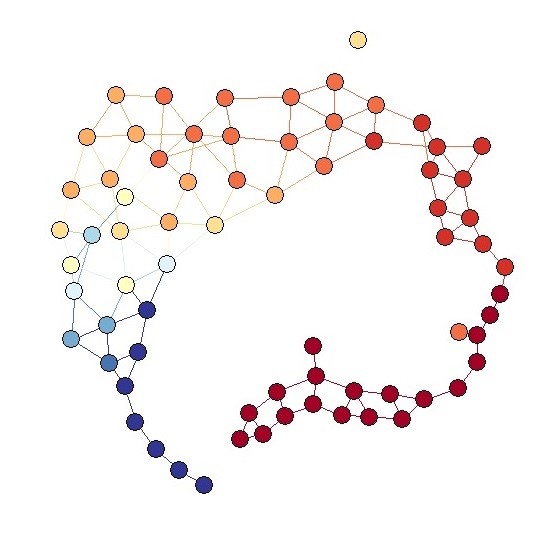} 
        \caption{S-Lspline}
    \end{subfigure}
        \begin{subfigure}[t]{0.49\textwidth}
        \centering
        \includegraphics[width=\linewidth]{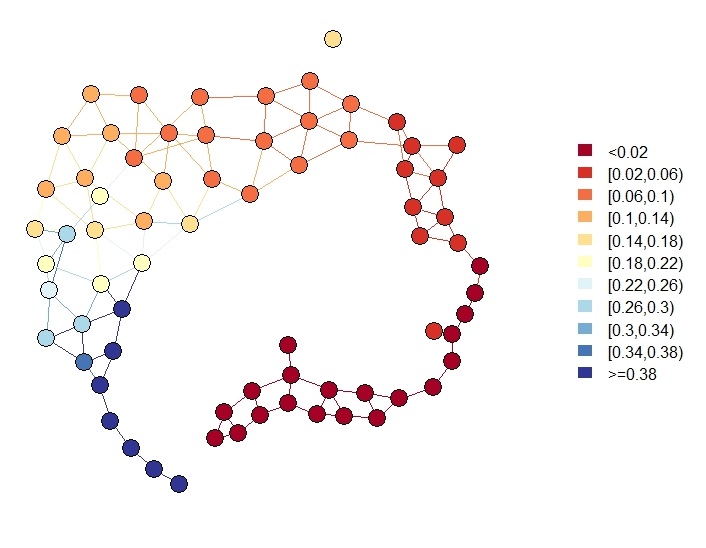}
        \caption{True Value}
    \end{subfigure}
\captionof{figure}{SDSS Skeleton Colored by values predicted by S-Lspline (left) and by true values (right).}
\label{fig::SDSScolor}
\vspace{2em}
\centering
\begin{tabular}{c l c l} 
 \hline
 Method & SSE & nknots &  Parameter \\ [0.5ex] 
 \hline
 kNN & 58.6 ( 46.7, 79.1) &  &neighbor=6 \\ 
 Ridge &868.4 (771.8, 984.5) & & $\lam = 0.001 $\\
 Lasso & 861.7 (750.3, 993.8) & & $\lam = 0.0013 $\\
 SpecSeries& 73.0 (54.1, 114.0) & &  bandwidth = 5\\
 Fast-KRR & 312.3 (242.5, 396.9) & & $\sigma = 0.1$\\
 S-Kernel & 78.6 (71.5, 92.4) &  126 &  bandwidth = $4 r_{hns}$ \\
 S-kNN& 83.1 (73.9, 98.7) & 126 & neighbor = 9\\
 S-Lspline &  75.9 (69.0, 90.4) & 126 & $\lambda$ = 0 \\[1ex] 
 \hline
\end{tabular}
\captionof{table}{Regression results on SDSS data. The best SSE from each method is listed with the corresponding parameters used. The 5 percentile and 95 percentile of the SSEs from the 100 runs are reported in brackets.}
\label{table:SDSS}
\end{figure}

For completeness, we perform the same analysis as in Section \ref{sec::simulation} by comparing the 5-fold cross-validation SSEs of different regression methods on this dataset and include results in Table \ref{table:SDSS}. 
The classical kNN shows superior performance on this dataset, which can imply that the kNN method adapts nicely to the complex structure of the data. 
Notably, the Spectral Series regression shows good performance in this low-dimensional setting.
However, note that the Spectral Series regression has a large variation in its performance ranging over the different subsampled datasets, with SSEs a 5 percentile of 54.1 to 95 percentile of 114.0. 
Overall, kNN and SpecSeries methods work well in this data and both methods can adapt to the underlying manifold, while the skeleton-based regression methods also show comparable results.
The Fast-KRR approach demonstrates performance better than the usual Ridge and Lasso regression, demonstrating the effectiveness of kernel tricks in this setting.
While skeleton approaches do not provide the best prediction accuracy,
the skeleton structure obtained in Figure~\ref{fig::SDSScolor}
shows a clear one-dimensional structure in the underlying covariate distribution and an approximate monotone trend in the response. 
Thus, even if our method does not provide the best prediction accuracy, the skeleton itself can be used as a tool to investigate the structure of the covariate distribution, which can be valuable for practitioners.



\section{Conclusion}	
\label{sec::conclusion}

~~~~In this work, we introduce the skeleton regression framework to handle regression problems with manifold-structured inputs. We generalize the nonparametric regression techniques, such as kernel smoothing and splines, onto graphs. 
Our methods provide accurate and reliable prediction performance and are capable of recovering the underlying manifold structure of the data.
Both theoretical and empirical analyses are provided to illustrate the effectiveness of the skeleton regression procedures.
The skeleton method not only learns a predictive model of the outcome but also outputs an elegant skeleton graph to summarize the structure of the feature space. Moreover, such a skeleton graph is a metric space, enabling us to use various prediction methods, and the whole procedure does not require much computational cost compared to deep learning models.

In what follows, we describe some possible future directions:
\begin{itemize}

\item {\bf Generalizing skeleton graphs to a simplicial complex.}
From a geometric perspective, the skeleton graph constructed in this work only focuses on $0$-simplices (points) and $1$-simplices (line segments). Additional geometric information can be encoded using higher-dimensional simplices. Recent research in deep learning has explored the use of simplicial complexes for tasks such as clustering and segmentation \citep{GeometricDL, MPSN2021}. Higher-dimensional simplicies
offer a finer approximation to the covariate distribution
but have a higher computational cost and a more complex model.
Thus, it is unclear if using a higher-dimensional simplex 
will lead to better prediction accuracy.
We will explore the possibility of extending skeleton graphs to the skeleton complex in the future. 

\item {\bf Nonparametric smoothers on graphs.} The kernel regression and spline regression
are not the only possibilities for performing nonparametric smoothing on graphs.
For example, \citet{Wang2016} generalized the concept of trend filtering \citep{Kim2009, Tibshirani2014} to graphs and compared it to Laplacian smoothing and Wavelet smoothing. In contrast to our work, these regression estimators for graphs are applied to data where both the inputs and responses are located on the vertices of a given graph. As a result, these graph smoothers, which include different regularizations, can only fit values on the vertices and do not model the regression function on the edges (\citet{Wang2016} mentioned the possibility of linear interpolation with trend filtering).

It is possible to generalize these methods to the skeleton by constructing responses on the knots in the skeleton graph as the mean values of the corresponding Voronoi cell, and then graph smoothers can be applied. Some interpolation methods can again be used to predict the responses on the edge, and this can lead to another skeleton-based regression estimator.

\item {\bf Time-varying covariates and responses.} A possible avenue for future research is to extend the skeleton regression framework to handle time-varying covariates and responses. Specifically, covariates collected at different times could be used together to construct knots in a skeleton. The edges in the skeleton can change dynamically according to the covariate distribution at different times, providing insight into how the covariate distributions have evolved. Additionally, representing the regression function on the skeleton would make it simple to visualize how the function changes over time.

\item {\bf Streaming data and online skeleton update.}
As streaming data becomes increasingly common, a potential area of future research is to investigate methods for updating the skeleton structure and its regression function in a real-time or online fashion. Reconstructing the entire skeleton can be computationally costly, but local updates to edges and knots can be more efficient. We plan to explore ways to develop a simple yet reliable method for updating the skeleton in the future.


\end{itemize}

\newpage
\begin{center}
    {\huge Appendices}
\end{center}
\begin{appendix}


\section{Skeleton Construction with Voronoi Density}
\label{ref::skelconsVoron}
~~~~
In this section, we provide a more detailed description of the procedures for constructing the skeleton and computing the density-aided edge weight called the Voronoi density, following the work in \cite{skelclus}.

\subsection{Knots Construction}	
\label{sec::knots}
~~~~
The knots in the skeleton serve as reference points within the data, allowing us to focus our attention from the overall data to these specific locations of interest. 
We utilize the $k$-means algorithm with a relatively large value of a number of knots $k$ to create these knots in a data-driven way. 
The number of knots is a crucial parameter in this procedure as it governs the trade-off between the summarizing power of the representation and the preservation of information. 
Empirical evidence from \cite{skelclus} suggests that setting $k$ to around $\sqrt{n}$ can be a helpful reference rule, while the dimensionality of the data should be taken into consideration when choosing $k$.

In practice, since the $k$-means algorithm may not always find the global optimum, we repeat it $1,000$ times with random initial points and select the result corresponding to the optimal objective. 
We also advise pruning knots with only a small number of with-in-cluster observations. Additionally, it can be helpful to preprocess or denoise the data by removing observations in low-density areas to address issues that could arise for $k$-means clustering. Figure \ref{fig::NoiseYinyangVaryKnotsIllustrate} illustrates the constructed knots on the Noisy Yinyang example with a varying number of knots; their performance is in Figure~\ref{fig::NoiseYinyangd1000Numknots}.

\begin{figure}[ht]
\centering
\includegraphics[width=0.8\linewidth]{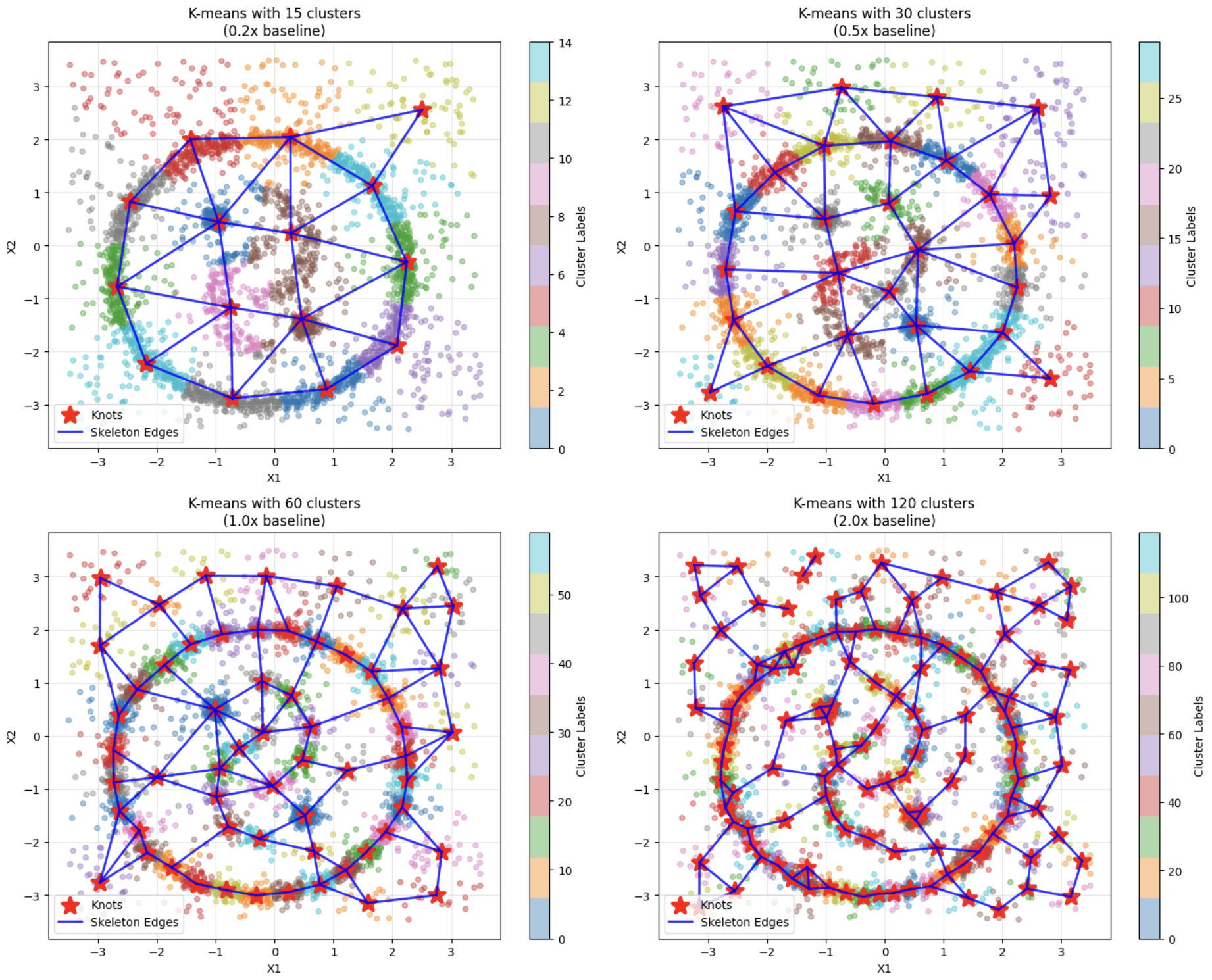}
\caption{Noisy Yinyang data with the skeleton constructed with a varying number of knots.}
\label{fig::NoiseYinyangVaryKnotsIllustrate}
\end{figure}

\subsection{Edges Construction}	\label{sec::edge}
~~~~
We denote the given knots as $c_1,\cdots, c_k$ and represent their collection as $\calC = {c_1,\cdots, c_k}$. An edge is added between two knots if they are neighbors, which is determined by whether their corresponding Voronoi cells share a common boundary. The Voronoi cell associated with a knot $c_j$ is defined as the set of points in $\calX$ whose distance to $c_j$ is the smallest among all knots. 
That is, 
\begin{align}
    \CC_j = \{x \in \calX: d(x, c_j) \leq d(x, c_\ell) \ \  \forall \ell \neq j\},
\end{align}
where $d(x,y)$ is the usual Euclidean distance.
We add an edge between knots $(c_i,c_j)$ if their Voronoi cells have a non-empty intersection. This graph is referred to as the Delaunay triangulation of $\calC$, denoted as $DT(\calC)$. 

Although the Delaunay triangulation graph is conceptually intuitive, the computational complexity of the exact Delaunay triangulation algorithm has an exponential dependence on the ambient dimension $d$, making it unfavorable for multivariate or high-dimensional data settings. To overcome this issue, we approximate the Delaunay triangulation with $\hat{DT}(\calC)$ by examining the 2-nearest knots of the sample data points. We query the two nearest knots for each data point and add an edge between $c_i, c_j$ if there is at least one data point whose two nearest neighbors are $c_i, c_j$. The computational complexity of this sample-based approximation depends linearly on the dimension $d$, making it suitable for high-dimensional settings.

\subsection{Voronoi Density}	\label{sec::VD}

~~~~The Voronoi density (VD) measures the similarity between a pair of knots $(c_j,c_\ell)$ based on the number of observations whose 2-nearest knots are $c_j$ and $c_\ell$. We first define the Voronoi density based on the underlying probability measure and then introduce its sample analog. Given a metric $d$ on $\RR^d$, the 2-Nearest-Neighbor (2-NN) region of a pair of knots $(c_j,c_\ell)$ is defined in Equation \ref{eq::2NNregion} as
\begin{align*}
    B_{j\ell} = \{ X_m, m=1,\dots,n :\norm{x- V_i} > \max\{ \norm{x-V_j}, \norm{x- V_\ell} \}, \forall i \neq j, \ell \}.
\end{align*}
Figure~\ref{fig::2nn} provides an illustration of an example 2-NN region of a pair of knots.
If two knots $c_j, c_\ell$ are in a connected high-density region, then we expect the 2-NN region of $c_j, c_\ell$ to have a high probability measure. Therefore, the probability $\PP(B_{j\ell}) = P(X_1 \in B_{j\ell})$ can measure the association between $c_j$ and $c_\ell$. Based on this insight, the Voronoi density measures the edge weight of $(c_j,c_\ell)$ as
\begin{align}
S_{j\ell}^{VD} = \frac{\PP(B_{j\ell})}{|c_j - c_\ell |}.
\end{align}
The Voronoi density adjusts for the fact that 2-NN regions have different sizes by dividing the probability of the in-between region by the mutual Euclidean distance. 

In practice, we estimate $S_{j\ell}^{VD}$ by a sample average. The numerator $\PP(B_{j\ell}) $ is estimated by $\hat{P}_n (B_{j\ell}) = \frac{1}{n}\sum_{i=1}^n I(X_i\in B_{j\ell})$, and the final estimator for the VD is:

\begin{align}
\hat{S}_{j\ell}^{VD} &= \frac{\hat{P}_n ({B}_{j\ell})}{|{c}_j - {c}_\ell|}.
\end{align}

Calculating the Voronoi density is fast. The numerator, which only depends on 2-nearest-neighbors calculation, can be computed efficiently by the k-d tree algorithm. For high-dimensional space, space partitioning search approaches like the k-d tree can be inefficient, but a direct linear search still gives a short run-time.

\subsection{Graph Segmentation}	\label{sec::segmenting}

~~~~After obtaining the weighted skeleton graph, it can be helpful to prune certain edges that are not of interest or segment the skeleton into disconnected components. 
The edge weights defined above can be utilized to achieve this. 
We start by first converting the edge weights into dissimilarity measures. Specifically, let ${s_{ij}}{i\neq j}$ be the edge weights, where only connected pairs can take non-zero entries, and let $s{\max} = \max_{i \neq j} s_{ij}$.
We then define the corresponding dissimilarities as $d_{ij} = 0$ if $i = j$, and $d_{ij} = s_{\max} - s_{ij}$ otherwise. 
Next, we apply hierarchical clustering using these distances. The choice of linkage criterion for hierarchical clustering depends on the underlying geometric structure of the data. Single linkage is recommended when the components are well-separated, while average linkage works better when there are overlapping clusters of approximately spherical shapes. 
To determine the resulting segmented skeleton graph, dendrograms can be useful in displaying the clustering structure at different resolutions, and analysts can experiment with different numbers of final clusters and choose a cut that preserves meaningful structures based on the dendrograms. However, it is important to note that the presence of noisy data points may require a larger number of final clusters $S$ to achieve better clustering results. 
Figure \ref{fig::YinyangVaryingCutIllustration} illustrates the segmented skeleton with a varying number of disjoint components.
\begin{figure}[ht]
\centering
\includegraphics[width=\linewidth]{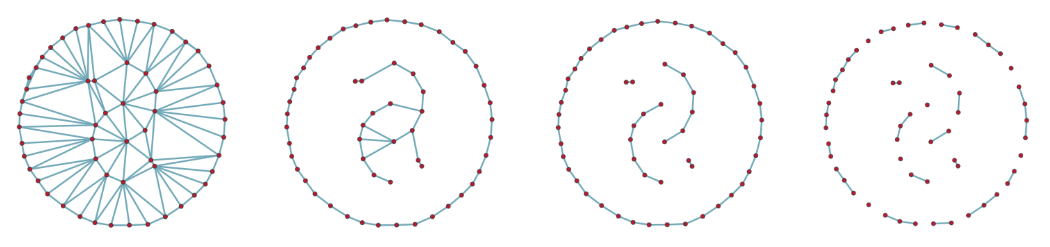}
\caption{Yinyang data skeleton cut into a varying number of disjoint components.}
\label{fig::YinyangVaryingCutIllustration}
\end{figure}

\section{Computational Complexity}	\label{sec::complexity}
~~~~In this section, we briefly analyze the computational costs of the proposed skeleton regression framework. 
The first main computational burden of the proposed regression procedure is at the skeleton construction step. \citet{skelclus} has provided the computational analysis on this. In particular, when constructing knots, the $k$-means algorithm of Hartigan and Wong \cite{Hartigan1979} has time complexity $O(ndkI)$, where $n$ is the number of points, $d$ is the dimension of the data, $k$ is the number of clusters for $k$-means, and $I$ is the number of iterations needed for convergence. For the edge construction step, the approximate Delaunay Triangulation only depends on the 2-NN neighborhoods,  and the k-d tree algorithm for the 2-nearest knot search gives the worst-case complexity of $O(nd k^{(1-1/d)})$.  For the edge weights with Voronoi density, the numerator can be computed directly from the $2$-NN search without additional computation, and the denominators as pairwise distances between knots can be computed with the worst-case complexity of $O(dk^2)$.

Given the skeleton, we then project original feature vectors onto the skeleton, which is not very time-consuming. Finding the edge to project on depends on identifying the two nearest knots, which is provided in the skeleton construction step. The projection takes inner product computations and takes $O(nd)$ for all the covariates.

The next computational task is to calculate the skeleton-based distance between points on the skeleton. 
Note that this step is not needed for the S-Lspline method but is necessary for S-Kernel and S-kNN.
To find the shortest path on a graph between two faraway knots, the general version of Dijkstra's algorithm \cite{Dijkstra1959} takes $ \Theta (|\calE|+|\calV|^{2})=\Theta (k^{2})$ for each run. However, in practice, we don't need the $\frac{n(n-1)}{2}$ pairwise distances between all the projected points as the skeleton-based regressors proposed can perform with distances in local neighborhoods, which do not require path-finding algorithm for the skeleton-distance calculation.

With all the pairwise skeleton-based distances between projected feature points given, the S-kernel estimate at one point takes $n_{loc}$ kernel weights computation where $n_{loc}$ refers to the local support of the kernel function. S-Lspline takes $O(n)$ time to transform the data and then a single run of matrix multiplication and inversion to get the coefficients.

\section{Proofs}	\label{sec::proof}


\subsection{Kernel Regression: Convergence on Edge Point (Theorem \ref{thm::edge})} \label{sec::contproof}


\begin{proof}
Let $\calB(\bs, h) \subset \calS$ be the support for the kernel function $K_h(.)$ at point $\bs \in \calS$ with bandwidth $h$.
For an edge point $\bs \in E_{j\ell} \in \calE$, where $\calE$ is the overall set of edges defined as open sets. As $n\to \infty, h\to 0$, for sufficiently large $n$, by the property of an open set, we have
$$\calB(\bs, h) \subset E_{j\ell} $$ 
and by our definition of skeleton distance, for two points $\bs, \bs' \in E_{j\ell}$ on the same edge in the skeleton, $d_\calS(\bs, \bs') = \norm{\bs-\bs'}$ where $\norm{.}$ denotes the Euclidean distance and is 1-dimensional as parametrized on the same edge. Also we have $$K_h(\bs_j, \bs_\ell) \equiv  K(d_\calS(\bs_j, \bs_\ell)/h) = K(\norm{\bs_j-\bs_\ell}/h) = K\left(\frac{\bs_j-\bs_\ell}{h}\right)$$

Consequently, the skeleton-based kernel regression estimator reduces to 
\begin{align}
    \hat{m}_n(\bs) = \frac{\frac{1}{n h}\sum_{j=1}^n Y_j K(\frac{\bs_j- \bs}{h})  }{\frac{1}{n h}\sum_{j=1}^n K(\frac{\bs_j- \bs}{h})}
\end{align}
and we can use the classical asymptotic results for kernel regression in the continuous case \cite{Bierens1983, wasserman2006all,Chen2017}.

Let $\hat{g}_n(\bs) = \frac{1}{nh}\sum_{j=1}^n K\left(\frac{\bs_j-\bs}{h}\right)$. 
We express the difference as
\begin{equation}
\begin{aligned}
     \hat{m}_n(\bs) - m_\calS(\bs) &= \frac{[\hat{m}_n(\bs) - m_\calS(\bs)] \hat{g}_n(\bs)}{\hat{g}_n(\bs)} = \frac{\frac{1}{n h}\sum_{j=1}^n [Y_j - m_\calS(\bs)] K(\frac{\bs_j- \bs}{h})  }{\frac{1}{n h}\sum_{j=1}^n K(\frac{\bs_j-\bs}{h})}
\end{aligned}
\label{eq::decom}
\end{equation}
and we analyze the denominator and numerator below.

Let $g(\bs)$ be the density at point $\bs$ on the skeleton.
For the denominator, we start with the bias:
\begin{align*}
    \abs{\EE \hat{g}_n(\bs) - g(\bs)} &= \abs{\frac{1}{h} \int K\bigg(\frac{\bs - y}{h}\bigg) g(y) d y - g(x) \int K(y) dy}\\
    &= \abs{\int K(z) [g(\bs - h z) - g(\bs)] dz}\\
    &\leq \int K(z) C_1 \abs{h z} dz = C_1 h \int K(z)  \abs{ z} dz = O(h),
\end{align*}
where $C_1$ is the Lipschitz constant of the density function.
For the variance, we have
\begin{align*}
    {\text Var}\left( \hat{g}_n(\bs) \right) &\leq \frac{1}{n h^2} \int K^2\bigg(\frac{\bs - y}{h}\bigg)g(y) dy\\
    &= \frac{1}{n h} \int K^2(z) g(\bs - h z) dz\\
    &\leq \frac{1}{n h} \int K^2(z) [g(\bs) + C_1 \abs{h z} ] dz\\
    &= \frac{1}{n h}\left[ g(\bs) \int K^2(z) dz + C_1 h \int K^2(z) \abs{ z}  dz \right]\\
    &= \frac{1}{n h} g(\bs) \int K^2(z) dz  + o\left(\frac{1}{n h}\right).
\end{align*}

Putting it all together, we have
\begin{align*}
    \abs{\hat{g}_n(\bs) - g(\bs)} = O(h) + O_p\left(\sqrt{\frac{1}{n h}}\right).
\end{align*}
Note that we only assume Lipschitz continuity and hence have the bias of rate $O(h)$ rather than the usual $O(h^2)$ rate with second-order smoothness. 
Higher-order smoothness of $g$ may not improve the overall
estimation rate due to the fact that
we only have Lipschitz continuity of the regression function.


Now we analyze the numerator of equation \eqref{eq::decom}.
We start with the decomposition
\begin{align*}
    &[\hat{m}_n(\bs) - m_\calS(\bs)] \hat{g}(\bs) =  \underbrace{\frac{1}{n h} \sum_{j=1}^n U_j K\bigg(\frac{\bs - \bs_j}{h}\bigg)}_{q_1(\bs)} +  \\
    & \underbrace{\frac{1}{n} \sum_{j=1}^n \bigg\{ [m_\calS(\bs_j) - m(\bs)]K\bigg(\frac{\bs - \bs_j}{h}\bigg) \frac{1}{h} - \EE\bigg[ [m_\calS(\bs_j) - m_\calS(\bs)]K\bigg(\frac{\bs - \bs_j}{h}\bigg) \frac{1}{h} \bigg]  \bigg\} }_{q_2(\bs)} +\\
    &  \underbrace{ \frac{1}{n} \sum_{j=1}^n \EE\bigg[ [m_\calS(\bs_j) - m_\calS(\bs)]K\bigg(\frac{\bs - \bs_j}{h}\bigg) \frac{1}{h} \bigg]}_{q_3(\bs)}.
\end{align*}

First, we show that
\begin{align*}
    q_1(\bs)  =  O_p\left(\sqrt{\frac{1}{n h} } \right).
\end{align*}
Let
\begin{align*}
    v_{n,j} (\bs) = U_j K\bigg(\frac{\bs - \bs_j}{h}\bigg)\frac{1}{\sqrt{h}}
\end{align*}
and we have
\begin{align*}
     \sqrt{n h} q_1(\bs) = \frac{1}{\sqrt{n}} \sum_{j=1}^n v_{n,j}(\bs).
\end{align*}
Thus, its mean is
\begin{align*}
    \EE v_{n,j}(\bs) &= \EE\left\{ U_j K\bigg(\frac{\bs - \bs_j}{h}\bigg)\frac{1}{\sqrt{h}} \right\}
    = 0
\end{align*}
and the variance is
\begin{align*}
    \EE [v_{n,j}(\bs)^2] &= \EE U_j^2 K\bigg(\frac{\bs - \bs_j}{h}\bigg)^2\frac{1}{h} = \int \sig_u^2(\bs - h z) g(\bs - h z) K(z)^2 dz\\
    &\to \sig_u^2(\bs) g(\bs) \int K(z)^2 dz = O(1),
\end{align*}
where for the second equality we use the change of variable and by assumption, we have $\int K(z)^2 dz <\infty$. Therefore, 
\begin{align*}
    q_1(\bs)  =  O_p\left(\sqrt{\frac{1}{n h }} \right).
\end{align*}

For the second term, note that $\EE (q_2(\bs)) = 0$ and the variance is
\begin{align*}
    \EE \left[\sqrt{n h} q_2(\bs)\right]^2 &= \int [m_\calS(\bs - h z) - m_\calS(\bs)]^2 g(\bs - h z) K(z)^2 dz \\
    &\ \ - h \bigg\{ \int [m_\calS(\bs - h z) - m_\calS(\bs)] g(\bs - h z) K(z) dz  \bigg\}^2\\
    &\to 0
\end{align*}
when $h\rightarrow0$, and hence,
\begin{align*}
    q_2(\bs)  =  o_p\left(\sqrt{\frac{1}{n h} } \right).
\end{align*}

For the last term, note that we have 
\begin{align*}
    q_3(\bs) &= \int[m_\calS(\bs - h z) - m_\calS(\bs)]g(\bs - h z) K(z)  dz \\
    &=\int [ m_\calS(\bs - h z)g(\bs - h z) - m_\calS(\bs) g(\bs) ] K(z) dz\\
    & \ \ \ \   - m_\calS(\bs) \int [g(\bs - h z) - g(\bs) ] K(z) dz\\
    &\leq C_1 h \int |z| K(z)dz + C_2 h \int |z| K(z) dz
\end{align*}
where $C_1$ is the Lipschitz constant for $m(\bs)g(\bs)$ and $C_2$ is the Lipschitz constant for $g(\bs)$. Therefore, 
\begin{align*}
     q_3(\bs) = O(h)
\end{align*}

Putting all three terms together, $[\hat{m}(\bs) - m(\bs)] \hat{g}(\bs)  = O(h) + O_p\left(\sqrt{\frac{1}{n h} } \right)$. 
As a result, equation \eqref{eq::decom} becomes
\begin{align*}
    \hat{m}_n(\bs) - m_\calS(\bs) &= \frac{[\hat{m}_n(\bs) - m_\calS(\bs)] \hat{g}(\bs)}{\hat{g}(\bs)} = \frac{O(h) + O_p\left(\sqrt{\frac{1}{n h} } \right)}{ g(\bs) + O(h) + O_p\left(\sqrt{\frac{1}{n h} } \right)}\\
    &= O(h) + O_p\left(\sqrt{\frac{1}{n h} } \right)
\end{align*}
by the Taylor expansion of the fraction. 

\end{proof}

\subsection{Kernel Regression: Convergence on Knot with Zero Mass (Proposition \ref{prop::zeroknot})}
\label{sec::zeroknotproof}


For the ease of proof, we first prove
Proposition \ref{prop::zeroknot}
and then prove Theorem \ref{thm::knotconsistency}.

\begin{proof}
Let $\bs \in \calV$ be a knot with no mass, i.e.,
$P(\bs_j = \bs) = 0$. 
The kernel regression can be decomposed
as
\begin{align*}
    \hat{m}(\bs)
    &= \frac{\frac{1}{n}\sum_{j=1}^n Y_j K_{h}(\bs_j, \bs) I(\bs_j \in \calE \cap  \calB(\bs, h))  + \frac{1}{n}\sum_{j=1}^n Y_j  I(\bs_j =\bs)}{\frac{1}{n}\sum_{j=1}^n K_{h}(\bs_j, \bs)I(\bs_j \in \calE \cap  \calB(\bs, h)) + \frac{1}{n}\sum_{j=1}^n  I(\bs_j =\bs) }\\
    &= \frac{\eps_{1,n}(\bs) + \nu_{1,n}(\bs) }{\eps_{2,n}(\bs) + \nu_{2,n}(\bs) }.
\end{align*}

Because $\bs$ is a point without probability mass,
$\nu_{1,n}(\bs)=\nu_{2,n}(\bs) =0$,
so the above can be further reduced to
\begin{align*}
    \hat{m}(\bs)
     &= \frac{\frac{1}{n h}\sum_{j=1}^n Y_j K_{h}(\bs_j, \bs) I(\bs_j \in \calE \cap  \calB(\bs, h))  }{\frac{1}{n h}\sum_{j=1}^n K_{h}(\bs_j, \bs)I(\bs_j \in \calE \cap  \calB(\bs, h))}.
\end{align*}

However, different from the case on edges, the support of the kernel intersects with multiple edges even when $h\rightarrow0$, so we study the contribution of each edge individually. 
Note that when $h\rightarrow0$, the only knot that exists in the intersection $\calB(\bs, h)\cap \calE$ is $\bs$.
So we only need to consider contributions of edges 
adjacent to $\bs$.

Let $\calI$ collect all the edge indices with one knot being $\bs$, i.e., $\ell \in \calI$ implies that there is an edge
between $\bs$ and $\bv_\ell \in \calV$.
Let $E_\ell$ be the edge connecting $\bs$ and $\bv_\ell$.
The indicator function $I(\bs_j \in \calE \cap  \calB(\bs, h)) = \sum_{\ell\in\calI}I(\bs_j \in E_\ell \cap  \calB(\bs, h)).$
With this, we can rewrite $\hat m(\bs)$ as 
\begin{equation*}
\begin{aligned}
    \hat m(\bs) &= \frac{ \sum_{\ell\in  \calI} \frac{1}{n h}\sum_{j=1}^n Y_j K_{h}(\bs_j, \bs) I(\bs_j \in E_\ell \cap  \calB(\bs, h))  }{\sum_{\ell\in \calI}\frac{1}{n h}\sum_{j=1}^n K_{h}(\bs_j, \bs)I(\bs_j \in E_\ell \cap  \calB(\bs, h))}\\
    & = \frac{\sum_{\ell\in \calI} \hat m_{n,\ell}(\bs) \hat g_{n,\ell}(\bs)}{\sum_{\ell\in\calI} \hat g_{n,\ell}(\bs)}.
    \end{aligned}
\end{equation*}
where
\begin{align*}
    \hat{g}_{n, \ell}(\bs) &= \frac{1}{n h}\sum_{j=1}^n K\left(\frac{\bs_j-\bs}{h}\right)I(\bs_j \in E_\ell \cap  \calB(\bs, h)),\\
\hat{m}_{n,\ell}(\bs) \cdot \hat{g}_{n, \ell}(\bs) &= \frac{1}{n h}\sum_{j=1}^n Y_j K\left(\frac{\bs_j-\bs}{h}\right) I(\bs_j \in E_\ell \cap  \calB(\bs, h)).
\end{align*}
Thus, we will analyze $\hat g_{n,\ell}(\bs)$
and $\hat m_{n,\ell}(\bs) \hat g_{n,\ell}(\bs)$. 
For a point $\bs_j$ on the edge $E_\ell$, we can reparamterize it as $\bs_j =  T_j \bv_\ell  + (1-T_j) \bs$ for some  $T_j \in (0,1)$. 
The location $\bs$ corresponds to the case $T_j = 0$
and any $\bs_j\in E_\ell$ will be mapped to $T_j>0.$
With this reparameterization, 
we can write
\begin{align*}
    \hat{g}_{n, \ell}(\bs) 
    &= \frac{1}{n h}\sum_{j=1}^n K\left(\frac{T_j}{h}(\bv_\ell-\bs)\right)I(\bs_j \in E_\ell \cap  \calB(\bs, h)),\\
\hat{m}_{n,\ell}(\bs) \cdot \hat{g}_{n, \ell}(\bs) 
&= \frac{1}{n h}\sum_{j=1}^nY_j K\left(\frac{T_j}{h}(\bv_\ell-\bs)\right)I(\bs_j \in E_\ell \cap  \calB(\bs, h)).
\end{align*}

To study the limiting behavior when $h \rightarrow0$,
let $g_\ell(t) = g((1-t)\bs + t \bv_\ell)$, $g_\ell(0) = \lim_{x\downarrow 0} g_{\ell}(x)$;
$m_\ell(t) = m_\calS( (1-t)\bs + t \bv_\ell)$, $m_\ell(0) = \lim_{t \downarrow 0} m_\ell(t)  $;
and
$\sigma_\ell^2(t) = \EE(|U_j|^2 | \bs_j = (1-t) \bs + t \bv_\ell )$ , $\sigma_\ell^2(0) = \lim_{t \downarrow 0} \sigma_\ell^2(t) $.
Then with the new notations, we can write 
\begin{align*}
\EE(f(T_j(\bv_\ell-\bs))I(\bs_j \in E_\ell \cap  \calB(\bs, h))) &= \EE(f(\bs_j-s)I(\bs_j \in E_\ell \cap  \calB(\bs, h)))\\
    &=\int_{t>0}f(t) g_\ell(t)dt
\end{align*}
for any integrable function $f$.
The bias of the denominator can be written as
\begin{align*}
    \abs{\EE \hat{g}_{n,\ell}(\bs) -\frac{1}{2} g_\ell(0)} &= \abs{\frac{1}{h} \int_{t>0} K\bigg(\frac{t}{h}\bigg) g_\ell(t) d t  - g_\ell(0) \int_{z>0} K(z)}\\
    &= \abs{\int_{z>0} K(z) [g_\ell(h z) -  g_\ell(0)] dz } \\
    &\leq \int_{z>0} K(z) C_1 h z  dz \\
    &=  C_1 h  \int_{z>0} K(z) z dz = O(h).
\end{align*}
For stochastic variation, we have
\begin{align*}
    {\text Var}\left( \hat{g}_{n,\ell}(\bs) \right) &\leq \frac{1}{n h^2} \int_{t>0} K^2\bigg(\frac{t}{h}\bigg)g_\ell(t) dt\\
    &= \frac{1}{n h} \int_{z>0} K^2(z) g(h z) dz\\
    &\leq \frac{1}{n h} \int_{z>0} K^2(z) [g(0) + C_1 \abs{h z} ] dz\\
    &= \frac{1}{n h}\left[ g(0) \int_{z>0} K^2(z) dz + C_1 h \int_{z>0} K^2(z) \abs{ z}  dz \right]\\
    &= O\left(\frac{1}{n h}\right).
\end{align*}
Thus, 
\begin{align*}
    \hat{g}_n(\bs) = \sum_{\ell \in \calI} \hat{g}_{n,\ell}(\bs) = \frac{1}{2} \sum_{\ell \in \calI}   g_\ell(0) + O(h) + O_p\left(\sqrt{\frac{1}{n h}}\right)
\end{align*}

For the numerator, 
\begin{align*}
    \hat{m}_{n,\ell}(\bs) \hat{g}_{n,\ell}(\bs) &=  \underbrace{\frac{1}{n h} \sum_{j=1}^n U_j K\bigg(\frac{t_j}{h}\bigg)I(\bs_j \in E_\ell \cap  \calB(\bs, h))}_{Q_1} \\
    &\qquad+  \underbrace{\frac{1}{n h} \sum_{j=1}^n   m_\calS(\bs_j) K\bigg(\frac{t_j}{h}\bigg)I(\bs_j \in E_\ell \cap  \calB(\bs, h))}_{Q_2},
\end{align*}
where $U_j  = Y_j - m_\calS(\bs_j)$.
Using the fact that $\EE(U_j|\bs_j) = 0$,
$\EE(Q_1)=0$, and the variance is
\begin{align*}
    {\text Var}(Q_1) &\leq \frac{1}{n h^2 } \int_{t > 0} \sig_\ell^2(t) K^2\bigg(\frac{t}{h}\bigg) g_\ell(t) dt \\
    &= \frac{1}{n h } \int_{z > 0} \sig_\ell^2(h z) K^2(z) g_\ell(h z) dz\\
    &= \frac{1}{n h } \int_{z > 0} \sig_\ell^2(0) K^2(z) g_\ell(0) dz + O\left(\frac{1}{n h}\right) = O\left(\frac{1}{n h}\right) .
\end{align*}
For $Q_2$, we have
\begin{align*}
    \abs{\EE (Q_2) - \frac{m_\ell(0) g_\ell(0)}{2}  } &= \abs{\frac{1}{h}\int_{t>0} m_\ell( t) K(t/h) g(t)  dt - m_\ell(0) g_\ell(0) \int_{z>0}  K(z)  dz } \\
    &=\abs{\int_{z>0} m_\ell( h z) K(z) g_\ell(h z)  dz - m_\ell(0) g_\ell(0) \int_{z>0}  K(z)  dz } \\
    &\leq \int_{z>0} \bigg\{ \big[m_\ell(0)+ C_2 h z\big]  \big[g_\ell(0) + C_1 h z\big] - m_\ell(0) g_\ell(0)\bigg\} K(z)  dz \\
    &\leq [C_1 m_\ell(0)+C_2g_\ell(0)] h \int_{z>0}  K(z) z dz + o(h)=  O(h).
\end{align*}
The variance of $Q_2$ is bounded via
\begin{align*}
    {\text Var}(q_2) &\leq \frac{1}{n h^2 } \int_{t > 0} m_\ell^2(t) K^2\bigg(\frac{t}{h}\bigg) g_\ell(t) dt\\
    &= \frac{1}{n h} \int_{z > 0} m_\ell^2(h z) K^2(z) g_\ell(h z) dz\\
    &\leq \frac{1}{n h} \int_{z>0} \left\{m_\ell(0)+ C_2 \abs{h z}\right\}^2  K^2(z) \left\{g_\ell(0) + C_1 \abs{h z}\right\}  dz \\
    &= \frac{1}{n h} \left\{ m^2_\ell(0) g_\ell(0) \int_{z>0} z K^2(z)   dz  + O(h)\right\}\\
    &= O\left(\frac{1}{n h}\right) 
\end{align*}
Putting the terms $Q_1$ and $Q_2$ together, we have
\begin{align*}
    \hat{m}_{n,\ell}(\bs) \hat{g}_{n,\ell}(\bs) = \frac{1}{2} m_\ell(0) g_\ell(0) + O(h) + O_p\left(\sqrt{\frac{1}{n h}}\right).
\end{align*}
As a result, we conclude that
\begin{align*}
    \hat{m}(\bs) &= \frac{\sum_{\ell \in \calI}\hat{m}_{n,\ell}(\bs)  \hat{g}_{n,\ell}(\bs)}{\sum_{\ell \in \calI} \hat{g}_{n,\ell}(\bs) }\\
    &= \frac{ \frac{1}{2} \sum_{\ell \in \calI}   m_\ell(0) g_\ell(0) + O(h) + O_p\left(\sqrt{\frac{1}{n h}}\right) }{\frac{1}{2} \sum_{\ell \in \calI}   g_\ell(0) + O(h) + O_p\left(\sqrt{\frac{1}{n h}}\right)} \\
    &=  \frac{ \frac{1}{2} \sum_{\ell \in \calI}   m_\ell(0) g_\ell(0)  }{\frac{1}{2} \sum_{\ell \in \calI}   g_\ell(0) } + O(h) + O_p\left(\sqrt{\frac{1}{n h}}\right)\\
    &= \frac{ \sum_{\ell \in \calI}   m_\ell(0) g_\ell(0)  }{ \sum_{\ell \in \calI}   g_\ell(0) } + O(h) + O_p\left(\sqrt{\frac{1}{n h}}\right),
\end{align*}
which completes the proof.
\end{proof}

\subsection{Kernel Regression: Convergence on Knot with Nonzero Mass (Theorem \ref{thm::knotconsistency})}
\label{sec::knotproof}

\begin{proof}
Let $\bs \in \calV$ be a point where $P(\bs_j =\bs) = p(\bs)>0$.
Recall that the kernel regression can be expressed as
\begin{align*}
    \hat{m}(\bs)
    &= \frac{\frac{1}{n}\sum_{j=1}^n Y_j K_{h}(\bs_j, \bs) I(\bs_j \in \calE \cap  \calB(\bs, h))  + \frac{1}{n}\sum_{j=1}^n Y_j  I(\bs_j =\bs)}{\frac{1}{n}\sum_{j=1}^n K_{h}(\bs_j, \bs)I(\bs_j \in \calE \cap  \calB(\bs, h)) + \frac{1}{n}\sum_{j=1}^n  I(\bs_j =\bs) }\\
    &= \frac{\eps_{1,n}(\bs) + \nu_{1,n}(\bs) }{\eps_{2,n}(\bs) + \nu_{2,n}(\bs) }.
\end{align*}



We look at each term individually and note that we have the edge components terms identical to the proof of  Proposition \ref{prop::zeroknot}, so 
\begin{align*}
    \eps_{1,n}(\bs) &= h \left\{ \sum_{\ell \in \calI}   m_\ell(0) g_\ell(0)   + O(h)+ O_p\left(\sqrt{\frac{1}{n h}}\right) \right\} =  O(h)+ O_p\left(\sqrt{\frac{h}{n }}\right), \\
    \eps_{2,n}(\bs) &= h \left\{ \sum_{\ell \in \calI}   g_\ell(0) + O(h)+ O_p\left(\sqrt{\frac{1}{n h}}\right) \right\} = O(h)+ O_p\left(\sqrt{\frac{h}{n }}\right).
\end{align*}

For the terms on the knots, they are just a sample average,
so
\begin{align*}
    \nu_{2,n}(\bs)  = p(\bs)+ O_p\left(\sqrt{\frac{1}{n}}\right)
\end{align*}
and similarly
\begin{align*}
        \nu_{1,n}(\bs) &= \frac{1}{n}\sum_{j=1}^n \left(m_\calS(\bs\right) + U_j)  I(\bs_j =\bs)\\
        &= m_\calS(\bs) p(\bs) + O_p\left(\sqrt{\frac{1}{n}}\right).
\end{align*}

With the fact that $O_p\left(\sqrt{\frac{1}{n }}\right)$ dominates $O_p\left(\sqrt{\frac{h}{n }}\right)$, 
we conclude
\begin{align*}
    \hat{m}(\bs) &= \frac{O(h)+ O_p\left(\sqrt{\frac{ h}{n}}\right) + m_\calS(\bs)p(\bs) + O_p\left(\sqrt{\frac{1}{n}}\right)}{ O(h)+ O_p\left(\sqrt{\frac{h}{n }}\right) + p(\bs) + O_p\left(\sqrt{\frac{1}{n}}\right)}\\
    &= \frac{O(h)+ O_p\left(\sqrt{\frac{1}{n }}\right) }{ O(h)+ O_p\left(\sqrt{\frac{1}{n}}\right) + p(\bs) } + \frac{m_\calS(\bs)p(\bs) }{ O(h)+ O_p\left(\sqrt{\frac{1}{n }}\right) + p(\bs) }\\
    &= \frac{O(h)+ O_p\left(\sqrt{\frac{1}{n }}\right) }{ p(\bs) } + O\left[\left(\frac{O(h)+ O_p\left(\sqrt{\frac{1}{n }}\right) }{ p(\bs) }\right)^2\right]\\
    &\ \ \ \ + m_\calS(\bs)p(\bs) \left\{ \frac{1}{p(\bs)} + \frac{O(h)+ O_p\left(\sqrt{\frac{1}{n }}\right)}{ p(\bs)^2}  \right\}\\
    &= m_\calS(\bs) +  O(h)+ O_p\left(\sqrt{\frac{1}{n }}\right),
\end{align*}
which completes the proof.


\end{proof}

\subsection{Dual Path Algorithm for Generalized Lasso Problem}
\label{sec:lassoSolution}

For the generalized Lasso problem:
\begin{align*}
     {\sf minimize}_{\bbeta} \norm{y - X \beta}_2^2 + \lam \norm{ D \beta}_1
\end{align*}
If $D$ is invertible or the matrix $D$ has dimension $m \times p$ with $\text{rank}(D) = m$, this can be converted into a standard Lasso problem by setting $\theta = D \beta$, and the problem reduces to
\begin{align*}
     {\sf minimize}_{\bbeta} \norm{y - X D^{-1} \theta}_2^2 + \lam \norm{ \theta}_1
\end{align*}
However, this is not the case for the incidence matrix with the number of edges larger than the number of nodes.
Hence, we turn to the Lagrange dual problem. Let $\text{rank}(D) = m$, then we want to solve
\begin{align*}
    \text{minimize}_{u \in \RR^m } \frac{1}{2} \brac{X^T y - D^T u}\brac{X^T X}^{+} \brac{X^T y - D^T u}\\
    \text{subject to} \norm{u}_{\infty} \leq \lambda, D^T u \in \text{row}(X)
\end{align*}
We then follow the dual path algorithm by \cite{SolutionPathLasso}.
For notation, use $A^+$ to denote the Moore-Penrose pseudo-inverse of matrix $A$, and use subscript $-\calB$ to index over all rows or coordinates except those in set $\calB$. The algorithm is described in Algorithm \ref{alg::lassoPath}.

\begin{algorithm}
\caption{Dual path algorithm for generalized Lasso problem}
\label{alg::lassoPath}
Start with $k = 0, \lam_0 = \infty, \calB = \emptyset, s = \emptyset$.
While $\lam_k > 0$:
\begin{algorithmic}
\State 1. Compute a solution at $\lam_k$ by least squares as
\begin{align}
    \hat{u}_{\lam_k, -\calB} = \brac{D_{-\calB} D_{-\calB}^T}^{+} D_{-\calB}\brac{y - \lam_k D_{\calB}^T s}
\end{align}
\State 2. Compute the next hitting time $h_{k+1}$ by
\begin{align}
    t_i^{(hit)} = \frac{\sbrac{ \brac{D_{-\calB} D_{-\calB}^T}^+ D_{-\calB} y}_i}{\sbrac{ \brac{D_{-\calB} D_{-\calB}^T}^+ D_{-\calB} D_{-\calB}^T s}_i \pm 1 }
\end{align}
where only one of $+1$ or $-1$ will yield a value in $\sbrac{0, \lam_k}$, and this is the ``hitting time'' of coordinate $i$. Hence the next hitting time is
\begin{align}
    h_{k+1} = \max_i t_i^{(hit)}
\end{align}
\State 3. Compute the next leaving time $\ell_{k+1}$ by first defining 
\begin{align}
    c_i &= s_i \cdot \sbrac{D_{\calB} \sbrac{I - D_{-\calB}^T \brac{D_{-\calB} D_{-\calB}^T}^+ D_{-\calB} } y }_i,\\
    d_i &=  s_i \cdot \sbrac{D_{\calB} \sbrac{I - D_{-\calB}^T \brac{D_{-\calB} D_{-\calB}^T}^+ D_{-\calB} } D_{\calB}^T s }_i
\end{align}
and then the leaving time of the $i$th boundary coordinate is
\begin{align}
    t_i^{(leave)} = 
    \begin{cases}
        c_i / d_i, \text{if } c_i <0 \text{ and } d_i <0,\\
        0, \text{otherwise}
    \end{cases}
\end{align}
Therefore, the next leaving time is
\begin{align}
    \ell_{k+1} = \max_i t_i^{(leave)}
\end{align}

\State 4. Set $\lam_{k+1} = \max \set{h_{k+1}, \ell_{k+1}}$. If $h_{k+1} > \ell_{k+1}$, then add the hitting coordinate to $\calB$ and its sign to $s$, otherwise remove the leaving coordinate to $\calB$ and its sign from $s$. Set $k = k+1$.
\end{algorithmic}
\end{algorithm}

\section{Additional Simulation Results}
\label{sec::extraSim}

\subsection{Vary Skeleton Cuts}
~~~~In this section, we examine the effect of cutting the skeleton into various numbers of disjoint components on the final regression performance. 
We use the same simulated datasets from Section \ref{sec::simulation}, including Yinyang data, Noisy Yinyang data, and SwissRoll data. 
The analysis procedure is mainly the same, where we use $5$-fold cross-validation SSE to evaluate the regression results for each dataset and repeat the process 100 times with randomly generated datasets. 
The main difference is that, during the skeleton construction step, we segment the skeleton graph into different disjoint components using single-linkage hierarchical clustering with respect to the Voronoi Density weights, as outlined in Section \ref{sec::skeletoncons}. 
We then fit and evaluate the skeleton-based regression methods on the skeletons that have been differently cut. 



\subsubsection{Vary Skeleton Cuts for Yinyang Data}
~~~~In this section, we investigate how cutting the skeleton into different numbers of disjoint components affects the performance of skeleton-based methods using the Yinyang data (from Section \ref{sec::Yinyang}). We randomly generate 1000-dimensional Yinyang data 100 times and use $5$-fold cross-validation to calculate the SSE on each dataset. We fit the skeleton-based methods in the same manner as in Section \ref{sec::simulation}, with the exception that the number of knots is fixed at 38 and we cut the initial graph into various numbers of disjoint components (ranging from 1 to 25) when constructing the skeleton. The median 5-fold cross-validation SSEs across the 100 datasets for different numbers of disjoint components are plotted in Figure \ref{fig::Yinyangd1000cut}.

Our results show that the S-Lspline method is sensitive to changes in the skeleton structure. In the case of Yinyang data, since there are 5 true disjoint structures in the covariate space, a cut of 5 results in the best regression performance. By design, S-Lspline regressors may incorporate unrelated information from one structure to another when an edge connects two structurally different areas, thus leading to a decline in the regression performance. For future research, incorporating edge weights into the S-Lspline regressor may help to mitigate the interference between different structures. The S-Kernel regressor also achieves optimal performance when the skeleton is segmented into 5 disjoint components. Skeleton-based kernel regression methods exhibit large changes in performance as the skeleton segmentation changes when the bandwidth is large. This is understandable as larger bandwidths allow more information from large distances, which are more likely to be non-informative as the segmentation changes. On the other hand, the S-kNN regressor has the best regression performance when the skeleton is left as a fully connected graph. This may be due to the locally adaptive nature of the k-nearest-neighbor method that ensures regression results are accurate as long as local neighborhoods are identified accurately.

\begin{figure}
\centering
\includegraphics[width = \textwidth]{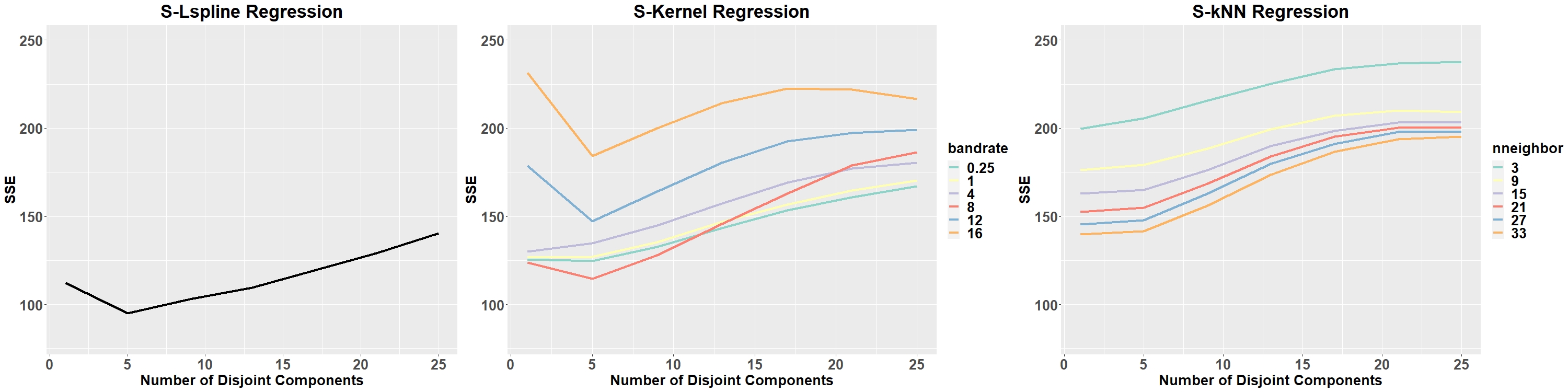}
\caption{Yinyang $d = 1000$ data skeleton regression results with the number of knots fixed as $38$ but segmented into varying numbers of disjoint components. The median SSE across the $100$ simulated datasets with each given parameter setting is plotted. 
}
\label{fig::Yinyangd1000cut}
\end{figure}

\subsubsection{Vary Skeleton Cuts for Noisy Yinyang Data}
~~~~We then evaluate the performance of the skeleton-based regression methods on the Noisy Yinyang data (from Section \ref{sec::NoisyYinyang}) when the skeletons are constructed with different numbers of disjoint components. Similarly, we randomly generate 1000-dimensional Noisy Yinyang data 100 times and use $5$-fold cross-validation to calculate the sum of squared errors (SSE) on each dataset. We fix the number of knots to be 71 and construct skeletons with different numbers of disjoint components. The median 5-fold cross-validation SSEs across the 100 datasets for different numbers of disjoint components are plotted in Figure \ref{fig::NoiseYinyangd1000cut}.

\begin{figure}
\centering
\includegraphics[width = \textwidth]{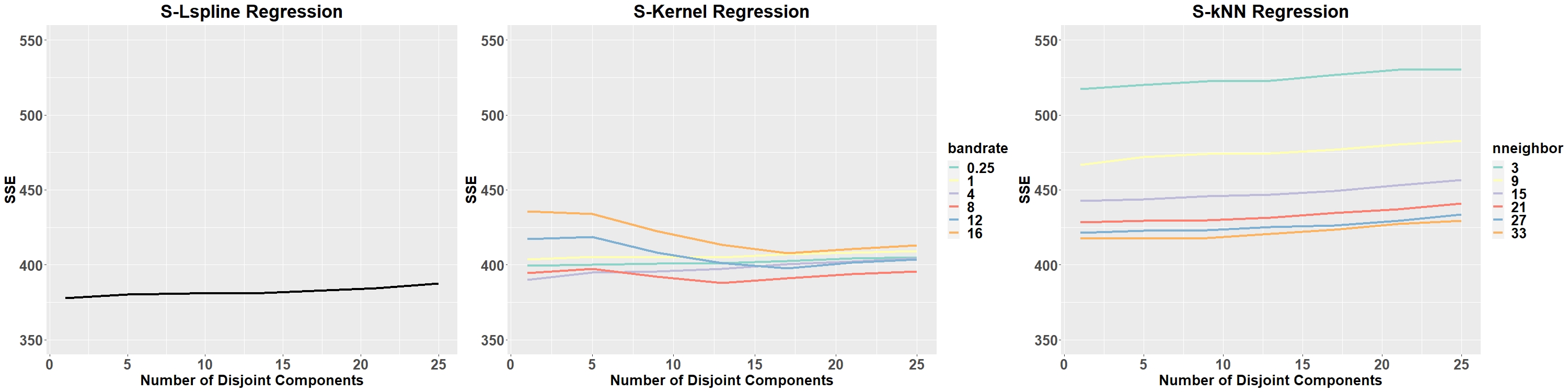}
\caption{Noise Yinyang $d = 1000$ data skeleton regression results with the number of knots fixed as $38$ but segmented into varying numbers of disjoint components. The median SSE across the $100$ simulated datasets with each given parameter setting is plotted. }
\label{fig::NoiseYinyangd1000cut}
\end{figure}

With the presence of noise, the S-Lspline method does not show significant variations in performance when the skeleton graph is cut into different disjoint components. The best regression result is obtained when the graph is left as a fully connected graph.
In contrast, the performance of the S-kernel method varies with the number of disjoint components. The best results, regardless of the bandwidth, are obtained when the skeleton is segmented into around 13 components, which is larger than the true number of 5 components in the data.
Lastly, the S-kNN method demonstrates an increase in SSE with an increase in the number of disjoint components.


\subsubsection{Vary Skeleton Cuts for SwissRoll data}
~~~~In this section, we evaluate the performance of skeleton-based methods on SwissRoll data (from Section \ref{sec::SwissRoll}) with skeletons cut into different numbers of disjoint components. Similarly, we randomly generate 1000-dimensional SwissRoll data 100 times and use $5$-fold cross-validation to calculate the sum of squared errors (SSE) on each dataset. We fix the number of knots to be 70 and construct skeletons with different numbers of disjoint components. The median 5-fold cross-validation SSEs across the 100 datasets for different numbers of disjoint components are plotted in Figure \ref{fig::SwissRolld1000cut}.

We find that the S-Lspline regressor is sensitive to changes in the skeleton structure, with the best regression results obtained when the skeleton is constructed as one connected graph. This makes sense as the covariates lie on one connected manifold.
The S-Kernel regressor also performs best on the fully connected skeleton. After an initial increase in SSE as the number of disjoint components increases, the SSE of the S-kernel regressor remains relatively stable.
The S-kNN regressor also achieves the best regression performance when the skeleton is left as a fully connected graph.
Overall, the SSE of the S-kNN regressor increases with the number of disjoint components, but for a small number of neighbors, there can be a decrease in SSE when the skeleton is cut into more disjoint components. One possible explanation is that, as the response function has discontinuous changes, segmenting the covariate space into more fragments can improve estimation in regions where the response changes abruptly.

\begin{figure}
\centering
\includegraphics[width = \textwidth]{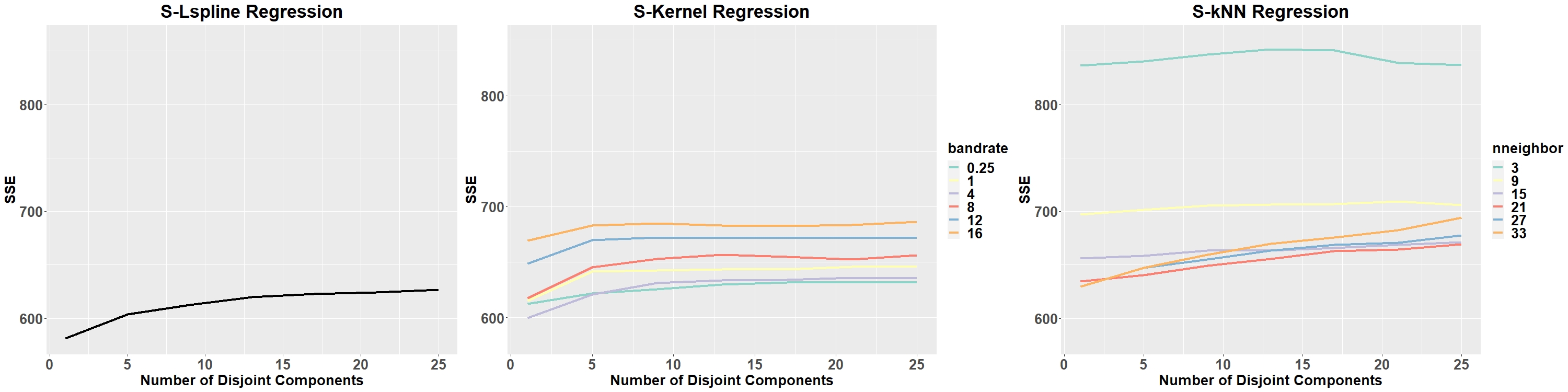}
\caption{SwissRoll $d = 1000$ data skeleton regression results with the number of knots fixed as $70$ but segmented into varying numbers of disjoint components. The median SSE across the $100$ simulated datasets with each given parameter setting is plotted.}
\label{fig::SwissRolld1000cut}
\end{figure}



\subsection{Penalized S-Lspline Empirical Results}
\label{sec:penalSplineSims}

~~~~In this section, we present the results of the S-Lspline regression with penalizations as introduced in Section \ref{sec:penalLspline}.
We use the same datasets as the simulations in Section \ref{sec::simulation} and follow the analysis procedure but with the regression methods to be the S-Lspline method with different types of penalties and varying penalization parameters $\lambda$.
Empirically, we observe that penalization does not improve the regression results of the S-Lspline model. 
This can be due to that the skeleton graph is already a summarizing presentation of the data with a concise structure, and the S-Lspline model based on the skeleton inherits the representational power and is not complex in nature, and hence penalization does not improve much for this method.

The results of the penalized S-Lspline methods on the Yinyang $d=1000$ data are summarized in Table \ref{table:Yinyangd1000PenalLspline} with the plots illustrating the effect of a varying number of knots and varying penalty parameters shown in Figure \ref{fig::Yinyangd1000PenalLspline}.

\begin{figure}
\begin{tabular}{l l c l} 
 \hline
 Method & Median SSE ($5$\%, $95$\%)  & lambda & nknots \\ [1ex] 
 \hline
  No Penalization& 110.4  (105.6,  118.5)  & - & 38 \\ 
  Laplacian Smoothing Order 0 & 110.4 (105.6, 118.5)  & 0.001 & 38 \\ 
 Laplacian Smoothing Order 1 & 111.2 (104.4, 117.8) & 0.001 & 38\\
 Laplacian Smoothing Order 2 & 110.2 (104.7, 118.9) & 0.001& 38\\
 Trend Filtering Order 0 & 110.6 (105.6, 118.9) & 0.001 &38\\
 Trend Filtering Order 1 & 111.3 (104.4, 118.0) & 0.001 & 38 \\
  Trend Filtering Order 2 & 110.3 (104.7, 120.5) & 0.001 & 38 \\[1ex] 
 \hline
\end{tabular}
\captionof{table}{S-Lspline regression results on Yinyang $d=1000$ data with varying penalties and parameters. The smallest medium 5-fold cross-validation SSE from each method is listed with the corresponding parameters used. The $5$th percentile and $95$th percentile of the SSEs from the given parameter settings are reported in brackets.}
\label{table:Yinyangd1000PenalLspline}
\vspace{2em}

\centering
\includegraphics[width=\textwidth]{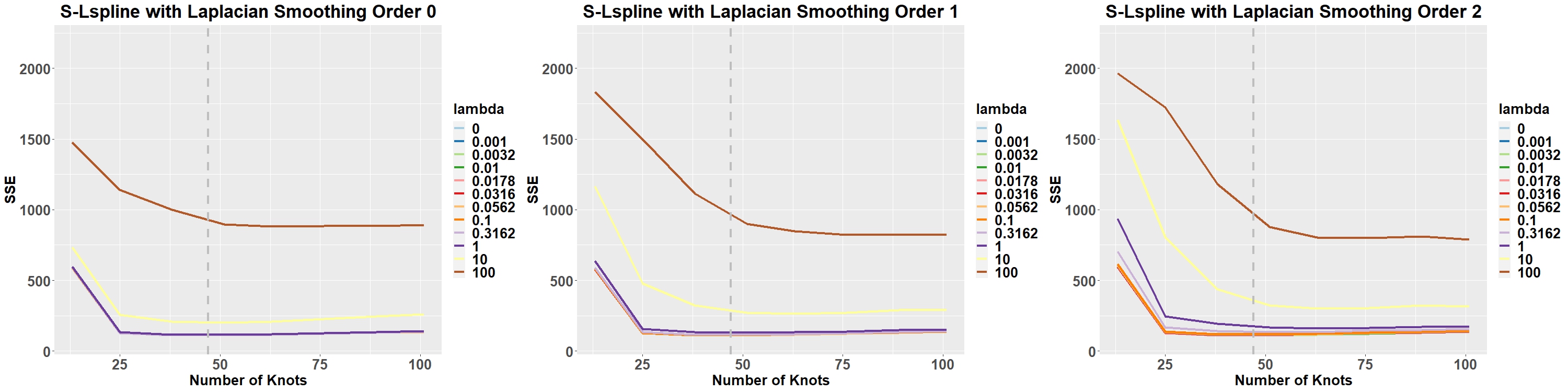}\\
\includegraphics[width=\textwidth]{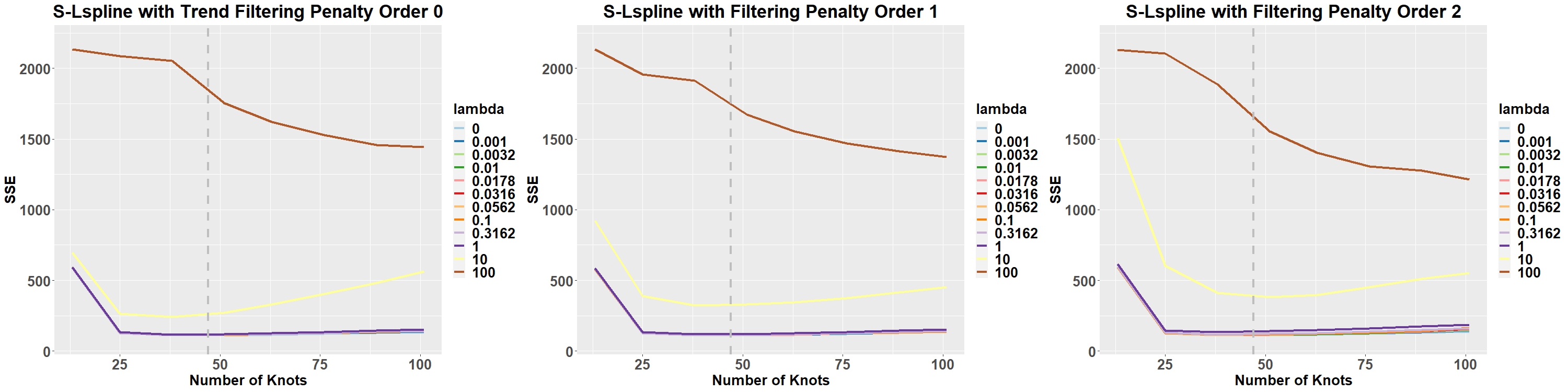}
\captionof{figure}{S-Lspline regression results on Yinyang $d=1000$ data with varying penalties and parameters. The median SSE across the $100$ simulated datasets with each given parameter setting is plotted.}
\label{fig::Yinyangd1000PenalLspline}

\end{figure}


The results of the penalized S-Lspline methods on the Noisy Yinyang $d=1000$ data are summarized in Table \ref{table:NoiseYinyangd1000PenalLspline} with plots in Figure \ref{fig::NoiseYinyangd1000PenalLspline}. We observe that adding penalization does not improve the regression results. 

\begin{figure}
\begin{tabular}{l l c l} 
 \hline
 Method & Median SSE ($5$\%, $95$\%)  & lambda & nknots \\ [1ex] 
 \hline
  No penalization & 375.0 (354.7, 396.3) & - &  57  \\ 
 Laplacian Smoothing Order 0 & 377.0 (355.1,394.4)  & 0.001 & 42 \\ 
 Laplacian Smoothing Order 1 & 375.0 (354.7, 396.3) & 0.001 & 57\\
 Laplacian Smoothing Order 2 & 377.2 (355.9, 403.3) & 0.001& 71\\
 Trend Filtering Order 0 & 377.0 (355.1, 394.4) & 0.003 &42\\
 Trend Filtering Order 1 & 375.0 (354.7, 398.0) & 0.001 & 57 \\
  Trend Filtering Order 2 & 378.0,  (355.9, 399.0) & 0.001 & 42 \\[1ex] 
 \hline
\end{tabular}
\captionof{table}{S-Lspline regression results on Noisy Yinyang $d=1000$ data with varying penalties and parameters. The smallest medium 5-fold cross-validation SSE from each method is listed with the corresponding parameters used. The $5$th percentile and $95$th percentile of the SSEs from the given parameter settings are reported in brackets.}
\label{table:NoiseYinyangd1000PenalLspline}
\vspace{2em}
\centering
\includegraphics[width=\textwidth]{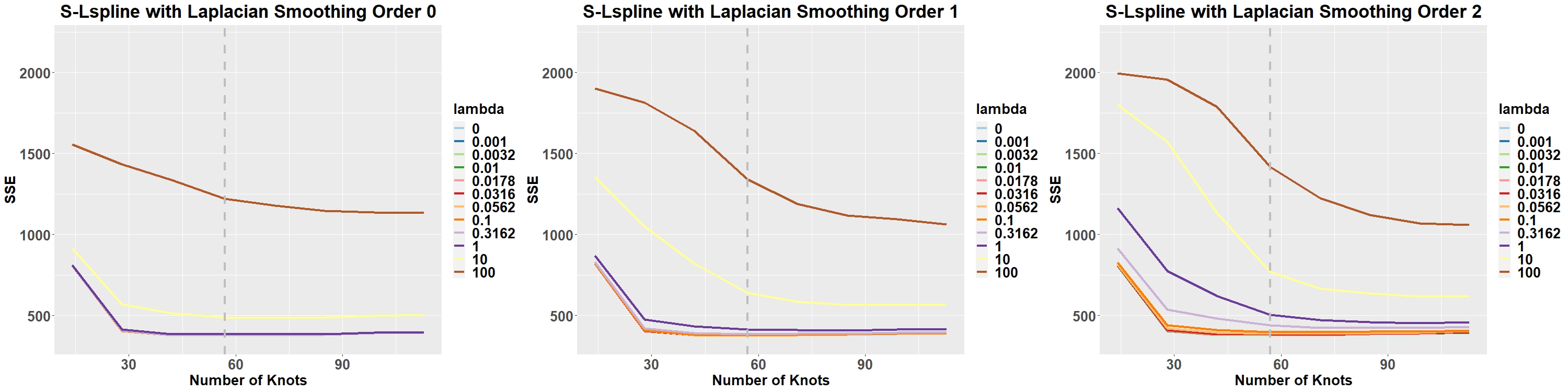}\\
\includegraphics[width=\textwidth]{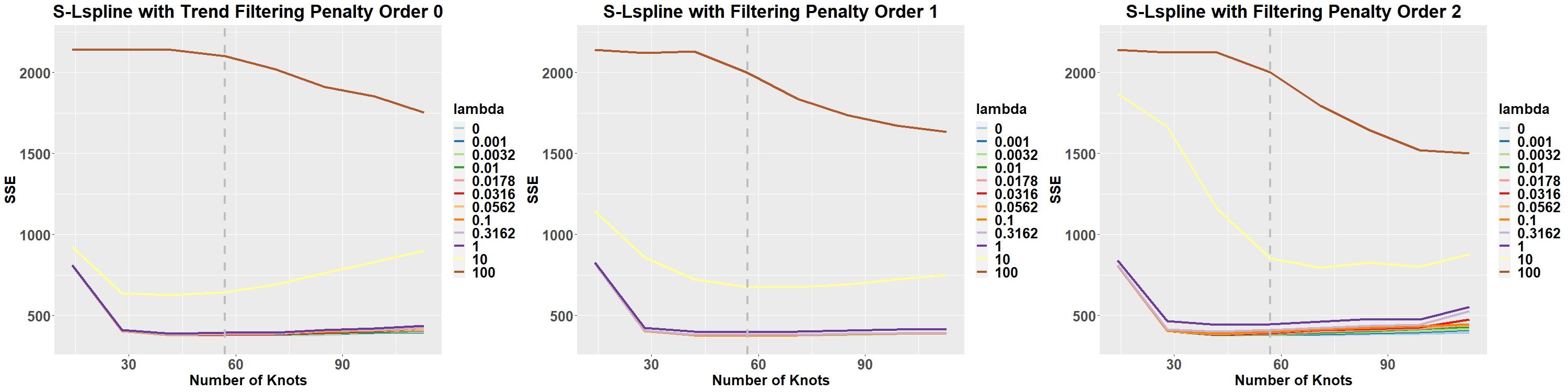}
\captionof{figure}{S-Lspline regression results on Noisy Yinyang $d=1000$ data with varying penalties and parameters. The median SSE across the $100$ simulated datasets with each given parameter setting is plotted.}
\label{fig::NoiseYinyangd1000PenalLspline}

\end{figure}


The results of the penalized S-Lspline methods on the Swill Roll $d=1000$ data are summarized in Table \ref{table:SwissRolld1000PenalLspline} with plots in Figure \ref{fig::SwissRolld1000PenalLspline}. Including penalization terms for the linear spline model does not improve the regression results in this setting. 

\begin{figure}
\begin{tabular}{l l c l} 
 \hline
 Method & Median SSE ($5$\%, $95$\%)  & lambda & nknots \\ [1ex] 
 \hline
  No penalization & 572.7 (521.0, 640.9) & - &  60  \\ 
 Laplacian Smoothing Order 0 & 573.4 (521.4, 632.6) & 0.001 & 60 \\ 
 Laplacian Smoothing Order 1 & 583.7 (524.7, 633.1) & 0.001 & 60\\
 Laplacian Smoothing Order 2 & 573.1 (521.5, 641.3) & 0.001& 60\\
 Trend Filtering Order 0 & 580.2 (524.6, 685.7)  & 0.01 &60\\
 Trend Filtering Order 1 & 583.7,  (524.6, 633.0) & 0.001 & 60 \\
  Trend Filtering Order 2 & 574.4,  (522.0,  657.1) & 0.001 & 60 \\[1ex] 
 \hline
\end{tabular}
\captionof{table}{S-Lspline regression results on Swiss Roll $d=1000$ data with varying penalties and parameters. The smallest medium 5-fold cross-validation SSE from each method is listed with the corresponding parameters used. The $5$th percentile and $95$th percentile of the SSEs from the given parameter settings are reported in brackets.}
\label{table:SwissRolld1000PenalLspline}
\vspace{2em}
\centering
\includegraphics[width=\textwidth]{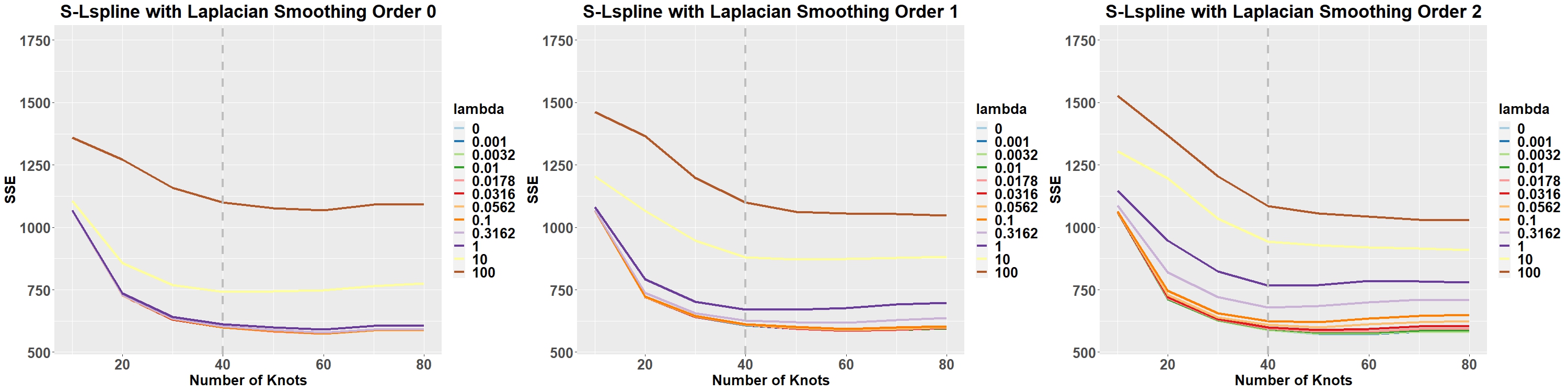}\\
\includegraphics[width=\textwidth]{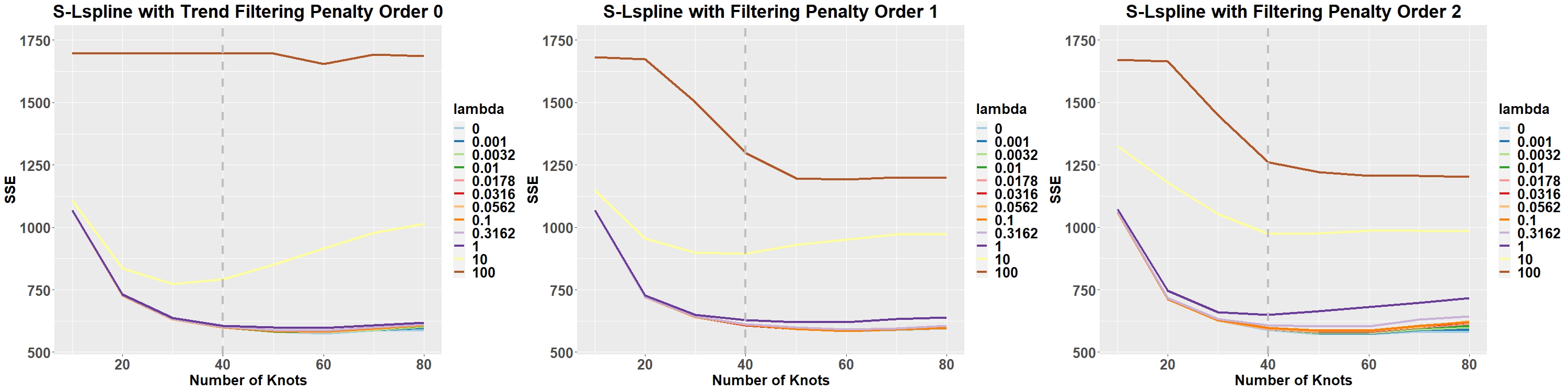}
\captionof{figure}{S-Lspline regression results on Swill Roll $d=1000$ data with varying penalties and parameters. The median SSE across the $100$ simulated datasets with each given parameter setting is plotted.}
\label{fig::SwissRolld1000PenalLspline}

\end{figure}

Overall, we observe that adding penalization terms to the linear spline model based on the skeleton graph has minimal effect on the regression performance, and having no penalization actually gives the best results although by a very tiny margin.
We also test the penalized versions of the S-Lspline method on the real data examples (COIL-20 and SDSS datasets) and similarly observe that having penalization terms only leads to minimal effects on the regression performance much and the vanilla version of the S-Lspline can give the best result for most of the times.

\subsection{Multi-Layer Perceptron Autoencoder Regressor}
\label{sec:MLPAE}

This subsection summarizes the multi-layer perceptron (MLP) Autoencoder methodology and experimental results on high-dimensional synthetic data regression. 

\subsubsection{Autoencoder and Regression Architecture}

~~~~The MLP Autoencoder implementation uses two separate \texttt{MLPRegressor} networks\footnote{\url{https://scikit-learn.org/stable/modules/generated/sklearn.neural_network.MLPRegressor.html}} as encoder and decoder. The encoder maps high-dimensional input to a low-dimensional bottleneck representation, while the decoder reconstructs the original input from the bottleneck. 

We experiment with a 3-layer architecture and a 4-layer architecture .
The 3-layer architecture performs direct compression and decompression, that the encoder maps directly from input dimension $D$ to hidden dimension $h$, and the decoder maps directly from $h$ back to $D$:
\begin{equation}
\text{Input } (D=1000) \rightarrow \text{Hidden } (h) \rightarrow \text{Output } (D=1000)
\end{equation}

The 4-layer architecture uses symmetric intermediate layers for gradual compression, which allows for more hierarchical feature learning through gradual dimensionality reduction.:
\begin{equation}
\text{Input } (D) \rightarrow \text{Hidden}_1 \left(\frac{D+h}{2}\right) \rightarrow \text{Hidden } (h) \rightarrow \text{Hidden}_1 \left(\frac{D+h}{2}\right) \rightarrow \text{Output } (D)
\end{equation}

The Ridge regression model predicts the response $Y$ from the learned embeddings with penalty $\alpha$:
\begin{equation}
\hat{Y} = \text{Ridge}(\text{Encoder}(X); \alpha)
\end{equation}

\subsubsection{Analysis Procedure}

~~~~The autoencoder training uses an alternating optimization strategy by first initialize hidden representation with small random values, and then iterate between training encoder to map input to hidden representation and training decoder to reconstruct input from hidden representation. The model uses ReLU activation and uses Adam optimizer as the solver, with learning rate $0.001$, maximum iterations $1000$, and stops for a validation fraction of $0.1$ with patience $30$.

Similar to the other simulation data studies, we randomly regenerate each simulated dataset for analysis, and for each replication we explore different combinations of hidden dimensions $h \in \{2, 5, 10, 20, 50\}$ and penalization parameters  $\alpha \in \{0.001, 0.01, 0.1, 1.0, 10.0, 100.0\}$.
We use 5-fold cross validation to measure the model performance with the Sum of Squared Errors (SSE) as the regression evaluation metric.

\subsubsection{Noisy Yinyang Regression Data}

We first show the results on the noisy Yinyang dataset as described in Section \ref{sec::NoisyYinyang}. For this challenging dataset that need model to discern underlying structures among noise, we fit autoencoder with both the 3-layer archetecture and the 4-layer archetecture. For the 3-layer architecture, we randomly regenerate the dataset $100$ times like the other methods included in this paper. However, for the 4-layer architecture, due to the increased computation cost, we only replicate the analysis procedure with hyperparameter grid search 5 times on the regenerated dataset. The results are presented below in Figure \ref{fig::MLP_AE_NoiseYinyangd1000}.

\begin{figure}[ht]
\centering
\includegraphics[width=\textwidth]{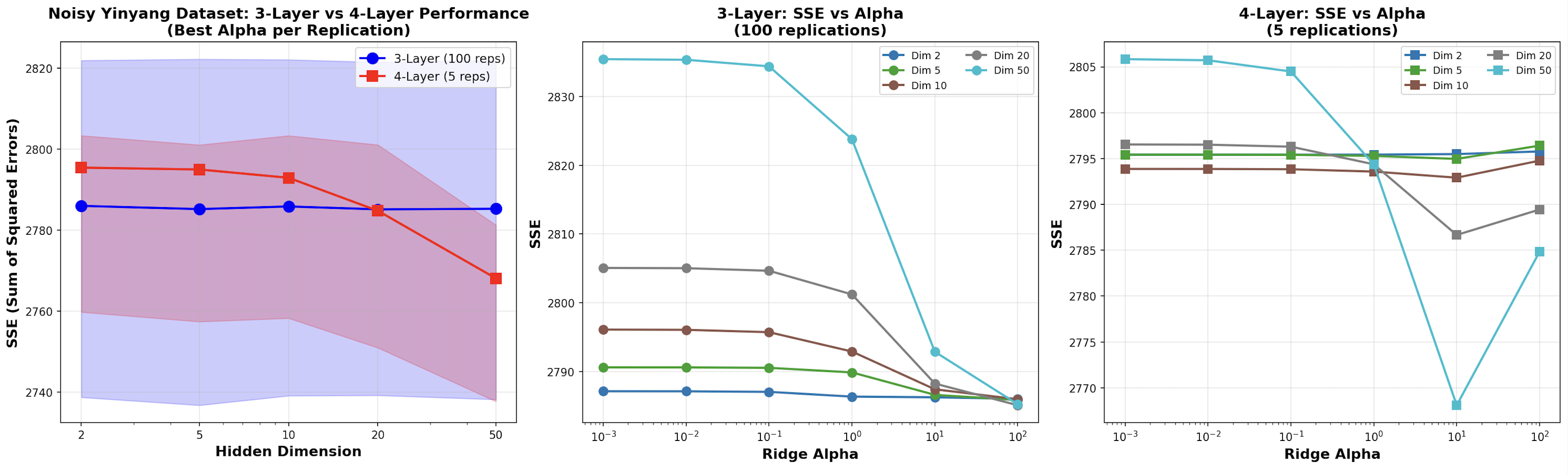}
\captionof{figure}{Regression results on the Noisy Yinyang $d=1000$ data with varying hidden dimensions and penalization paramters. The median SSE across the regenerated datasets is plotted, with the band in the first subplot for the 5th and 95th quantile of the SSE.}
\label{fig::MLP_AE_NoiseYinyangd1000}
\end{figure}

Overvall we see larger regularization gives better performance. Note that although the 4-layer architecture can have more potential with higher hidden dimensions, the SSE performance is not comparable to the other methods as described in Section \ref{sec::NoisyYinyang} and in scale similar to the 3-layer architecture. Hence, for sake of computation, we focus on training the 3-layer autoencoder for the other simulated datasets for analysis.

\subsubsection{Yinyang Regression Data}
The analysis results on the Yinyang regression dataset as described in Section \ref{sec::Yinyang} using the 3-layer autoencoder architecture is shown in Figure \ref{fig::MLP_AE_Yinyangd1000}.
\begin{figure}[ht]
\centering
\includegraphics[width=\textwidth]{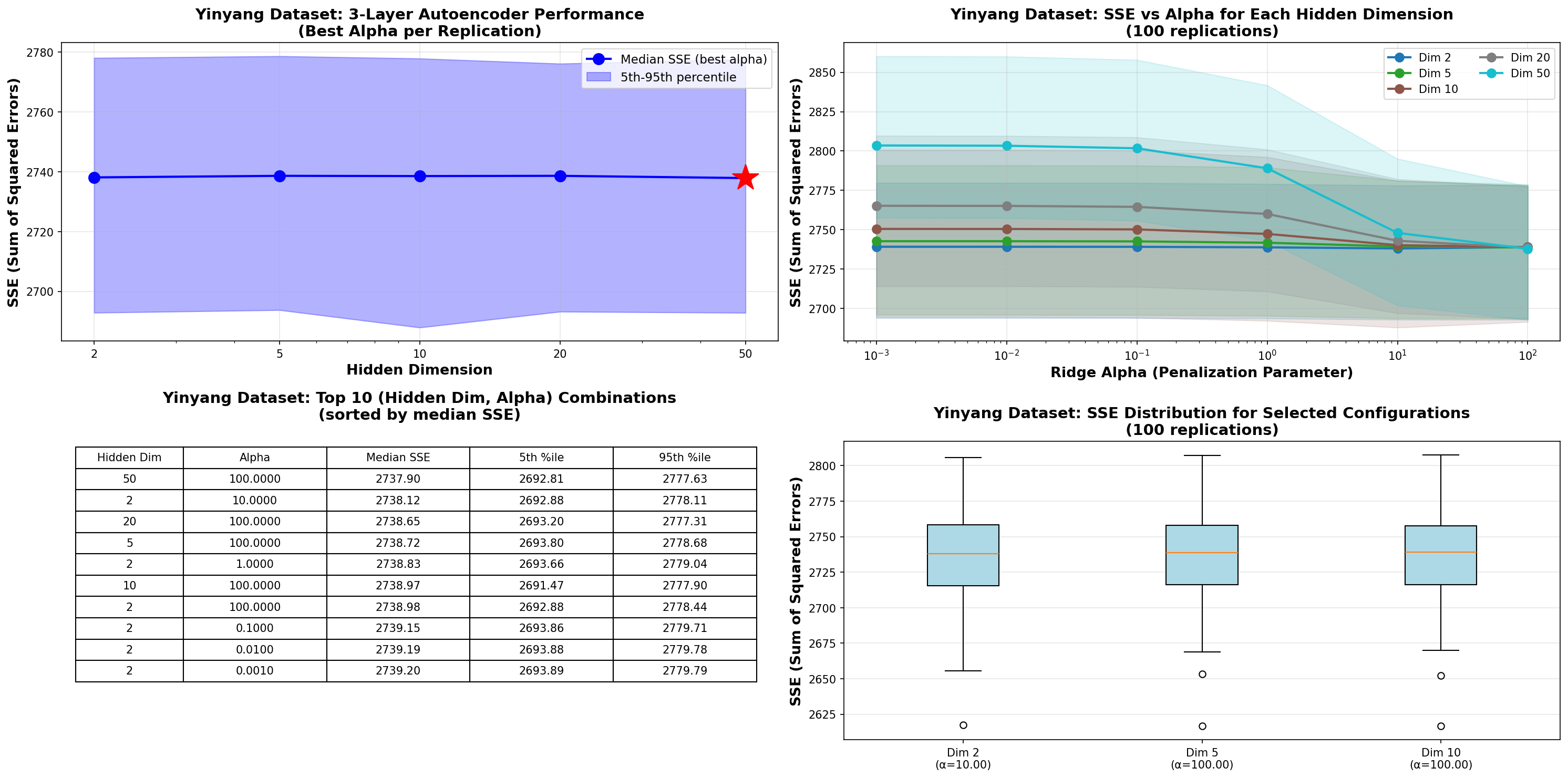}
\captionof{figure}{ Regression results on the Yinyang $d=1000$ data with varying hidden dimensions and penalization parameters.}
\label{fig::MLP_AE_Yinyangd1000}
\end{figure}

Note that penalization parameter has some effect on the regression performance when the hidden dimension is high. However, the best performance is always achieved with large regularization.

\subsubsection{Swiss Roll Regression Data}
The analysis results on the Swiss Roll regression dataset as described in Section \ref{sec::SwissRoll} using the 3-layer autoencoder architecture is shown in Figure \ref{fig::MLP_AE_SwissRolld1000} below.

\begin{figure}[ht]
\centering
\includegraphics[width=\textwidth]{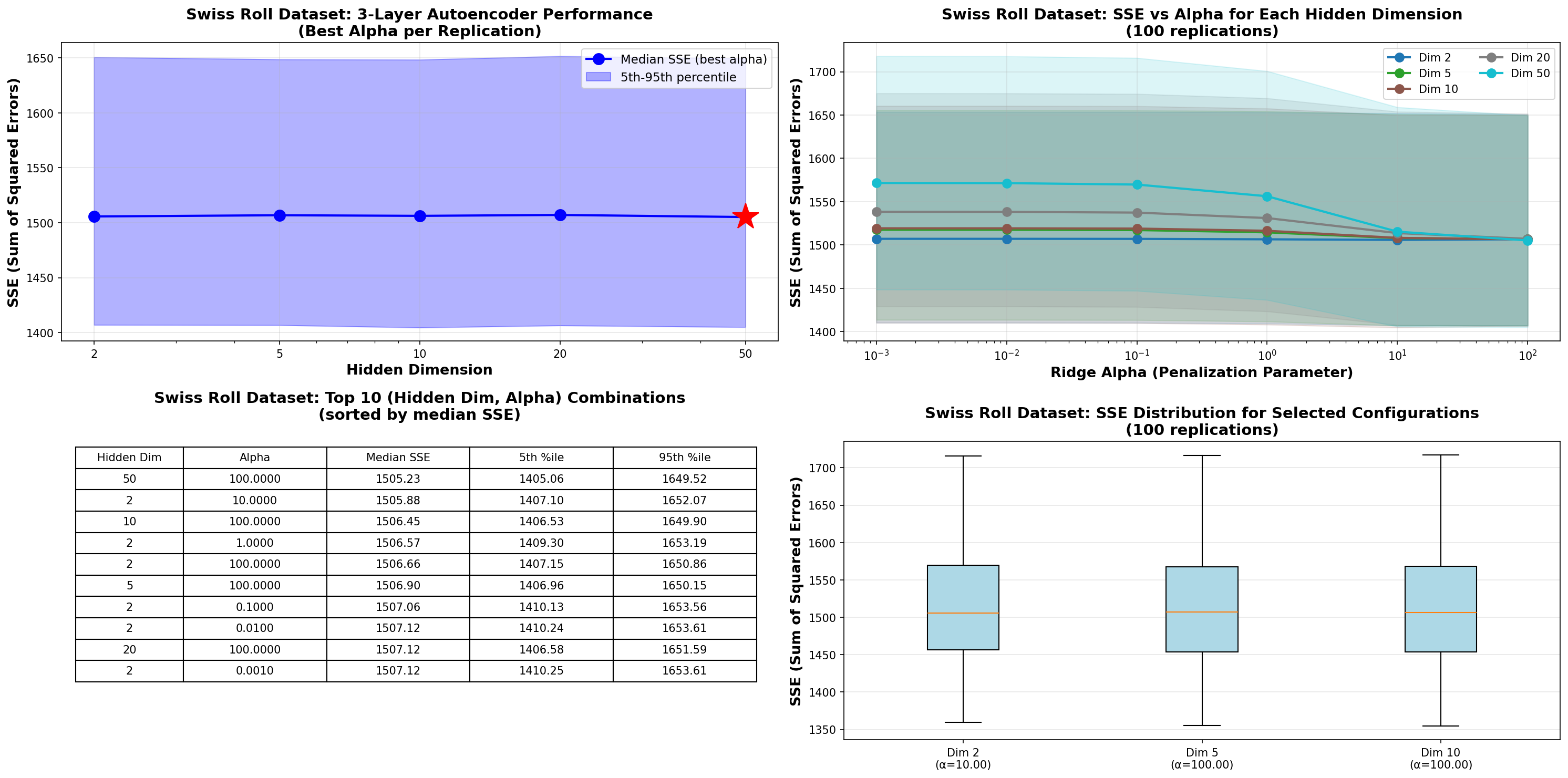}
\captionof{figure}{ regression results on Swill Roll $d=1000$ data with varying penalties and parameters. The median SSE across the $100$ simulated datasets with each given parameter setting is plotted.}
\label{fig::MLP_AE_SwissRolld1000}
\end{figure}

\newpage
\subsection{Low Dimensional Simulation Data Examples}
For the simulations in Section \ref{sec::simulation}, we add random Gaussian variables to make the datasets have a large dimension, and illustrate that the skeleton regression framework can have advantages in dealing with such large-dimensional datasets. 
In this section, we look instead at the low-dimensional settings where the datasets do not have noisy variables. We show that the skeleton-based regression methods can also have competitive performance in such settings compared to existing regression approaches.

\subsubsection{Yinyang Data d = 2}
\label{sec::Yinyangd2}

~~~~We use the same data generation mechanism as in Section \ref{sec::Yinyang} but without additional noisy variables (having $d = 2$).
We follow the same analysis procedure as in Section \ref{sec::Yinyang} and take the median, 5th percentile, and 95th percentile of the 5-fold cross-validation Sum of Squared Errors (SSEs) for each parameter setting of each method over the 100 simulated datasets. We present the smallest median SSE for each method in Table \ref{table:Yinyangd2} along with the corresponding best parameter setting. The plots illustrating the effect of varying the numbers of knots are included in Figure \ref{fig::Yinyangd2Numknots}.

\begin{figure}
\begin{tabular}{c l c l} 
 \hline
 Method & Median SSE ($5$\%, $95$\%) & nknots & Parameter \\ [1ex] 
 \hline
 kNN & 60.1 (57.1, 63.0) & - & neighbor=36 \\ 
 Ridge & 1355.3  (1312.0,  1392.5) & & $\lambda  = 0.001$\\
 Lasso & 1354.8 (1311.4, 1391.9) & & $\lambda = 0.001$\\
 SpecSeries & 71.2  (67.2, 74.1) & - &bandwidth = 0.1\\
 Fast-KRR & 115.9 (108.4, 124.2) & - & $\sigma$ = 1\\
 S-Kernel & 60.4 (57.3,  64.7) & 63 & bandwidth = 4 $r_{hns}$ \\
 S-kNN & 60.9 (58.0, 63.9) & 76 & neighbor = 36 \\
 S-Lspline & 60.0 (57.0, 63.7) & 63 & -  \\[1ex] 
 \hline
\end{tabular}
\captionof{table}{Regression results on Yinyang $d=2$ data. The smallest medium 5-fold cross-validation SSE from each method is listed with the corresponding parameters used. The $5$th percentile and $95$th percentile of the SSEs from the given parameter settings are reported in brackets.}
\label{table:Yinyangd2}
\vspace{2em}

\centering
\includegraphics[width=\textwidth]{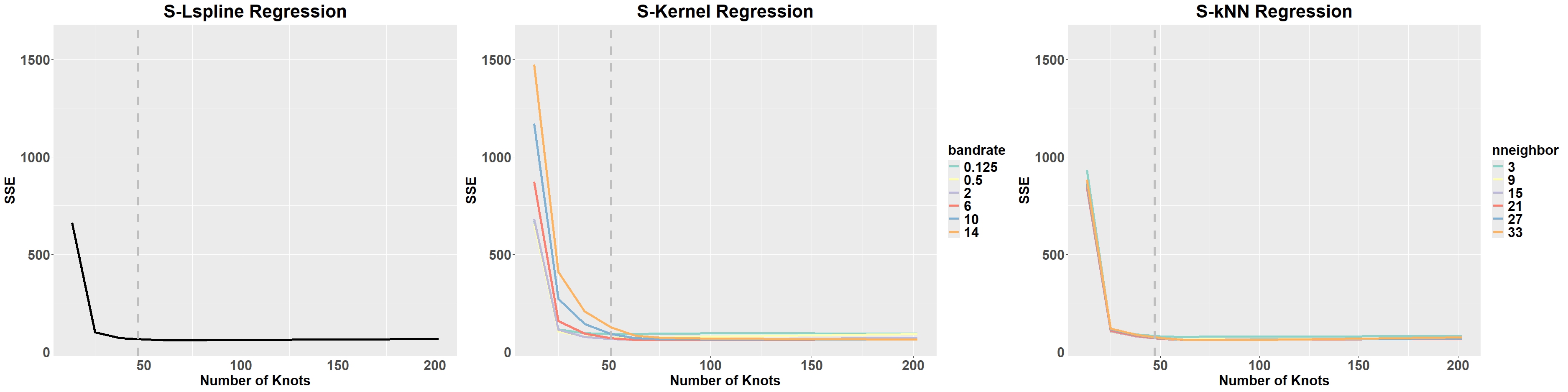}
\captionof{figure}{Yinyang $d = 2$ data regression results with varying number of knots. The median SSE across the $2$ simulated datasets with each given parameter setting is plotted.}
\label{fig::Yinyangd2Numknots}

\end{figure}

We observe that all the skeleton-based methods (S-Kernel, S-kNN, and S-Lspline) have performance comparable to the standard kNN in this setting. Ridge and Lasso regression, despite the regularization effect, resulted in relatively high SSEs. 
The SpecSeries method as a spectral approach and the Fast-KRR method as a kernel machine learning approach have improved performance compared to the classical Ridge and Lasso penalization regression methods, but do not give the top performances. This can be due to the underlying data structure being comprised of multiple disconnected components which diminishes the power of such manifold learning methods.
Therefore, the skeleton regression framework also gives competitive performance in datasets without noisy variables, but the advantage of skeleton-based methods is manifested more if the number of noisy variables in the input vector gets larger (see Appendix \ref{sec::Yinyang}).


\subsubsection{Noisy Yinyang Data d = 2}
\label{sec::NoisyYinyangd2}
~~~~We follow the same data generation mechanism as in Section \ref{sec::NoisyYinyang} to get Noisy Yinyang data without the additional variable dimensions and only have $d = 2$. 
We follow the same analysis procedure as in Section \ref{sec::NoisyYinyang} and take the median, 5th percentile, and 95th percentile of the 5-fold cross-validation Sum of Squared Errors (SSEs) for each parameter setting of each method over the 100 simulated datasets. We present the smallest median SSE for each method in Table \ref{table:NoiseYinyangd2} along with the corresponding best parameter setting, with the plots illustrating the effect of varying the numbers of knots included in Figure \ref{fig::Yinyangd1000Numknots}.

\begin{figure}
\centering
\begin{tabular}{c l c l} 
 \hline
 Method & Median SSE ($5$\%, $95$\%)  & Number of knots & Parameter \\ [1ex] 
 \hline
 kNN & 228.5 (213.0,  244.3) & -  & neighbor=6 \\ 
 Ridge & 1938.5  (1906.0, 1973.0) &- & $\lambda  = 0.005$\\
 Lasso &  1938.6 (1905.8, 1972.9) &- & $\lambda = 0.0016$\\
 SpecSeries & 243.9 (229.0, 259.0) & - & bandwidth = $0.1$\\
 Fast-KRR & 497.8 (475.3, 520.6) & - & $sigma$ = 1 \\
 S-Kernel & 239.0 (223.2, 254.4) & 226 & bandwidth = 4 $r_{hns}$ \\
 S-kNN & 250.7 (234.4,  264.7) & 226 & neighbor = 9 \\
 S-Lspline & 234.3 (218.5 249.8) & 226 &  - \\[1ex] 
 \hline
\end{tabular}
\captionof{table}{Regression results on Noisy Yinyang $d=2$ data.The smallest medium 5-fold cross-validation SSE from each method is listed with the corresponding parameters used. The $5$ percentile and $95$ percentile of the SSEs from the given parameter settings are reported in brackets.}
\label{table:NoiseYinyangd2}
\vspace{2em}
\centering
\includegraphics[width=\textwidth]{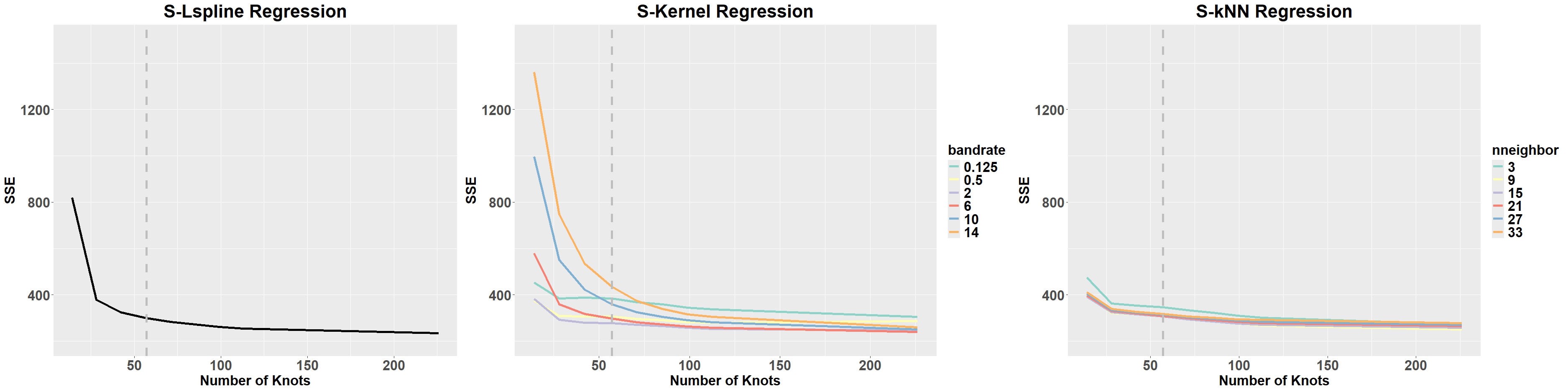}
\captionof{figure}{Noisy Yinyang $d = 1000$ data regression results with varying number of knots. The median SSE across the $100$ simulated datasets with each given parameter setting is plotted.}
\label{fig::NoiseYinyangd2Numknots}

\end{figure}


We observe similar patterns as shown in the Yinyang data that all the skeleton-based methods (S-Kernel, S-kNN, and S-Lspline) have performance comparable to the standard kNN in this setting. Ridge and Lasso regression methods result in relatively high SSEs, while the SpecSeries method and the Fast-KRR method have improved performance compared to the classical Ridge and Lasso penalization regression methods. 
Notably, the Spectral Series regression gives a top-level performance in this setting, and this can be due to the noisy observations adding a uniform density to the data space and making the structures in the data connected.


\subsubsection{SwissRoll Data d = 3}
\label{sec::SwissRolld3}
~~~~We follow the same data generation mechanism as in Section \ref{sec::SwissRoll} but with $d = 3$ so that no random normal variables are added as additional features.
We present the smallest median SSE for each method in Table \ref{table:Swissd3} along with the corresponding best parameter setting, and the plots illustrating the effect of varying the numbers of knots are included in Figure \ref{fig::Yinyangd1000Numknots}.

\begin{figure}

\begin{tabular}{c l c l} 
 \hline
 Method & Median SSE ($5$\%, $95$\%)  & nknots & Parameter \\ [1ex] 
 \hline
 kNN & 309.1 (281.5, 342.1) & - & neighbor=6 \\ 
 Ridge &  1123.8  (1054.6,  1202.4) & -& $\lambda  = 0.00126$\\
 Lasso &  1123.3 (1053.9, 1201.3) & -& $\lambda = 0.0025$\\
 SpecSeries & 331.8 (307.3, 351.8) &- & bandwidth = $0.1$\\
 Fast-KRR & 563.8 (533.0, 598.6) &- & $\sigma = 1$\\
 S-Kernel & 348.1 (345.5, 365.3) & 320 & bandwidth = 8 $r_{hns}$ \\
 S-kNN & 363.5 (361.6, 406.7) & 320 & neighbor = 9 \\
 S-Lspline & 368.9 (329.4, 409.1) & 160 & - \\[1ex] 
 \hline
\end{tabular}
\captionof{table}{Regression results on SwissRoll $d=3$ data. The smallest medium 5-fold cross-validation SSE from each method is listed with the corresponding parameters used. The $5$ percentile and $95$ percentile of the SSEs from the given parameter settings are reported in brackets.}
\label{table:Swissd3}
\vspace{2em}
\centering
\includegraphics[width=\textwidth]{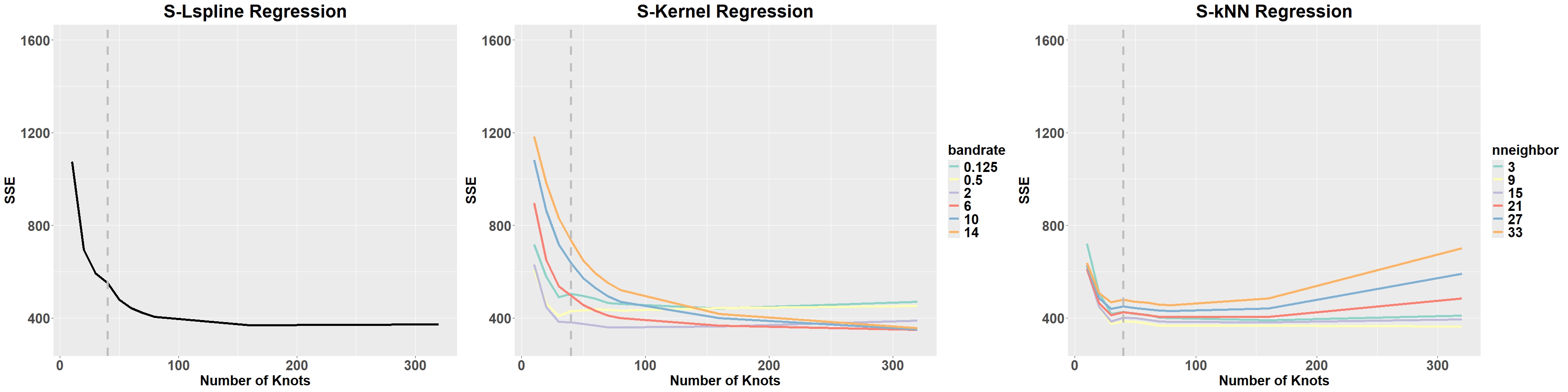}
\captionof{figure}{SwissRoll $d = 3$ data regression results with varying number of knots. The median SSE across the $100$ simulated datasets with each given parameter setting is plotted.}
\label{fig::SwissRolld3}

\end{figure}

The kNN method has the best performance in this setting, and the skeleton-based methods have comparable performance.
Note that the Spectral Series approach has performance slightly better than the skeleton-based methods in this case, which does corroborate its effectiveness in dealing with connected and smooth manifolds.

\end{appendix}

\begin{acks}[Acknowledgments]
The authors would like to thank the anonymous referees, an Associate
Editor and the Editor for their constructive comments that improved the
quality of this paper.
\end{acks}

\begin{funding}
Jerry Wei is supported by NSF Grant DMS - 2112907.
Yen-Chi Chen is supported by NSF DMS-195278, 2112907, 2141808, and NIH U24-AG072122.
\end{funding}

\begin{supplement}
\stitle{Code for Empirical Results}
\sdescription{We include the essential R code files and data files in the code supplement.
We also include a convenient R package implementing the skeleton regression methods.
The R package can also be found online at \url{https://github.com/JerryBubble/skeletonMethods} with instructions and examples, and the Python package implementation can be found at \url{https://pypi.org/project/skeleton-methods/}.}
\end{supplement}

\newpage
\bibliographystyle{imsart-nameyear}
\bibliography{skelreg.bib}

\end{document}